\documentclass[11pt]{article}
\usepackage[a4paper,margin=27mm,headheight=14pt]{geometry}
\usepackage[T1]{fontenc}
\usepackage{lmodern}
\usepackage{amsmath,amssymb,amsthm,bm}
\usepackage{microtype,graphicx,booktabs,tabularx,longtable}
\usepackage[numbers,sort&compress]{natbib}
\usepackage[hidelinks]{hyperref}
\usepackage{fancyhdr}
\hypersetup{pdftitle={Approximation Property of Dropout Neural Networks: Sobolev Rates and Confidence Bounds},pdfauthor={Jia-He Yao}}
\newtheorem{theorem}{Theorem}[section]
\newtheorem{lemma}[theorem]{Lemma}
\newtheorem{corollary}[theorem]{Corollary}

\theoremstyle{definition}
\newtheorem{definition}[theorem]{Definition}
\theoremstyle{remark}

\newcommand{\F}{\mathcal F}

\title{Approximation Property of Dropout Neural Networks:\\
Sobolev Rates and Confidence Bounds}
\author{Jia-He Yao\\[0.3em]
\small\href{mailto:yjher0101@gmail.com}{\texttt{yjher0101@gmail.com}}}
\date{20 September 2026}
\begin{document}
\maketitle
\begin{abstract}
The universal approximation property of dropout neural networks does not
by itself describe the network size required for an accurate random
realization. In this work, we study approximation of the unit ball of
$W^{n,\infty}([0,1]^d)$ by ReLU networks whose edges are retained
independently with probability $p$. The approximation error is measured
uniformly over the input domain, and the guarantee holds with probability
at least $1-\delta$ for a single sampled network. We construct networks
of constant depth and size
$\widetilde O_{n,d}(p^{-9}\varepsilon^{-\max\{d/n,2\}}
\log(1/\delta))$. The construction combines bounded local subnetworks,
localization on a successful approximation event, and a multiscale Taylor
decomposition. Conversely, Sobolev capacity imposes a lower bound on the
number of surviving edges, while approximation of a fixed affine function
requires an output-layer cost of order
$((1-p)/p)\varepsilon^{-2}\log(1/\delta)$ at sufficiently high confidence.
For fixed $p\in(0,1)$ and $\delta<\min\{1/2,1-p\}$, the upper and lower
bounds match in the accuracy exponent under a fixed or logarithmic depth
budget. When $d\leq2n$, they also match in confidence up to logarithms of
accuracy. We extend the lower bounds to $W^{n,r}$ targets with $L^s$
error, and distinguish this extension from the upper bound for
$W^{n,\infty}$. The optimal retention dependence and logarithmic factors
remain open.
\end{abstract}
\noindent\textbf{Keywords:} dropout neural networks; random approximation;
Sobolev spaces; approximation rates; lower bounds.
\clearpage
\begingroup
\small
\tableofcontents
\endgroup
\clearpage
\section{Introduction}
\label{sec:introduction}

Neural networks have achieved remarkable success in various applications,
while the theoretical understanding of their approximation properties
continues to develop. The universal approximation theorem explains why
a sufficiently large network can approximate a prescribed function.
Approximation rates address a further question: how many weights and
computation units are needed to achieve a given accuracy? For ReLU
networks, this question has been studied in considerable detail for
Sobolev and related function spaces \citep{yarotsky2017,siegel2023optimal}.

Dropout introduces randomness into the computation of a neural network.
Even after the architecture and all weights have been fixed, two sampled
networks may produce different outputs. As a result, deterministic
approximation of the target function is only part of the problem. The
network must also be sufficiently redundant to control the variation
caused by random deletion. Understanding the interaction between these
two requirements is the main purpose of this work.

We consider independent edge dropout. Each edge is retained with
probability $p$, and the retained weights are used throughout the input
domain. Following the random-approximation formulation of
\citet{manita2022universal}, we seek a network $F_R$ such that
\[
 \mathbb P\left\{\sup_{x\in[0,1]^d}|F_R(x)-f(x)|
                \leq\varepsilon\right\}\geq1-\delta.
\]
Here the weights are chosen before the masks $R$ are sampled. Thus the
same random realization must approximate $f$ at every input. This
requirement is stronger than pointwise accuracy in expectation, and it
differs from replacing the masks by their expectations at prediction time.

For a Sobolev target with $n$ bounded weak derivatives in dimension $d$,
local Taylor approximation suggests a spatial cost of order
$\varepsilon^{-d/n}$. On the other hand, averaging independent random
contributions typically requires order $\varepsilon^{-2}$ copies.
It is therefore natural to ask whether dropout forces these costs to
multiply, or whether a construction can accommodate both at a cost
determined by the larger one. Our results show that the latter is possible,
up to logarithmic factors, when $p$ and a sufficiently high confidence
level are fixed.

The main upper bound has accuracy exponent $\max\{d/n,2\}$ and uses
depth bounded independently of $\varepsilon$. Its proof follows the local
Taylor approach, but the random implementation requires two changes.
First, local support is recovered only on the event where approximation
succeeds. Exact support preservation under every mask would be
incompatible with a nonzero ReLU output in the present model. Second,
we approximate differences between successive Taylor resolutions. The
fine-scale corrections are small, so they can be implemented with lower
relative precision. Summing within each scale before combining the scales
preserves this saving in the final network.

The lower bounds explain both parts of the exponent. A family of
localized Sobolev functions forces sufficiently many edges to survive
in each successful realization. In addition, even the affine function
$f_0(x)=(1+x_1)/2$ requires order
$\varepsilon^{-2}\log(1/\delta)$ incoming output edges at fixed $p$.
The latter statement allows arbitrary hidden depth and arbitrary real
weights. Its proof must use the probability of approximation directly:
a large variance alone would not imply a lower bound on the failure
probability.

Together these arguments give a match of the accuracy exponent for the
$W^{n,\infty}$ unit ball under a fixed or logarithmic depth budget.
For $d\leq2n$, the stochastic term also gives the correct dependence on
$\log(1/\delta)$ up to accuracy logarithms. For $d>2n$, the corresponding
joint lower bound is not established. We state these distinctions
explicitly, as well as the difference between our lower-bound extension
to finite Sobolev integrability and an upper bound for that larger class.

We first discuss related work and the mechanisms behind the rate.
Section~\ref{sec:model-main} gives the precise model and main results.
The lower and upper bounds are proved in Sections~\ref{sec:lower-main}
and~\ref{sec:multiscale}, respectively, using the elementary random
modules of Section~\ref{sec:modules-main}. The later sections discuss
extensions and remaining questions. A separate account in
Section~\ref{sec:development} records the changes from the supplied
unfinished drafts and the role of AI assistance in the present manuscript.

\section{Related Work}
\label{sec:related}

Dropout was developed as a regularization method based on randomly
removing activations \citep{srivastava2014dropout}. Randomly removing
weights is commonly called DropConnect \citep{wan2013dropconnect}.
We use the term independent edge dropout to specify this latter type
of masking. In particular, masks on edges incident to the same neuron
are independent, rather than being shared through one node mask.

The approximation of smooth functions by deterministic ReLU networks
provides the starting point of our analysis. \citet{yarotsky2017} uses
local Taylor polynomials and ReLU implementations of arithmetic
operations to obtain quantitative approximation bounds. We retain this
local representation, but independent deletion of every connecting edge
prevents a direct use of the deterministic implementation. In particular,
a connection that merely copies an intermediate value must also be
implemented with its own random edges.

For lower bounds, \citet{bartlett2019nearly} bounds the VC dimension of
piecewise linear networks in terms of the number of parameters, depth,
and computation units. These estimates apply to each deterministic
realization of a dropout network. We use them after removing constant
parts of the realized graph, so that the relevant complexity is the
number of retained edges. The extension to finite $L^s$ error uses the
translation-invariant approximation theorem of
\citet[Theorem 22]{siegel2023optimal}. That result is a deterministic
Sobolev approximation tool and is not a dropout-specific lower bound.

\citet{manita2022universal} establishes universal approximation for
networks with random filters, including a mode in which the network
remains random when it is evaluated. Their Proposition~9 gives an
algebraic identity expressing a deterministic network through expectations
of masked subnetworks. We use this identity for a fixed-size local window
and a clipped coordinate. Because all of their masked subnetworks are
bounded and have finitely many common breakpoints, concentration can be
made uniform without paying for the slope of the window. Applying the
same subset expansion to the entire approximation network would instead
have cost exponential in its number of edges. The quantitative upper
bound therefore depends on where this existing identity is used.

Fault tolerance gives another perspective on random deletion.
\citet{elmhamdi2017neurons} studies how failures propagate through a
given network, including extensions to failed synapses.
\citet{elmhamdi2019probabilistic} studies small random failures under
a continuous-limit description. These questions are related to ours,
but we optimize over networks for an entire Sobolev ball and require a
single realization to succeed uniformly over its inputs. The
information-contraction bounds for noisy binary networks in
\citet{zhou2021information} concern a different activation and noise
model, so they do not directly give the output-layer bound used here.

It is also useful to distinguish approximation from learning and
optimization. \citet{senencerda2025almost} studies convergence of dropout
training algorithms. The dropout analysis in
\citet{wu2026deep} concerns statistical estimation in a spatially varying
coefficient model. Neither type of guarantee by itself specifies the
minimum size for the random uniform approximation problem considered
here. Covering estimates for bounded, sparse, or quantized deterministic
networks, such as those of \citet{ou2026covering}, likewise require care
when compared with our class, whose weights are unrestricted real numbers.

The present results combine these established tools with explicit
constructions and lower bounds for independent edge dropout. We do not
claim that the algebraic debiasing identity, deterministic capacity
estimates, or local Taylor representation are new. The results to be
assessed are the stated quantitative bounds under the common model of
Section~\ref{sec:model-main}. Their publication priority is not settled
by the literature comparisons above.

\section{Theoretical Contributions and Common Mechanisms}
\label{sec:mechanisms}

The rate can be understood through the relation between spatial
resolution and random fluctuation. We describe this relation before
giving the formal statements, since it also explains why an accurate
implementation of each local term separately can be unnecessarily costly.

\subsection{Spatial complexity and random fluctuation}

A resolution $h$ requires order $h^{-d}$ local pieces and gives a Taylor
remainder of order $h^n$. Thus resolving the target to error
$\varepsilon$ suggests $h\asymp\varepsilon^{1/n}$ and a spatial count
$\varepsilon^{-d/n}$. This is reflected in the deterministic capacity
part of our lower bound. Dropout adds another restriction. Conditional on
the hidden masks, an affine output is a sum of independently retained
terms. Requiring this sum to reproduce a nonzero change of the target
with small error imposes a redundancy cost, even if the hidden network
already contains a perfect representation of the target.

These are two necessary costs for the worst-case approximation problem.
Their maximum is a lower bound; their product does not follow. The upper
construction shows how to meet both requirements at the larger polynomial
order. When $d<2n$, greater smoothness decreases the spatial difficulty
but does not remove the output fluctuation. When $d>2n$, the number of
spatial features becomes dominant. The boundary $d=2n$ is where the two
contributions have the same polynomial size.

\subsection{Localization on the event that matters}

A deterministic Taylor partition has bounded overlap. To use that overlap
in a random construction, a branch must not create a small error at every
point outside its intended region: many such errors could accumulate.
Requiring exact zero output outside the region for every mask is too
strong. In Section~\ref{sec:modules-main}, we prove that a nonzero
ReLU-output network cannot satisfy the required all-mask localization.

There is, however, a simple way to recover the needed property on the
successful event. If a nonnegative target $q$ is approximated globally by
$G=\sigma(Z)$ with error at most $\eta$, then changing its last bias gives
\[
 H=\sigma(Z-\eta),\qquad
 0\leq H\leq q,\qquad |H-q|\leq2\eta.
\]
Consequently $q(x)=0$ implies $H(x)=0$ on that event. Only the bounded
number of overlapping branches can then contribute error. The outer
module is controlled uniformly on a set containing all successful inner
outputs, which also makes composition valid at random inputs. This
argument applies to nonnegative localized decompositions beyond Taylor
polynomials whenever a suitable global local-module approximation is
available.

\subsection{Why a narrow window need not be expensive}

A window supported on a region of width $h$ has slopes of order $h^{-1}$.
Approximating these large affine coefficients by independent averaging
would introduce an $h^{-2}$ cost. Instead, consider the entire small
window circuit before making it reliable. Although its first-layer
weights can be large, every masked subnetwork has output between zero
and one. Its number of affine pieces is bounded independently of $h$.

The debiasing identity of \citet{manita2022universal}, applied only to
this small circuit, provides an unbiased random representation. A union
bound over the common breakpoints turns scalar concentration into a
uniform estimate. The number of breakpoints, rather than their locations
or the slope, determines the cost. This observation separates spatial
resolution from the number of copies required for reliability. It uses
the absence of a weight-magnitude constraint; it does not provide a
bounded-weight or finite-bit implementation.

\subsection{Small corrections require fewer copies}

Let $F_j$ be the local Taylor approximation at resolution $2^{-j}$.
Instead of implementing only $F_J$, write
\[
 F_J=F_0+\sum_{j=1}^J(F_j-F_{j-1}).
\]
After localization and normalization, the $j$th difference has
$O(2^{jd})$ terms of amplitude $O(2^{-jn})$. If a normalized random
module costs order $\eta^{-2}$ to achieve error $\eta$, then an actual
term of amplitude $a_j$ needs only relative error approximately
$\varepsilon/((J+1)a_j)$. Its cost is proportional to
$a_j^2\varepsilon^{-2}(J+1)^2$. The resulting sum is
\[
 \varepsilon^{-2}(J+1)^2\sum_{j=0}^J2^{j(d-2n)},
\]
apart from retention and logarithmic factors. This is the source of
$\max\{d/n,2\}$.

There is a useful general lesson in this calculation. For a localized
decomposition with $O(2^{jd})$ terms of amplitude $O(2^{-jn})$, a local
accuracy cost $\eta^{-\beta}$ would give the formal sum
$\varepsilon^{-\beta}\sum_j2^{j(d-\beta n)}$, with logarithmic losses
from sharing the error budget. The corresponding polynomial exponent
would be $\max\{d/n,\beta\}$. This is an accounting principle, conditional
on uniform local modules, bounded overlap, and a summation construction
that preserves the saving. We verify these conditions for $\beta=2$ in
the present model; we do not assert a theorem for other noise models.

The last condition is substantive. We first sum the positive and negative
terms within each scale. The final module receives only $2(J+1)$ values.
If every local branch were instead connected to one final summation module
at tolerance $\varepsilon$, a factor equal to the total number of branches
would return and the multiscale saving would be lost.

\begin{figure}[t]
 \centering
 \includegraphics[width=0.94\linewidth]{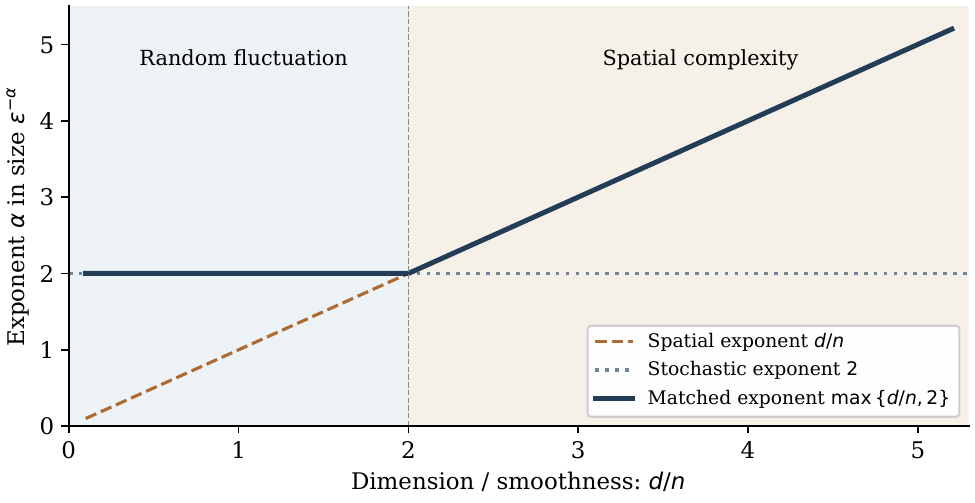}
 \caption{Accuracy exponents at fixed retention and sufficiently high
 fixed confidence. The spatial and stochastic obstructions lead to the
 exponent $\max\{d/n,2\}$ under the stated depth budget. The diagram
 displays polynomial exponents only; it does not suppress the conditions
 of the matching theorem or imply equality of logarithmic factors.}
 \label{fig:exponents}
\end{figure}

\section{Preliminaries and Main Results}
\label{sec:model-main}

Let $\sigma(t)=\max\{t,0\}$ be the ReLU activation, and let
$\operatorname{Bernoulli}(p)$ denote the distribution on $\{0,1\}$
with mean $p$. Throughout this work, $p$ is the retention probability.

\begin{definition}[Independent edge dropout]
A dropout network is a finite feed-forward graph whose hidden units have
the form $z_v=\sigma(b_v+\sum_{u\to v}w_{uv}R_{uv}z_u)$.
Every present edge has its own independent
$R_{uv}\sim\operatorname{Bernoulli}(p)$, including the edges into the
scalar output. Biases are deterministic and unmasked. The output is
affine unless a ReLU output is explicitly specified.
\end{definition}

The architecture and weights are fixed before sampling $R$, and one
sample defines a function $F_R$ on all inputs. There is no protected
input-dependent path to the output. The graph may have skip connections;
the upper constructions use only adjacent layers. For $E$ edges and $U$
non-input units, including the output, define the size as $S=E+U+1$.
Thus biases and fixed construction weights are counted. Depth is the
largest number of successive computational layers, including the output.
Weights have no magnitude or bit-precision restriction.

For a multi-index $\alpha\in\mathbb N_0^d$, write $|\alpha|=\sum_i\alpha_i$
and $\alpha!=\prod_i\alpha_i!$. The target class is
\[
 \F_{n,d}=\left\{f\in W^{n,\infty}((0,1)^d):
       \max_{|\alpha|\leq n}\|D^\alpha f\|_{L^\infty}\leq1\right\}.
\]
Derivatives are weak derivatives. We use the $C^{n-1,1}$ representative
on $[0,1]^d$, so values and derivatives below order $n$ at boundary
centers have their continuous meanings. This choice also identifies the
essential-supremum error with the supremum error for our approximants.
An affine output is necessary for the signed class, since a ReLU output
cannot approximate the constant function $-1$ to error less than one.

\begin{definition}[Worst-case size budget]
For $L\geq1$, let $S_*(\varepsilon,\delta;p,L)$ be the smallest size
budget for which
\begin{equation}\label{eq:lb-success}
 \forall f\in\F_{n,d}\quad\exists(A,w):\quad
 \operatorname{size}(A)\leq S_*,\quad \operatorname{depth}(A)\leq L,
 \quad \mathbb P\{\|F_{A,w,R}-f\|_\infty\leq\varepsilon\}\geq1-\delta.
\end{equation}
The architecture is allowed to depend on $f$. If no finite budget exists,
the value is $+\infty$.
\end{definition}

We write $a=d/n$, $\ell_\varepsilon=\log(2/\varepsilon)$, and
$\gamma_p=\min\{1/2,1-p\}$. Constants with subscripts $n,d$ may depend
on these fixed integers. The notation $\widetilde O$ or
$\widetilde\Theta$ hides powers of $\ell_\varepsilon$ only, unless the
parameters being fixed are stated explicitly.

\begin{theorem}[Upper and lower size bounds]\label{thm:main}
For each $n,d\geq1$ there is a constant depth $L_{n,d}$ with the following
properties. For $0<p\leq1$, $0<\varepsilon,\delta\leq1/2$, and
$L\geq L_{n,d}$,
\begin{equation}\label{eq:main-upper}
 S_*(\varepsilon,\delta;p,L)
 \leq C_{n,d}p^{-9}\varepsilon^{-2}(J+1)^2
 \left(\sum_{j=0}^J2^{j(d-2n)}\right)
 \log\frac{C_{n,d}}{\varepsilon\delta},
\end{equation}
where $J=\lceil n^{-1}\log_2(2C^{\rm Tay}_{n,d}/\varepsilon)\rceil$
and $C^{\rm Tay}_{n,d}\geq1$ depends only on $n,d$.

For $0<p<1$, $0<\delta<\gamma_p$, and sufficiently small
$\varepsilon$ depending on $n,d$ and satisfying
$\varepsilon\leq\min\{1/1024,(1-p)/(512p)\}$, every $L\geq1$ obeys
\begin{equation}\label{eq:main-lower}
 S_*(\varepsilon,\delta;p,L)\geq c_{n,d}
 \max\left\{\frac1p\max\left\{
 \frac{\varepsilon^{-d/n}}{L\ell_\varepsilon},
 \varepsilon^{-d/(2n)}\right\},\quad
 \frac{1-p}{p}\varepsilon^{-2}\log\frac1\delta\right\}.
\end{equation}
\end{theorem}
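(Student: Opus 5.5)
The theorem has two halves, and I would prove them separately. The upper bound \eqref{eq:main-upper} comes from an explicit multiscale construction. The lower bound \eqref{eq:main-lower} is the maximum of three separate obstructions, each shown to hold for every depth $L$.

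\textbf{Upper bound.} I would build the network from three random modules, following Section~\ref{sec:mechanisms}.
First, a bounded window module. Take a fixed-size ReLU circuit computing a one-dimensional hat of width $h$ with outputs in $[0,1]$. Apply the subset-expansion (debiasing) identity of Manita et al.\ to this circuit alone, then replicate and average $m$ independent copies. Because every masked subnetwork is bounded and shares $O(1)$ breakpoints, Hoeffding plus a union bound over the breakpoints give uniform error $\eta$ with probability $1-\delta'$. This needs $m=O(p^{-c}\eta^{-2}\log(1/\delta'))$, independent of $h$.
Second, products and monomials. Build these by composing such modules, controlling the outer module uniformly on a set that contains all successful inner outputs.
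Third, localization on the success event. Subtract $\eta$ from the last bias, so that $0\le H\le q$ and $H$ vanishes where $q$ vanishes.

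With these modules in hand, I would write $F_J=F_0+\sum_{j\le J}(F_j-F_{j-1})$, with $F_j$ the partition-of-unity Taylor approximant at scale $2^{-j}$ and $J$ chosen so that $\|f-F_J\|_\infty\le\varepsilon/2$. Each difference has $O(2^{jd})$ nonnegative-split terms of amplitude $O(2^{-jn})$. Implement each term at relative precision $\varepsilon/((J+1)a_j)$, sum the positive and negative parts within each scale, and feed the $2(J+1)$ scale sums into one final averaging module of tolerance $\varepsilon/4$.

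A union bound over all modules, each with failure probability $\delta/\#\text{modules}$, converts every $\log(1/\delta')$ into $\log(C/(\varepsilon\delta))$. Bounded overlap means only $O_{n,d}(1)$ branches contribute error at each point. Summing the costs gives the displayed bound. The $p^{-9}$ factor arises from about nine nested layers of masked copies, each costing $p^{-1}$, and the depth is a constant $L_{n,d}$.

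\textbf{Lower bound, spatial terms.} I would fix any realization on the success event and delete edges with $R=0$, together with units that have become constant. The result is a deterministic ReLU network with at most $E_{\rm ret}$ parameters that approximates $f$ to accuracy $\varepsilon$.

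For the term $\varepsilon^{-d/n}/(L\ell_\varepsilon)$, use the standard bump family: $N\asymp\varepsilon^{-d/n}$ disjoint bumps of height $3\varepsilon$ with sign patterns. Approximation to $\varepsilon$ shatters these $N$ points. The Bartlett et al.\ VC bound $O(WL\log W)$ then forces $E_{\rm ret}\gtrsim N/(L\ell_\varepsilon)$.
For the depth-free term $\varepsilon^{-d/(2n)}$, use the VC bound $O(W^2)$ instead.

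Next I would convert retained edges into total size. Since $E_{\rm ret}\sim\mathrm{Bin}(E,p)$ and $\delta<1/2$, the median of the binomial gives $pE\gtrsim E_{\rm ret}$ on the success event. This produces the $1/p$ factors.

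\textbf{Lower bound, stochastic term.} This is the main obstacle. Take $f_0=(1+x_1)/2$ and condition on the hidden masks. The output is then $b+\sum_{k=1}^K w_kR_kz_k(x)$, with the $R_k$ independent Bernoulli$(p)$ variables.
Success at $x=0$ and $x=e_1$ requires the increment $D=\sum_k w_kR_k\Delta_k$, where $\Delta_k=z_k(e_1)-z_k(0)$, to lie within $2\varepsilon$ of $1/2$. Note that large variance alone does not bound the failure probability.

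My plan is to partition the terms by size. If some $|w_k\Delta_k|\ge 8\varepsilon$, then flipping $R_k$ moves $D$ out of the window, with probability at least $\min(p,1-p)\ge\gamma_p$-type mass. This contradicts $\delta<\gamma_p$, so I would use a switching argument: pair each mask with its flip. Otherwise all terms are at most $8\varepsilon$ in magnitude while their mean sum is about $1/2$, so the sum has variance of order at least $(1-p)p^{-1}\varepsilon\cdot\frac12/K$-type. I would apply a reverse-Chernoff (anti-concentration) lower bound for sums of bounded independent variables, for instance Feller's or a Kolmogorov-type lower tail bound. This gives a failure probability at least $\exp(-CKp\varepsilon^2/(1-p))$ once $K\gtrsim(1-p)/(p\varepsilon)$, which is where the condition $\varepsilon\le(1-p)/(512p)$ enters. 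Setting this probability at most $\delta$ yields $K\gtrsim\frac{1-p}{p}\varepsilon^{-2}\log(1/\delta)$.

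I expect the hardest part to be making this anti-concentration bound rigorous for arbitrary real weights $w_k$ and arbitrary hidden-layer randomness, uniformly over hidden masks.
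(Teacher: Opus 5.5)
Your upper bound follows the paper's construction: debiased bounded window circuits, bias-shift localization, and Taylor differences summed within each scale before the final module. Your spatial lower bound is the paper's retained-edge argument. Three small corrections there:
\begin{itemize}
\item The factor $p^{-9}$ is the $p^{-(2E+1)}$ in the debiasing variance of the $E=4$-edge window circuit, not a product over nine layers.
\item The normalized clipped coordinates $\min\{1,(\pm(x_i-a_i)/h)_+\}$ must be debiased as well, since the amplitude-$2^{-jn}$ normalization relies on them.
\item In the VC step, fix one sign pattern that no deterministic network with fewer than $K$ edges realizes, taking a union over architectures because the architecture may depend on the target. Then \emph{every} successful mask for that single target retains at least $K$ edges.
\end{itemize}

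The genuine gap is in the output-layer term, at two points. First, the large-coefficient case gives no contradiction when $p<1/2$. If $|w_k\Delta_k|\ge8\varepsilon$, pairing each mask with its flip only shows that the success probability is at most $\max\{p,1-p\}$. That is, the failure probability is at least $\min\{p,1-p\}$. For $p<1/2$ this equals $p$, whereas $\gamma_p=1/2$. So for $\delta\in[p,1/2)$, a network whose large edge is usually dropped can genuinely succeed, and your case split leaves this situation untreated. The missing step is Lemma~\ref{lem:trim-bernoulli}. Writing $s=(1-p)u+pv$ with $u+v\le1$ gives $u\ge(s-p)/(1-2p)>s$ whenever $s>1/2$. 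Hence conditioning the masks of all large coefficients to zero keeps the center and does not lower the success probability. For $p\ge1/2$ such coefficients are indeed impossible, which is the only case your contradiction covers.

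Second, the small-coefficient case is not covered by the tools you name. You also need the mean to lie within $O(\varepsilon)$ of $1/2$ and $V=O(\varepsilon^2)$. Both require an anti-concentration estimate, which the paper obtains from the Fej\'er kernel in Lemma~\ref{lem:sinc-concentration}.

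More seriously, Kolmogorov-type converse exponential bounds require $xb\lesssim V$, where $x\asymp\varepsilon$ is the deviation and $b$ the largest coefficient. With arbitrary real weights this fails as soon as $V\ll\varepsilon^2$ and some coefficient is comparable to $\sqrt V$, and the hypotheses do not exclude this.

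The paper avoids tail estimates altogether by changing the retention to $r=p(1-512\varepsilon)$.
\begin{itemize}
\item Under $P_r$ the mean lies at least $110\varepsilon$ below $1/2$, while the variance stays at most $512\varepsilon^2$. This is where $\varepsilon\le(1-p)/(512p)$ enters, through $r(1-r)\le2p(1-p)$.
\item Cantelli's inequality then bounds the $P_r$-probability of the success window by $1/16$, and data processing yields $D(P_r\Vert P_p)\ge\frac12\log(1/\delta)$.
\item On the other hand, tensorization gives $D(P_r\Vert P_p)\le m(r-p)^2/(p(1-p))\le2^{18}mp\varepsilon^2/(1-p)$.
\end{itemize}
Because the divergence counts masks rather than weighted variance, this gives the bound linear in $m$ for arbitrary weights. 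That is exactly what your reverse-Chernoff step was meant to produce.
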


The upper bound is proved in Theorem~\ref{thm:multiscale-upper}.
The lower bound follows from Theorems~\ref{thm:sobolev-survival}
and~\ref{thm:confidence-extended}; the combination is justified at the
end of Section~\ref{sec:lower-main}.

\begin{corollary}[Matching accuracy exponent]\label{thm:matching-exponent}
Fix $n,d$, $p\in(0,1)$, and $0<\delta<\gamma_p$. If
$L_{n,d}\leq L(\varepsilon)\leq C_0\ell_\varepsilon$ for all
sufficiently small $\varepsilon$, then
\begin{equation}\label{eq:matching-exponent}
 S_*(\varepsilon,\delta;p,L(\varepsilon))
 =\widetilde\Theta_{n,d,p,\delta,C_0}
       (\varepsilon^{-\max\{d/n,2\}}).
\end{equation}
In particular, this holds for any fixed depth cap at least $L_{n,d}$.
\end{corollary}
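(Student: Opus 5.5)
The plan is to derive the corollary directly from Theorem~\ref{thm:main}, treating $n,d,p,\delta,C_0$ as constants. We check that both sides of \eqref{eq:main-upper} and \eqref{eq:main-lower} are $\varepsilon^{-\max\{d/n,2\}}$ up to powers of $\ell_\varepsilon$. Since the depth satisfies $L(\varepsilon)\geq L_{n,d}$, the upper bound \eqref{eq:main-upper} applies. For the lower bound, fixing $p\in(0,1)$ and $\delta<\gamma_p$ means that, for $\varepsilon$ small enough (below $\min\{1/1024,(1-p)/(512p)\}$ and the $n,d$-dependent threshold), \eqref{eq:main-lower} holds for every $L\geq1$, in particular for $L=L(\varepsilon)$.

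\emph{Upper bound.} Set $J=\lceil n^{-1}\log_2(2C^{\rm Tay}_{n,d}/\varepsilon)\rceil$, so $J+1\leq C_{n,d}\ell_\varepsilon$ and $2^{J}\leq C_{n,d}\varepsilon^{-1/n}$. The geometric sum splits into three cases.
\begin{itemize}
\item If $d<2n$, then $\sum_{j=0}^J 2^{j(d-2n)}\leq(1-2^{d-2n})^{-1}$, a constant.
\item If $d=2n$, the sum equals $J+1=O(\ell_\varepsilon)$.
\item If $d>2n$, the sum is at most $C\,2^{J(d-2n)}\leq C\varepsilon^{-(d-2n)/n}$. Multiplying by $\varepsilon^{-2}$ gives $\varepsilon^{-d/n}$.
\end{itemize}
In every case $\varepsilon^{-2}\sum_j 2^{j(d-2n)}\leq C\ell_\varepsilon\,\varepsilon^{-\max\{d/n,2\}}$. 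The factor $\log(C_{n,d}/(\varepsilon\delta))\leq \log(C_{n,d}/\delta)+\ell_\varepsilon$ is $O_{\delta}(\ell_\varepsilon)$, and $p^{-9}$ is a constant. This yields $S_*\leq C\,\ell_\varepsilon^{4}\varepsilon^{-\max\{d/n,2\}}$.

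\emph{Lower bound.} In \eqref{eq:main-lower} we use $L(\varepsilon)\leq C_0\ell_\varepsilon$, so the spatial term is at least $p^{-1}\varepsilon^{-d/n}/(C_0\ell_\varepsilon^2)$. The stochastic term is $\frac{1-p}{p}\varepsilon^{-2}\log(1/\delta)$. Both prefactors are positive constants once $p<1$ and $\delta<\gamma_p\leq 1/2$ are fixed, since then $\log(1/\delta)\geq\log 2$. Taking the larger of the two terms gives $S_*\geq c\,\ell_\varepsilon^{-2}\varepsilon^{-\max\{d/n,2\}}$. Combined with the upper bound, this is \eqref{eq:matching-exponent}, because $\widetilde\Theta$ hides only powers of $\ell_\varepsilon$.

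The fixed-depth case is the special case $L(\varepsilon)\equiv L\geq L_{n,d}$, since a constant is at most $C_0\ell_\varepsilon$ for $C_0=L/\log 4$ and all $\varepsilon\leq1/2$. The only delicate point is bookkeeping rather than a genuine obstacle. We must make sure that the lower bound's smallness condition on $\varepsilon$ is compatible with "sufficiently small $\varepsilon$" in the corollary. We must also make sure that the $d>2n$ geometric-sum estimate uses the specific choice of $J$ so as not to lose a polynomial factor; the constant $2C^{\rm Tay}_{n,d}$ inside $J$ only changes constants.
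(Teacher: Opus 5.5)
Your proposal is correct and follows essentially the same route as the paper: you combine the two halves of Theorem~\ref{thm:main}, use $L(\varepsilon)\le C_0\ell_\varepsilon$ to turn the capacity term into order $\varepsilon^{-d/n}/\ell_\varepsilon^2$, and split the geometric sum into the cases $d<2n$, $d=2n$, $d>2n$. Your accounting of the powers of $\ell_\varepsilon$ and of the smallness conditions on $\varepsilon$ is more explicit than the paper's. The paper also remarks that a fixed depth cap improves the lower-bound denominator to $\ell_\varepsilon$, but the statement does not need this.
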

\begin{proof}
The lower bound is at least a constant times
$\max\{\varepsilon^{-d/n}/\ell_\varepsilon^2,\varepsilon^{-2}\}$.
In the upper bound, the geometric sum is bounded for $d<2n$, equals
$J+1$ for $d=2n$, and is $O_{n,d}(\varepsilon^{-(d-2n)/n})$ for $d>2n$.
All remaining factors at fixed $p,\delta$ are powers of
$\ell_\varepsilon$. For a fixed depth cap the capacity denominator
improves to $\ell_\varepsilon$.
\end{proof}

\begin{corollary}[Joint accuracy and confidence for $d\leq2n$]
\label{cor:joint-confidence}
Fix $n,d,p,C_0$ with $d\leq2n$ and $p\in(0,1)$, and use the same depth
range as in Corollary~\ref{thm:matching-exponent}. Uniformly for
$0<\delta<\gamma_p$ and sufficiently small $\varepsilon$,
\begin{equation}\label{eq:sobolev-joint-match}
 S_*(\varepsilon,\delta;p,L(\varepsilon))
 =\widetilde\Theta_{n,d,p,C_0}
       \left(\varepsilon^{-2}\log\frac1\delta\right).
\end{equation}
Only accuracy logarithms are hidden here.
\end{corollary}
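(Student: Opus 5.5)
The plan is to read both directions directly off Theorem~\ref{thm:main}, keeping track of which factors depend on $\delta$. For the upper bound, I would use \eqref{eq:main-upper} with $p$ fixed, so $p^{-9}$ is a constant. When $d<2n$ the geometric sum $\sum_{j\le J}2^{j(d-2n)}$ is bounded by $1/(1-2^{d-2n})$, and when $d=2n$ it equals $J+1$. Since $J+1=O_{n,d}(\ell_\varepsilon)$, the prefactor $(J+1)^2\sum_j2^{j(d-2n)}$ is at most a power of $\ell_\varepsilon$ in both cases. The remaining factor splits as
\[
\log\frac{C_{n,d}}{\varepsilon\delta}=\log\frac{C_{n,d}}{\varepsilon}+\log\frac1\delta .
\]
Because $\delta<\gamma_p\le1/2$, we have $\log(1/\delta)\ge\log2$, so $\log(C_{n,d}/\varepsilon)\le C\ell_\varepsilon\log(1/\delta)$. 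The whole expression is therefore at most $C_{n,d,p}\,\ell_\varepsilon^{k}\,\varepsilon^{-2}\log(1/\delta)$, uniformly in $\delta$. This requires $L(\varepsilon)\ge L_{n,d}$, which holds in the stated depth range.

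For the lower bound, I would use \eqref{eq:main-lower} and keep only the confidence term $\frac{1-p}{p}\varepsilon^{-2}\log(1/\delta)$. At fixed $p\in(0,1)$ this is $c_p\,\varepsilon^{-2}\log(1/\delta)$. The lower bound is valid for every $L\ge1$ and every $\delta<\gamma_p$, with a smallness condition on $\varepsilon$ that depends only on $n,d,p$ and not on $\delta$. So the condition ``sufficiently small $\varepsilon$'' can be chosen uniformly in $\delta$.

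The two bounds differ only by powers of $\ell_\varepsilon$ and constants depending on $n,d,p,C_0$, which gives \eqref{eq:sobolev-joint-match}. The main point to check is that no $\delta$-dependence is hidden in the $\widetilde\Theta$. In the upper bound the only $\delta$-dependence is the additive logarithm, which we absorb as above using $\delta\le1/2$. This is also where $d\le2n$ is needed. For $d>2n$ the geometric sum contributes $\varepsilon^{-(d-2n)/n}$, so the upper bound becomes $\varepsilon^{-d/n}\log(1/\delta)$. That does not match the confidence lower bound, which is the joint gap noted in the introduction.

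The upper bound used only $L(\varepsilon)\ge L_{n,d}$, and the lower bound holds for every $L\ge1$. The constraint $L(\varepsilon)\le C_0\ell_\varepsilon$ is therefore not needed here; it is retained only for consistency with Corollary~\ref{thm:matching-exponent}.
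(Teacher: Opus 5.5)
Your proposal is correct and follows the paper's proof. The lower bound is the output-layer term of \eqref{eq:main-lower}, and the upper bound comes from \eqref{eq:main-upper} by bounding $(J+1)^2\sum_j2^{j(d-2n)}$ by a power of $\ell_\varepsilon$ and absorbing $\log(C_{n,d}/\varepsilon)$ into $C_{n,d}\ell_\varepsilon\log(1/\delta)$ using $\delta<1/2$. Your further remark that the cap $L(\varepsilon)\le C_0\ell_\varepsilon$ is never used is also accurate, since the confidence lower bound holds for every $L\ge1$ and the upper bound needs only $L\ge L_{n,d}$.
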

\begin{proof}
The output term gives the lower bound. The upper bound is at most
\[
 C_{n,d,p}\varepsilon^{-2}\ell_\varepsilon^4\log(1/\delta).
\]
Indeed, for $\delta<1/2$ we have
\[
 \log\frac{C_{n,d}}{\varepsilon\delta}
 \leq C_{n,d}\ell_\varepsilon\log\frac1\delta.
\]
\end{proof}

For fixed $p,\delta$ the explicit upper bounds are
$O(\varepsilon^{-2}\ell_\varepsilon^3)$ when $d<2n$,
$O(\varepsilon^{-2}\ell_\varepsilon^4)$ when $d=2n$, and
$O(\varepsilon^{-d/n}\ell_\varepsilon^3)$ when $d>2n$.
Thus matching the exponent does not identify the optimal logarithms.
Neither corollary asserts optimal dependence on $p$, an unrestricted-depth
rate, or a matching upper bound for general finite $r$ in $W^{n,r}$.

\section{Lower Bounds for Random Approximation}
\label{sec:lower-main}
We first isolate the cost of random output edges. The argument uses only
one affine target and is independent of the hidden depth. We then turn
to the entire Sobolev ball and count the edges that a successful mask
must retain.
\subsection{The output-layer obstruction}
The following two lemmas convert a sufficiently concentrated Bernoulli
sum into bounds on its coefficients and variance. This step is needed
before a change-of-measure argument can be applied.
\begin{lemma}[Removing large coefficients without lowering success]
\label{lem:trim-bernoulli}
Let $T=\sum_{i=1}^m a_iB_i$, where $B_i$ are independent Bernoulli$(p)$,
and let $I=[t-\eta,t+\eta]$, $\eta>0$. If
$\mathbb P(T\in I)=s>\max\{1/2,p\}$, there is a subset $J$ such that
\[
 |a_i|\leq2\eta\quad(i\in J),\qquad
 \mathbb P\left(\sum_{i\in J}a_iB_i\in I\right)\geq s.
\]
The center $t$ is unchanged. The coefficients may have either sign.
\end{lemma}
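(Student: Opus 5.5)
The plan is to delete the large coefficients one at a time. Each deletion will be shown not to decrease the probability of landing in $I$. Since the hypothesis $\mathbb P(\cdot\in I)>\max\{1/2,p\}$ is then preserved, the step can be repeated until no coefficient exceeds $2\eta$ in absolute value. There are only finitely many indices, so the process terminates. The set $J$ is whatever indices remain, and the center $t$ is never touched.

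\emph{Single deletion step.} Let $T=\sum_{i\in K}a_iB_i$ with $\mathbb P(T\in I)=s>\max\{1/2,p\}$, and let $k\in K$ satisfy $|a_k|>2\eta$. Put $T'=\sum_{i\in K\setminus\{k\}}a_iB_i$, which is independent of $B_k$. Conditioning on $B_k$ gives
\[
 s=p\,v+(1-p)\,u,\qquad u=\mathbb P(T'\in I),\quad v=\mathbb P(T'+a_k\in I).
\]
The key observation is that $I$ has length $2\eta<|a_k|$. Hence the events $\{T'\in I\}$ and $\{T'+a_k\in I\}$ are disjoint: no real number $y$ can have both $y$ and $y+a_k$ in $I$. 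Therefore $u+v\leq1$, and so
\[
 s\leq p(1-u)+(1-p)u=p+(1-2p)u .
\]

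\emph{Showing $u\geq s$.} Suppose instead that $u<s$. There are two cases.
\begin{itemize}
\item If $p\leq1/2$, then $1-2p\geq0$, so $s<p+(1-2p)s$. This gives $2ps<p$, that is $s<1/2$, a contradiction.
\item If $p>1/2$, then $(1-2p)u\leq0$, so $s\leq p$, again a contradiction.
\end{itemize}
Hence $\mathbb P(T'\in I)\geq s$. This works for coefficients of either sign, since only $|a_k|$ entered the disjointness argument. Iterating over the remaining large coefficients, the success probability stays at least $s>\max\{1/2,p\}$ at every stage, so the hypothesis needed for the next step always holds.

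The only delicate point is the case analysis in $p$. It explains why the threshold is exactly $\max\{1/2,p\}$. The bound $s\le p+(1-2p)u$ alone does not force $u\geq s$ unless $s$ exceeds both $1/2$ (needed when $p\le 1/2$) and $p$ (needed when $p>1/2$).
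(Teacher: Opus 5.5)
Your proof is correct and follows essentially the paper's argument: the same disjointness bound $s\le p+(1-2p)u$, iterated deletion, and a case split on $p$ versus $1/2$, where the paper derives $u\ge(s-p)/(1-2p)>s$ directly for $p<1/2$ and notes that for $p\ge1/2$ no large coefficient can exist. The only slip is at $p=1/2$, where your displayed inequality $s<p+(1-2p)s$ should be non-strict; it still gives the contradiction $s\le1/2$.
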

\begin{proof}
If $|a_i|>2\eta$, write $T=T_0+a_iB_i$ and set
$u=\mathbb P(T_0\in I)$, $v=\mathbb P(T_0\in I-a_i)$.
The two intervals are disjoint, so $u+v\leq1$ and
$s=(1-p)u+pv$.
If $p\geq1/2$, this implies $s\leq p$, a contradiction. Thus in that
case no large coefficient exists. If $p<1/2$, then
\[
 s\leq p+(1-2p)u,
 \qquad u\geq\frac{s-p}{1-2p}>s,
\]
where the last inequality uses $s>1/2$. Delete this coefficient, equivalently
condition its mask to be zero, and repeat. The remaining masks keep their
independent Bernoulli$(p)$ law. The finite procedure proves the claim.
\end{proof}

\begin{lemma}[A small-interval variance bound]
\label{lem:sinc-concentration}
Let $X=\sum_i a_iB_i$ with independent Bernoulli$(p)$ masks and
$|a_i|\leq2\eta$. If $\mathbb P(|X-t|\leq\eta)>1/2$, then
\begin{equation}\label{eq:sinc-moments}
 V:=\operatorname{Var}(X)\leq64\eta^2,
 \qquad |\mathbb EX-t|\leq9\eta.
\end{equation}
\end{lemma}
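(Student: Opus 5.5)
The plan is to obtain the variance bound from an anti-concentration estimate of Esseen type, with the sinc kernel as the Fourier tool, and then to derive the mean bound from the variance bound by a one-sided Chebyshev (Cantelli) inequality.

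\emph{Step 1 (smoothing inequality).} Write $\varphi(u)=\mathbb E e^{iuX}$. Take a nonnegative kernel whose Fourier transform is compactly supported, for instance the Fej\'er kernel, i.e. the squared sinc $\kappa(y)=\bigl(\sin(y/2)/(y/2)\bigr)^2$, whose transform is the triangle supported on $[-1,1]$. Dominate the indicator of $[t-\eta,t+\eta]$ by a constant multiple of $\kappa(\lambda(y-t))$ with $\lambda\asymp1/\eta$. Taking expectations and applying Fourier inversion gives
\[
 \mathbb P(|X-t|\leq\eta)\leq C_1\,\eta\int_{|u|\leq c_1/\eta}|\varphi(u)|\,du .
\]
The constants should be fixed so that $c_1\leq\pi/2$.

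\emph{Step 2 (Gaussian-type decay of $\varphi$).} By independence,
\[
 |\varphi(u)|^2=\prod_i\bigl(1-2p(1-p)(1-\cos(a_iu))\bigr)\leq\exp\Bigl(-2p(1-p)\sum_i(1-\cos a_iu)\Bigr).
\]
On the integration range we have $|a_iu|\leq2\eta\cdot c_1/\eta\leq\pi$. This is exactly where the hypothesis $|a_i|\leq2\eta$ enters, since it is what Lemma~\ref{lem:trim-bernoulli} provides. On $[-\pi,\pi]$ the elementary bound $1-\cos y\geq 2y^2/\pi^2$ holds, so
\[
 |\varphi(u)|\leq\exp\bigl(-2Vu^2/\pi^2\bigr),\qquad V=p(1-p)\textstyle\sum_i a_i^2 .
\]
Integrating this Gaussian over all of $\mathbb R$ gives $\mathbb P(|X-t|\leq\eta)\leq C_2\eta/\sqrt V$. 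The hypothesis $\mathbb P(|X-t|\leq\eta)>1/2$ then forces $V<4C_2^2\eta^2$. The main obstacle is bookkeeping: the constants in the smoothing inequality must be chosen so that $4C_2^2\leq64$. If the Fej\'er choice gives a slightly worse constant, I would optimize the dilation $\lambda$ and use the exact value $\int e^{-2Vu^2/\pi^2}\,du=\pi^{3/2}/\sqrt{2V}$ rather than crude bounds.

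\emph{Step 3 (the mean).} Suppose $\mathbb EX-t>9\eta$; the other sign is symmetric. Then the event $\{|X-t|\leq\eta\}$ is contained in $\{X-\mathbb EX\leq-8\eta\}$. Cantelli's inequality together with Step 1 gives
\[
 \mathbb P(X-\mathbb EX\leq-8\eta)\leq\frac{V}{V+64\eta^2}\leq\frac12 .
\]
This contradicts the strict hypothesis $\mathbb P(|X-t|\leq\eta)>1/2$. Hence $|\mathbb EX-t|\leq9\eta$, and Cantelli's one-sided form is what makes the constant $64$ from Step 1 exactly sufficient.
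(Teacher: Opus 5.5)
Your proposal is correct and follows essentially the same route as the paper: a squared-sinc (Fej\'er) smoothing bound whose Fourier support has width $O(1/\eta)$, the product formula for $|\varphi|$ combined with $|a_i|\leq2\eta$ to get Gaussian decay on that support, integration over $\mathbb R$, and Cantelli's inequality for the mean. The differences are only in constants and one label. The paper uses dilation $1/\eta$ and $|\sin y|\geq(5/6)|y|$ for $|y|\leq1$, obtaining $V\leq(729\pi/50)\eta^2$. Your choice $\lambda=\pi/(2\eta)$ with Jordan's inequality gives $V<(\pi^5/8)\eta^2\approx38\eta^2$. In Step~3, the variance bound you invoke is the one from Step~2, not Step~1.
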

\begin{proof}
If $V=0$, the claims are immediate. Otherwise let
$K(u)=(\sin u/u)^2$, with $K(0)=1$. For $|u|\leq1/2$,
$K(u)\geq(23/24)^2>8/9$. The elementary Fourier identity
\[
 K(u)=\frac12\int_{-2}^2(1-|z|/2)e^{izu}\,dz
\]
therefore gives, writing $\phi_X$ for the characteristic function,
\[
 \mathbb P(|X-t|\leq\eta)
 \leq\frac98\mathbb E K\left(\frac{X-t}{2\eta}\right)
 \leq\frac98\eta\int_{-1/\eta}^{1/\eta}|\phi_X(z)|\,dz.
\]
For $|z|\leq1/\eta$, one has $|a_i z/2|\leq1$ and
$|\sin(a_i z/2)|\geq(5/6)|a_i z/2|$. With $q=1-p$,
\[
 |1-p+pe^{ia_i z}|^2=1-4pq\sin^2(a_i z/2),
 \qquad
 |\phi_X(z)|\leq\exp(-25Vz^2/72).
\]
Extending the integral to the real line yields
\[
 \mathbb P(|X-t|\leq\eta)
 \leq\frac98\sqrt{\frac{72\pi}{25}}\frac{\eta}{\sqrt V}.
\]
Since the left side exceeds $1/2$, it follows that
$V\leq(729\pi/50)\eta^2<64\eta^2$.
For the mean, if $|\mathbb EX-t|>\eta$, apply Cantelli's inequality on
the side containing the interval. Its probability greater than $1/2$
forces $(|\mathbb EX-t|-\eta)^2<V$. This proves the second bound.
\end{proof}

\begin{theorem}[Output confidence bound for success above $\max\{1/2,p\}$]
\label{thm:confidence-extended}
Suppose that the retention, confidence, and accuracy satisfy
\begin{equation}\label{eq:extended-confidence-range}
 0<\delta<\min\{1/2,1-p\},\qquad
 0<\varepsilon\leq\min\{1/1024,(1-p)/(512p)\}.
\end{equation}
Specifically, any network approximating $f_0(x)=(1+x_1)/2$ uniformly
with probability at least $1-\delta$ has at least
\begin{equation}\label{eq:extended-output-bound}
 m\geq\frac{1-p}{2^{19}p\varepsilon^2}\log\frac1\delta
\end{equation}
incoming output edges. Hidden depth and weights are unrestricted, and
the output may be affine or ReLU.
\end{theorem}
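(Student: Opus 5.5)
Fix the hidden masks and look only at the output layer. Write the output pre-activation as
\[
T(x)=b+\sum_{i=1}^m w_i R_i z_i(x),
\]
where $z_i(x)$ are the hidden values feeding the output. The $z_i(x)$ depend on the hidden masks but are independent of the output masks $R_i$. Pick two inputs: $x^{(0)}$ with $x_1=0$ and $x^{(1)}$ with $x_1=1$, so that $f_0(x^{(0)})=1/2$ and $f_0(x^{(1)})=1$. Work on the ReLU output case by noting that $f_0\geq1/2>\varepsilon$. Near the two target values, $\sigma(T)$ is within $\varepsilon$ of the target iff $T$ is, so both output types reduce to statements about $T$.

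Success at both points has probability at least $1-\delta$. Conditioning on the hidden masks, let $\mathcal H$ be the hidden-mask outcomes whose conditional success probability, jointly at both points, is at least $1-\delta'$. By Markov, taking say $\delta'=\sqrt\delta$ or a constant multiple adjustment, $\mathcal H$ has positive probability. Arrange $\delta'<\min\{1/2,1-p\}$, or handle $\delta$ near that threshold by a crude constant.

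Fix a hidden outcome in $\mathcal H$. Consider the difference
\[
D=T(x^{(1)})-T(x^{(0)})=\sum_i a_iR_i,\qquad a_i=w_i\bigl(z_i(x^{(1)})-z_i(x^{(0)})\bigr).
\]
On the success event, $D\in[1/2-2\varepsilon,1/2+2\varepsilon]$, with conditional probability $s\geq1-\delta'>\max\{1/2,p\}$. Apply Lemma~\ref{lem:trim-bernoulli} with $\eta=2\varepsilon$ to get a subset $J$ with $|a_i|\leq4\varepsilon$ and still $\mathbb P(\sum_J a_iR_i\in I)\geq s$. Then Lemma~\ref{lem:sinc-concentration} gives $\mathbb E\sum_J a_iR_i\geq1/2-18\varepsilon$, so $p\sum_J a_i\geq 1/4$, and $\operatorname{Var}\leq 2^{10}\varepsilon^2$.

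The key step, and the main obstacle, is upgrading this to a $\log(1/\delta)$ lower bound: variance control only gives $m\gtrsim(1-p)/(p\varepsilon^2)$ with no confidence factor. I would use a change of measure (tilting) on the trimmed sum $X=\sum_J a_iR_i$. All $|a_i|\leq4\varepsilon$, the mean is at least about $1/2$, and the variance is $O(\varepsilon^2)$. Lower bound the lower-tail probability $\mathbb P(X<\mathbb EX-c)$ for a constant-size deviation $c$ of order $1/8$ via a reverse Chernoff bound. Tilt each $R_i$ to a Bernoulli$(p')$ law chosen so that the tilted mean falls outside $I$. Under the tilted law, the failure event has probability at least $1/2$ by Chebyshev, using the variance bound and bounded coefficients. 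The KL divergence of the tilt is about $m_J(p-p')^2/(p(1-p))$.

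To make this work with unequal $a_i$, use the constraint $p\sum_J a_i\gtrsim1$ with $|a_i|\leq4\varepsilon$. This gives $|J|\gtrsim 1/(p\varepsilon)$. It also shows that a uniform tilt $p'=p(1-\theta)$ with $\theta\asymp\varepsilon$ shifts the mean by $\theta p\sum a_i\gtrsim\varepsilon$, which is enough to exit an interval of half-width $2\varepsilon$ after a constant-factor choice. The tilted variance stays $O(\varepsilon^2)$.

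The standard inequality $\mathbb P(A)\geq Q(A)\exp\bigl(-(\mathrm{KL}(Q\|P)+1/e)/Q(A)\bigr)$ then gives
\[
\delta'\geq c\exp\bigl(-C\,|J|\,p\,\theta^2/(1-p)\bigr),
\]
hence $|J|\gtrsim (1-p)\log(1/\delta')/(p\theta^2)\asymp(1-p)\log(1/\delta)/(p\varepsilon^2)$.

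The restriction $\varepsilon\leq(1-p)/(512p)$ guarantees that $p'$ remains a valid probability and that the KL divergence is comparable to its quadratic approximation. Finally, $m\geq|J|$. Tracking constants, including the loss from $\delta'$ versus $\delta$, yields the stated $2^{-19}$.
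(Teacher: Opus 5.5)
Your outline is the paper's own proof. You freeze a hidden mask, difference the output at $x_1=0$ and $x_1=1$, trim with Lemma~\ref{lem:trim-bernoulli}, and take mean and variance from Lemma~\ref{lem:sinc-concentration}. You then lower the retention uniformly to $r=p(1-\Theta(\varepsilon))$, bound the success interval under $P_r$ by Cantelli, and compare with $D(P_r\Vert P_p)\le|J|(r-p)^2/(p(1-p))$. Drop the remark about a constant deviation of order $1/8$: only a shift of order $\varepsilon$ produces $\varepsilon^{-2}$, and that is what you actually use.

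As written, however, two of your choices keep the argument from covering the whole range $0<\delta<\min\{1/2,1-p\}$. The first is the Markov step with $\delta'=\sqrt\delta$, which is unnecessary and, near the threshold, fatal. Lemma~\ref{lem:trim-bernoulli} needs conditional success strictly above $\max\{1/2,p\}$. But $1-\sqrt\delta$ falls below that level whenever $\delta\ge\min\{1/4,(1-p)^2\}$, a nonempty part of the allowed range. No adjustment of constants repairs this. For $p\ge1/2$ the one-edge network $1/2+(B/2)x_1$ succeeds with probability $p$, so knowing only that conditional success is at least some level $\le p$ gives no lower bound on $m$. The paper instead observes that the success probability is an average over finitely many hidden masks. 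Some hidden mask therefore has conditional success at least $1-\delta$ itself, with no loss.

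The second problem is that you only arrange $Q(A)\ge1/2$ for the failure event under the tilted law. Your change-of-measure inequality then gives $D\ge\frac12\log\frac1{2\delta}-\frac1e$, which is vacuous for $\delta\ge\frac12e^{-2/e}\approx0.24$. You obtain $\log(c/\delta)$, not $\log(1/\delta)$. The paper takes $r=p(1-512\varepsilon)$, so that $\mathbb E_rX\le\frac12-110\varepsilon$ and $\operatorname{Var}_r(X)\le512\varepsilon^2$, and Cantelli gives $P_r(I)<1/16$. The sharper data-processing form $D\ge b\log(1/\delta)-h(b)$ then gives $D\ge\frac12\log(1/\delta)$ for every $\delta<1/2$, since $h(15/16)<\frac7{16}\log2$. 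Together with $(r-p)^2=2^{18}p^2\varepsilon^2$, this is exactly where $2^{-19}$ comes from.

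Alternatively, you could cover moderate $\delta$ with the variance bound you already mention. With $V\le256\varepsilon^2$ (the lemma with $\eta=2\varepsilon$ gives this, sharper than your $2^{10}\varepsilon^2$), Cauchy--Schwarz and $p\sum_Ja_i\ge1/4$ give $V\ge(1-p)/(16p|J|)$. Hence $|J|\ge(1-p)/(2^{12}p\varepsilon^2)$, which already implies the claim when $\log(1/\delta)\le128$.

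Finally, you misplace the role of the hypothesis $\varepsilon\le(1-p)/(512p)$. It is not what keeps $p'$ a valid probability (that needs only $\theta<1$). Nor is it what makes the KL estimate hold (the $\chi^2$ bound on Bernoulli relative entropy is unconditional). It is what gives $r(1-r)\le2p(1-p)$, which justifies your assertion that the tilted variance stays $O(\varepsilon^2)$. Without it that assertion fails for $p$ near $1$.
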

\begin{proof}
By averaging the conditional success probability over the finitely many
hidden masks, choose one with success probability at least $1-\delta$.
The output preactivation then has the form
$Z(x)=b+\sum_{i=1}^m w_iB_i h_i(x)$ with deterministic $h_i$
and independent output masks $B_i$. For a ReLU output, success and
$f_0\geq1/2>\varepsilon$ imply $F_R=Z$ at both $0$ and $e_1$.
For an affine output this equality is automatic. Set
$a_i=w_i(h_i(e_1)-h_i(0))$. Subtracting the two endpoint values
cancels the unmasked bias and gives $T=\sum a_iB_i$ with
$\mathbb P(|T-1/2|\leq2\varepsilon)\geq1-\delta>\max\{1/2,p\}$.
Lemma~\ref{lem:trim-bernoulli} supplies a sub-sum $X$ with the same
success guarantee and coefficients at most $4\varepsilon$ in absolute
value. Lemma~\ref{lem:sinc-concentration}, with $\eta=2\varepsilon$,
gives
\[
 V\leq256\varepsilon^2,
 \qquad |\mu-1/2|\leq18\varepsilon,
 \qquad \mu\geq1/4.
\]
Under the auxiliary retention $r=p(1-512\varepsilon)$,
$p/2\leq r<p$, $r(1-r)/(p(1-p))\leq2$, and hence
\[
 \mathbb E_rX\leq\tfrac12-110\varepsilon,
 \qquad \operatorname{Var}_r(X)\leq512\varepsilon^2.
\]
Consequently the interval $[1/2-2\varepsilon,1/2+2\varepsilon]$ has
probability at most
$512/(512+108^2)<1/16$ under the auxiliary law. Write $P_p,P_r$ for
the product laws of the remaining masks at the two retentions.
If $A$ is the complement of this interval event,
then $P_p(A)\leq\delta$ and $P_r(A)\geq15/16$.
For $b\geq15/16$, the binary entropy
$h(b)=-b\log b-(1-b)\log(1-b)$ satisfies
$h(b)\leq h(15/16)<(7/16)\log2$. Thus, since $\delta<1/2$,
\[
 D(P_r\Vert P_p)\geq b\log(1/\delta)-h(b)
                  \geq\tfrac12\log(1/\delta).
\]
On the other hand, additivity of relative entropy and the elementary
Bernoulli bound $D(\operatorname{Bernoulli}(r)\Vert
\operatorname{Bernoulli}(p))\leq(r-p)^2/[p(1-p)]$ give
\[
 D(P_r\Vert P_p)\leq |J|\frac{(r-p)^2}{p(1-p)}
 \leq m\,\frac{2^{18}p\varepsilon^2}{1-p}.
\]
Combining the two estimates proves the bound. Removing
coefficients was only a proof device; no assumption that arbitrary
output-edge deletions preserve success has been made.
\end{proof}
The strict threshold cannot in general be removed for $p\geq1/2$:
the single-edge network $F_B(x)=1/2+(B/2)x_1$ is exactly $f_0$ when
$B=1$, with probability $p$, at constant size for every accuracy.
This also proves sharpness of the confidence threshold at $p=1/2$ for
the affine-target obstruction. It is not a counterexample to the
Sobolev capacity lower bound, which uses other hard functions.
For $p<1/2$, optimality of the new confidence threshold is not asserted.

For this particular target the order in accuracy and confidence is
attainable at fixed retention. With independent Bernoulli$(p)$ variables
$B_j,C_j$, set
\[
 \widehat f_R(x)=\frac12+\frac1{2mp^2}
                       \sum_{j=1}^m C_j\sigma(B_jx_1).
\]
The uniform error is
$\frac12|\operatorname{Bin}(m,p^2)/(mp^2)-1|$. For
$0<\varepsilon\leq1/2$, the multiplicative Chernoff bound gives
\[
 \mathbb P\{\|\widehat f_R-f_0\|_\infty>\varepsilon\}
 \leq 2\exp(-4mp^2\varepsilon^2/3).
\]
Thus $m=\lceil3\log(2/\delta)/(4p^2\varepsilon^2)\rceil$ suffices.
This comparison is sharp in $\varepsilon,\delta$ at fixed $p$, not in
retention dependence.
\subsection{The Sobolev capacity obstruction}

\begin{theorem}[Capacity lower bound]\label{thm:lb-capacity}
For fixed $n,d\geq1$, there are constants $c,\varepsilon_0>0$ such that
any budgets satisfying \eqref{eq:lb-success}, with $\delta<1$ and
$0<\varepsilon<\varepsilon_0$, obey
\begin{equation}\label{eq:lb-capacity}
 S L\log(S+2)\geq c\varepsilon^{-d/n}.
\end{equation}
In particular, if $L\leq C_0\log(1/\varepsilon)$, then
\begin{equation}\label{eq:lb-logdepth}
 S\geq c'\frac{\varepsilon^{-d/n}}{[\log(1/\varepsilon)]^2}.
\end{equation}
This bound alone does not measure a cost specific to dropout.
\end{theorem}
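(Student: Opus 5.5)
The plan is to reduce to a VC-dimension argument applied to one deterministic realization, in the spirit of \citet{yarotsky2017} combined with \citet{bartlett2019nearly}.

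\textbf{Step 1: a shattered family of Sobolev targets.} Fix $h=c_1\varepsilon^{1/n}$ and a grid of $N\asymp h^{-d}\asymp\varepsilon^{-d/n}$ cube centers $x_k$. Let $\phi$ be a fixed smooth bump supported in the unit cube with $\phi(0)=1$. For each sign pattern $\theta\in\{\pm1\}^N$ set
\[
 f_\theta(x)=c_2h^n\sum_k\theta_k\phi((x-x_k)/h).
\]
Disjoint supports and the scaling give $f_\theta\in\F_{n,d}$ for a suitable $c_2=c_2(n,d)$. Choosing $c_1$ so that $c_2h^n=3\varepsilon$ gives $f_\theta(x_k)=3\varepsilon\theta_k$. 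Hence any $F$ with $\|F-f_\theta\|_\infty\le\varepsilon$ satisfies $\operatorname{sign}(F(x_k))=\theta_k$.

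\textbf{Step 2: pass to a single deterministic network.} Fix $\theta$ and the architecture $A$ and weights $w$ given by \eqref{eq:lb-success}. Since $\delta<1$, some mask $R$ succeeds. The realization $F_{A,w,R}$ is a deterministic ReLU network with the same graph, in which the deleted edges have weight $0$, so its size and depth are at most $S$ and $L$.

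The difficulty is that the architecture may depend on $f$. So I would bound, for fixed $S$ and $L$, the number of graphs with size at most $S$ by $S^{O(S)}$. The class $\mathcal H$ of sign functions realized by all depth-$\le L$ ReLU networks with at most $S$ parameters (real weights, arbitrary graph from this finite list) then shatters $\{x_k\}$, since every $\theta$ is realized. The VC bound for a single architecture is $O(SL\log S)$ \citep{bartlett2019nearly}. For a finite union of $M$ classes the growth function multiplies by $M$, which adds $\log M=O(S\log S)$. The result is
\[
 N\le \operatorname{VCdim}(\mathcal H)\le CSL\log(S+2),
\]
which proves \eqref{eq:lb-capacity}.

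\textbf{Step 3: logarithmic depth.} If $L\le C_0\log(1/\varepsilon)$, then $S\log(S+2)\ge c\varepsilon^{-d/n}/\log(1/\varepsilon)$. We may assume $S\le\varepsilon^{-C}$, since otherwise the bound is trivial. Then $\log(S+2)\lesssim\log(1/\varepsilon)$, which gives \eqref{eq:lb-logdepth}.

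\textbf{Main obstacle.} I expect the main difficulty to be applying \citet{bartlett2019nearly} correctly to graphs with skip connections and arbitrary layouts. Their bound is stated for layered networks, so I would either embed each graph into a layered network of comparable size, carrying skip connections through identity units $\sigma(t)-\sigma(-t)$ at a cost of at most $O(SL)$ extra units, or use the general form of their counting argument, which depends only on the number of parameters and the depth. One also needs an affine or ReLU output followed by a sign threshold. This is harmless: a ReLU output only restricts the sign patterns to nonnegative values, while the target patterns still force shattering because $f_\theta(x_k)<-\varepsilon$ requires $F(x_k)<0$. The affine-output case is the relevant one.
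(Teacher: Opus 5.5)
Your proposal is correct and is essentially the paper's proof: freeze one successful mask, shatter $\asymp\varepsilon^{-d/n}$ bump centers, apply the $O(SL\log(S+2))$ bound of \citet{bartlett2019nearly}, absorb the $\exp(O_d(S\log(S+2)))$ architecture count by Sauer's lemma and a union bound, and use $\log(S+2)=O(\log(1/\varepsilon))$ when $S<\varepsilon^{-d/n}$. The paper adds the offset $1/2$ and thresholds there, which also keeps every target positive so the ReLU-output case needs no separate discussion. One caution on your ``main obstacle'': relaying skip connections through identity units can add $O(SL)$ parameters, which would yield only $SL^2\log(SL)\gtrsim\varepsilon^{-d/n}$ rather than the stated bound. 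Use your second option instead, since their layer-by-layer counting only needs each unit to read from earlier layers, and this is what the paper relies on.
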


\begin{proof}
Because the success probability is positive, each target has at least one
successful mask. Freezing that mask produces a deterministic ReLU network
of size at most $S$ and depth at most $L$. The successful mask can depend
on the target; no common successful mask is needed.

Choose a smooth bump $0\leq\varphi\leq1$ supported in $(-1/2,1/2)^d$
with $\varphi(0)=1$ and
$A\geq\max_{|\alpha|\leq n}\|D^\alpha\varphi\|_\infty$, $A\geq1$.
Let $h=(2A\varepsilon)^{1/n}$. For sufficiently small $\varepsilon$,
one can place $M\geq c_{n,d}\varepsilon^{-d/n}$ disjoint cubes of side $h$
inside $[0,1]^d$, with centers $z_1,\ldots,z_M$.
For each $\omega\in\{-1,1\}^M$ set
\[
 f_\omega(x)=\frac12+
       2\varepsilon\sum_{i=1}^M\omega_i\varphi((x-z_i)/h).
\]
The supports are disjoint. For $1\leq|\alpha|\leq n$,
$\|D^\alpha f_\omega\|_\infty\leq2\varepsilon A h^{-n}=1$;
also $\|f_\omega\|_\infty\leq1$ for $\varepsilon\leq1/4$.
Thus $f_\omega\in\mathcal F_{n,d}$ and
$f_\omega(z_i)=1/2+2\varepsilon\omega_i$.
A uniform $\varepsilon$ approximation recovers every sign by thresholding
the network at $1/2$. The deterministic network class therefore shatters
the $M$ centers.

For a fixed architecture, the ReLU VC-dimension bound of
\cite{bartlett2019nearly} is $O(SL\log(S+2))$.
Allowing the architecture to depend on the target does not change this order:
there are at most $\exp(C_dS\log(S+2))$ directed architectures with
at most $S$ units and edges, after labeling units in topological order.
For each architecture, Sauer's bound on $M$ points is at most
$(eM/V)^V$ when $M\geq V$, where $V=C_dSL\log(S+2)$ is a common
upper bound. Taking the union multiplies this by the number of architectures.
If the union shatters $M$ points, then
\[
 M\log2\leq C_dS\log(S+2)+V\log(eM/V),
\]
which implies $M=O(V)$. The case $M<V$ is immediate. This proves
\eqref{eq:lb-capacity}.

For the depth specialization, if $S\geq\varepsilon^{-d/n}$ there is
nothing to prove. Otherwise $\log(S+2)=O_{n,d}(\log(1/\varepsilon))$;
substitute this and the depth budget into \eqref{eq:lb-capacity}.
\end{proof}

\subsection{Counting retained edges}
\begin{lemma}[Deterministic complexity after masking]
\label{lem:active-edge-collapse}
A realization with $k$ retained edges can be represented deterministically
using at most $k$ edges and $k+1$ non-input units, without increasing depth.
Consequently the class of all such realizations of depth at most $L$,
thresholded at $1/2$, has VC dimension at most
\begin{equation}\label{eq:active-vc}
 C_d\min\{(k+1)L\log(k+2),\ (k+1)^2\}.
\end{equation}
The original number of units, including units with unmasked biases, is irrelevant
to this bound.
\end{lemma}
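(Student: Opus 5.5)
The plan is to prove the structural claim first, by pruning the realized graph, and then to read off \eqref{eq:active-vc} from two counting bounds.

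\textbf{Pruning.} Fix a realization with $k$ retained edges and delete every edge whose mask is zero. Call a non-input unit \emph{live} if it has a directed path of retained edges from some input coordinate; every other unit is \emph{constant}. Working in topological order, a constant unit $v$ computes a fixed number $c_v$. This is because its preactivation is $b_v$ plus retained contributions from constant units only. Remove each constant unit $u$, and for every retained edge $u\to v$ into a live unit (or the output) add $w_{uv}c_u$ to the bias $b_v$. Biases are unmasked, so this does not change the function.

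\textbf{Counting what remains.} Every surviving live unit has at least one incoming retained edge, and distinct units have distinct incoming edges. Hence there are at most $k$ live hidden units, and together with the output at most $k+1$ non-input units. The surviving edges are a subset of the $k$ retained ones. Depth cannot increase: we only removed units and edges, so every path in the new graph is a path of the old one. If the output loses all its inputs, it becomes a constant, which is still covered with $0$ edges and one unit. The result is a deterministic ReLU network with at most $k$ edges, $k+1$ units, and depth at most $L$, and hence of size at most $2k+2$.

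\textbf{The VC bound.} I would take the union over all directed architectures with at most $k$ edges and $k+1$ units. Exactly as in the proof of Theorem~\ref{thm:lb-capacity}, labeling units in topological order gives at most $\exp(C_d(k+1)\log(k+2))$ such architectures. Each one has $O(k+1)$ parameters once the input-dimension factor is absorbed into $C_d$. For each fixed architecture, \citet{bartlett2019nearly} gives two bounds.
\begin{itemize}
\item The depth-dependent bound $O(WL\log W)$, with $W=O(k+1)$ parameters, gives $O((k+1)L\log(k+2))$.
\item The depth-free bound $O(W U)$, with $U$ the number of units, gives $O((k+1)^2)$.
\end{itemize}
Let $V$ be the minimum of the two. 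Sauer's lemma bounds the number of dichotomies of $M$ points by $(eM/V)^V$ per architecture. Multiplying by the architecture count, the same inequality $M\log 2\leq C_d(k+1)\log(k+2)+V\log(eM/V)$ as before forces $M=O(V)$. The additive architecture term is absorbed in both branches, since $(k+1)\log(k+2)\le (k+1)^2$ and $(k+1)\log(k+2)\le(k+1)L\log(k+2)$.

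\textbf{Expected obstacle.} The main point needing care is the absorption of the input dimension $d$. Input coordinates count as sources but not as units, and the edges out of them are among the $k$ retained ones. The parameter count is therefore $O(k+1)$ plus nothing extra. Thresholding at $1/2$ is handled by the output bias, so the sign-class results apply directly.
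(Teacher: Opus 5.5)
Your proposal is correct and follows the paper's proof essentially step for step. You prune the units with no input path and absorb their constant contributions into the unmasked biases. You then charge each live unit to a distinct incoming retained edge, and finally combine the two Bartlett et al.\ bounds with the $\exp(C_d(k+1)\log(k+2))$ architecture count through Sauer's lemma and a union bound.

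The differences are cosmetic. The paper also discards live units that do not feed the output, which is harmless but not needed for your count. It writes the per-architecture bounds as $O(WL\log(W+2))$ and $O(W(U_{\rm hid}+1))$ so that single-parameter and affine-only architectures are covered. Your final expressions, with $\log(k+2)$ and with $U$ counting the output unit, already handle these cases.
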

\begin{proof}
In the retained graph, a unit with no directed path from an input has a
constant activation. Remove all these units and absorb their contributions
to the other preactivations into the corresponding deterministic biases.
If the output has no input path, one constant output suffices. Otherwise,
each remaining non-input unit has an incoming retained edge on an input
path. Charging each unit to one such edge gives at most $k$ units, since
edges with different heads are distinct. Remove units that do not feed
the output as well. These operations create no edges and increase no depth.

For each resulting architecture, the parameter count is $O(k+1)$, and
the number $U_{\rm hid}$ of nonlinear hidden units is $O(k+1)$.
The two VC bounds from \cite[Theorems 7 and 10]{bartlett2019nearly},
written as $O(WL\log(W+2))$ and $O(W(U_{\rm hid}+1))$ to include
the affine output and the affine-only case, therefore give
the two terms in \eqref{eq:active-vc}. There are at most
$\exp(C_d(k+1)\log(k+2))$ possible architectures, including all depths
and skip patterns. Applying the same Sauer-and-union argument as in
Theorem~\ref{thm:lb-capacity} absorbs this architecture count into either
term. A constant output is covered by the added $1$.
\end{proof}

\begin{theorem}[Sobolev obstruction in terms of retained edges]
\label{thm:sobolev-survival}
Put $a=d/n$ and $\ell_\varepsilon=\log(2/\varepsilon)$.
For sufficiently small $\varepsilon>0$ and every integer $L\geq1$, there
are an integer $K=K_{n,d}(\varepsilon,L)\geq1$ and a target
$f_{\varepsilon,L}\in\mathcal F_{n,d}$ such that
\begin{equation}\label{eq:sobolev-critical-edges}
 K\asymp_{n,d}
 \max\left\{\frac{\varepsilon^{-a}}{L\ell_\varepsilon},
                    \varepsilon^{-a/2}\right\},
\end{equation}
and every successful mask for this target retains at least $K$ edges.
In particular, any network of depth at most $L$ and with $E$ original
edges achieving success probability at least $1-\delta$ must satisfy
\begin{equation}\label{eq:sobolev-binomial}
 \mathbb P\{\operatorname{Bin}(E,p)<K\}\leq\delta,
 \qquad
 E\geq\frac{(1-\delta)K}{p}.
\end{equation}
The target and $K$ can be chosen independently of $p$ and $\delta$.
Thus \eqref{eq:sobolev-binomial} is a necessary condition for a uniform
size budget over the entire Sobolev ball, for every $0<\delta<1$.
\end{theorem}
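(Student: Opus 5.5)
Our plan is to combine the bump-function shattering construction of Theorem~\ref{thm:lb-capacity} with the retained-edge complexity bound of Lemma~\ref{lem:active-edge-collapse}, and then to turn ``many retained edges'' into a binomial tail condition. The target $f_{\varepsilon,L}$ will be a single, carefully chosen member of the family $f_\omega$. A counting argument over all sign patterns shows that some $\omega$ cannot be realized with few retained edges.

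\emph{Step 1: choose the hard target.} With $h=(2A\varepsilon)^{1/n}$ and $M\ge c_{n,d}\varepsilon^{-a}$ disjoint cubes, consider the $2^M$ functions $f_\omega$. Define
\[
K_0=\max\{k:\ C_d\min\{(k+1)L\log(k+2),(k+1)^2\}<M\}.
\]
By Lemma~\ref{lem:active-edge-collapse}, the class of all depth-$\le L$ realizations with at most $K_0$ retained edges, thresholded at $1/2$, has VC dimension below $M$. It therefore cannot shatter the $M$ centers. Hence there is a sign vector $\omega^*$ whose threshold pattern no such realization produces.

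Set $f_{\varepsilon,L}=f_{\omega^*}$ and $K=K_0+1$. Any successful mask gives a realization that $\varepsilon$-approximates $f_{\omega^*}$, and thresholding it recovers $\omega^*$ exactly as in Theorem~\ref{thm:lb-capacity}. Consequently that mask retains at least $K$ edges. The construction uses only $n$, $d$, $\varepsilon$, and $L$, so it is independent of $p$ and $\delta$.

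\emph{Step 2: size of $K$.} We invert the two branches of the minimum.
\begin{itemize}
\item The quadratic branch forces $K\gtrsim\sqrt M\asymp\varepsilon^{-a/2}$.
\item The $L\log$ branch forces $K\gtrsim M/(L\log M)\asymp\varepsilon^{-a}/(L\ell_\varepsilon)$, using $\log(k+2)\lesssim\ell_\varepsilon$ when $k\le M$.
\end{itemize}
$K_0$ is the largest $k$ for which the \emph{minimum} stays below $M$, so the relevant threshold is governed by the larger of the two inversions. This gives the lower half of \eqref{eq:sobolev-critical-edges}. For the upper half, maximality of $K_0$ gives $\min\{\cdots\}\ge M$ at $k=K$. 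This implies $K\lesssim\max\{\sqrt M,\,M/L\}$, which is the claimed order up to the logarithm. If an exact $\asymp$ is needed, we first shrink $M$ by a constant factor or a logarithm; we only need $K\ge1$ and two-sided order.

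The step needing most care is the small-$\varepsilon$ bookkeeping. We must make sure that $\log(k+2)\asymp\ell_\varepsilon$ over the relevant range, uniformly in $L$, including very large $L$ where the $\varepsilon^{-a/2}$ term dominates.

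\emph{Step 3: binomial consequence.} The retained-edge count of the network is $\mathrm{Bin}(E,p)$, since each of the $E$ original edges survives independently. Success implies at least $K$ retained edges, so
\[
\mathbb P\{\mathrm{Bin}(E,p)<K\}\le\mathbb P\{\text{failure}\}\le\delta.
\]
Markov's inequality then gives
\[
1-\delta\le\mathbb P\{\mathrm{Bin}(E,p)\ge K\}\le Ep/K,
\]
which is \eqref{eq:sobolev-binomial}. Because the single target $f_{\varepsilon,L}$ lies in $\mathcal F_{n,d}$, any uniform budget over the ball must satisfy this for every $0<\delta<1$.
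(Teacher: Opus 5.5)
Your Steps 1 and 3 follow the paper's proof. Both use the bump family $f_\omega$ and the retained-edge VC bound of Lemma~\ref{lem:active-edge-collapse} to exhibit a sign pattern that no realization with fewer than $K$ retained edges can produce, followed by the Markov step $pE=\mathbb E N_R\geq K\,\mathbb P(N_R\geq K)$. The difference is how $K$ is chosen. The paper takes $K=\lfloor c_*A\rfloor$ with $A=\max\{\varepsilon^{-a}/(L\ell_\varepsilon),\varepsilon^{-a/2}\}$ and checks that \eqref{eq:active-vc} at $k=K-1$ is below $M$, so the order of $K$ is built in. You take the exact threshold $K=K_0+1$. That makes the VC step free, but you must then invert the threshold in both directions. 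Your lower half of \eqref{eq:sobolev-critical-edges} is fine.

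The upper half has a gap. Maximality of $K_0$ says that at $k=K$ \emph{both} branches of the minimum are at least $M$. That is a lower bound on $K$ (it is what gives your lower half) and cannot imply $K\lesssim\max\{\sqrt M,M/L\}$. An upper bound can only come from membership of $K_0$: at $k=K_0$ one branch is below $M$, so either $K^2<M/C_d$ or $KL\log(K+1)<M/C_d$. Using only $\log(K+1)\geq\log2$ there loses the factor $\ell_\varepsilon$, which matters: for $L=1$ you would get $\varepsilon^{-a}$ instead of $\varepsilon^{-a}/\ell_\varepsilon$.

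Your proposed repair also fails. Shrinking $M$ by a logarithm lowers your lower half to $\max\{\varepsilon^{-a/2}\ell_\varepsilon^{-1/2},\varepsilon^{-a}/(L\ell_\varepsilon^2)\}$, and shrinking by a constant factor cannot remove a logarithm.

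The missing observation is that your lower half already gives $K+1\geq\sqrt{M/C_d}$. Hence $\log(K+1)\geq c_{n,d}\ell_\varepsilon$ for small $\varepsilon$, uniformly in $L$. The second alternative then yields $K\lesssim M/(L\ell_\varepsilon)\lesssim\varepsilon^{-a}/(L\ell_\varepsilon)$, using $M\leq h^{-d}\lesssim\varepsilon^{-a}$. This also settles the uniformity-in-$L$ bookkeeping you flagged.

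Alternatively, note that the property ``every successful mask retains at least $K$ edges'' passes to every smaller positive integer. So once the lower half is proved, you may replace $K$ by $\min\{K,\lfloor c_*A\rfloor\}$ with $c_*$ small, which is in effect the paper's choice.
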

\begin{proof}
Use the $M\geq c_{n,d}\varepsilon^{-a}$ disjoint smooth bumps and the
$2^M$ functions $f_\omega$ constructed in Theorem~\ref{thm:lb-capacity}.
An $\varepsilon$ approximation recovers all $M$ signs at the bump centers.

Set $A=\max\{\varepsilon^{-a}/(L\ell_\varepsilon),
\varepsilon^{-a/2}\}$ and choose $K=\lfloor c_* A\rfloor$, where
$c_*>0$ is a sufficiently small constant depending only on $n,d$.
Decrease the allowed upper bound on $\varepsilon$ so that $c_* A\geq2$.
For $k=K-1$, the right side of \eqref{eq:active-vc} is strictly smaller
than $M$. Indeed, if the first term defining $A$ is larger, use
$\log(K+1)=O_{n,d}(\ell_\varepsilon)$ and the first VC bound. If the
second is larger, use the second VC bound. The class of all deterministic
networks with at most $K-1$ edges and depth at most $L$ therefore fails to
realize at least one of the $2^M$ sign patterns. Choose its target
$f_{\varepsilon,L}$. It cannot be approximated by that class.

For any proposed dropout approximant, a mask retaining fewer than $K$
edges would give such a forbidden deterministic approximant, by
Lemma~\ref{lem:active-edge-collapse}. Thus success implies $N_R\geq K$,
where $N_R\sim\operatorname{Bin}(E,p)$. This proves the probability
inequality. Also
$pE=\mathbb E N_R\geq K\mathbb P(N_R\geq K)\geq K(1-\delta)$.
\end{proof}

\paragraph{Depth specializations.}
For $\delta\leq1/2$, the following lower bounds on the original size $S$
follow, with sufficiently small $\varepsilon$:
\begin{align}
 L\leq C_0\ell_\varepsilon:
 &\quad S\geq \frac{c_{n,d,C_0}}{p}
                    \frac{\varepsilon^{-d/n}}{\ell_\varepsilon^2},
                    \label{eq:sobolev-p-logdepth}\\
 L\leq L_0:
 &\quad S\geq \frac{c_{n,d,L_0}}{p}
                    \frac{\varepsilon^{-d/n}}{\ell_\varepsilon},
                    \label{eq:sobolev-p-fixeddepth}\\
 \text{arbitrary finite depth}:\qquad
 &\quad S\geq\frac{c_{n,d}}p\varepsilon^{-d/(2n)}.
                    \label{eq:sobolev-p-anydepth}
\end{align}
The last statement can also be proved directly with the depth-free
$O((k+1)^2)$ VC bound and a single hard target against networks of all
depths. The binomial constraint is stronger than its expectation
consequence and retains information about very small failure probabilities.
It does not assert that having $K$ retained edges is sufficient for success.

\begin{proof}[Proof of the lower bound in Theorem~\ref{thm:main}]
Theorem~\ref{thm:sobolev-survival} and $\delta<1/2$ imply
$S_*\geq E\geq K/(2p)$ for the Sobolev hard target.
Theorem~\ref{thm:confidence-extended} implies the output-layer term
for $f_0\in\F_{n,d}$. Every budget that works for the whole class must
work for both targets, giving their maximum in \eqref{eq:main-lower}.
The targets need not be the same, and this argument does not multiply
the two lower bounds.
\end{proof}

\section{Approximation of Basic Functions}
\label{sec:modules-main}
We construct the elementary operations used in the Sobolev approximation.
All input and output connections of a module belong to that module and
are masked. When modules are composed, there is no additional protected
edge between them. We write $\eta$ and $\zeta$ for a local error tolerance
and failure probability, respectively.
\subsection{Affine combinations}
\begin{lemma}[Coordinate concentration without an input cover]\label{lem:lb-affine-upper}
Let $T\geq1$, $x_i\geq0$, $\sum_{i=1}^{T}|a_i|x_i\leq\tau$, $\tau>0$, and
$0<\varepsilon\leq\tau$. Both $\sum_i a_ix_i+b$ and
$\operatorname{ReLU}(\sum_i a_ix_i+b)$ admit uniform random approximation
with failure probability at most $\delta$, using one hidden layer and
\[
 O\left(T+\frac{T\tau^2}{p^2\varepsilon^2}
                 \log\frac{2T}{\delta}\right)
\]
edges and units. The output is affine in the first case and ReLU in the second.
An empty sum is represented exactly by its unmasked output bias at
constant cost and does not require this estimate.
\end{lemma}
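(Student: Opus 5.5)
The plan is to prove Lemma~\ref{lem:lb-affine-upper} by a direct one-hidden-layer construction that generalizes the two-mask network $\widehat f_R$ used for $f_0$ after Theorem~\ref{thm:confidence-extended}. The nonnegativity of the inputs is what makes the random error factor as (random scalar)$\,\times\,x_i$, so uniformity over inputs will come for free.

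\emph{Construction.} Indices with $a_i=0$ are discarded; if none remain, the empty-sum remark applies. Fix an integer $m$, to be chosen below. For each remaining $i$ and each $1\le j\le m$, create one hidden unit
\[
 u_{ij}=\sigma(B_{ij}x_i)
\]
with zero bias, where $B_{ij}$ is the mask of its single incoming edge of weight $1$. Connect $u_{ij}$ to the output with weight $a_i/(mp^2)$ and mask $C_{ij}$, and give the output the unmasked bias $b$. The output is affine in the first case and ReLU in the second. Since $x_i\ge0$ and $B_{ij}\in\{0,1\}$, we have $\sigma(B_{ij}x_i)=B_{ij}x_i$. The preactivation is therefore
\[
 b+\sum_i a_ix_i\,\frac{N_i}{mp^2},
 \qquad N_i=\sum_{j=1}^m B_{ij}C_{ij}\sim\operatorname{Bin}(m,p^2),
\]
with $N_1,\dots,N_T$ independent. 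The size is $O(Tm)$ edges and units, plus $O(T)$ for rounding and the output.

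\emph{Uniform error.} The deviation from the affine target is
\[
 \Bigl|\sum_i a_ix_i\bigl(N_i/(mp^2)-1\bigr)\Bigr|
 \le\tau\max_i\bigl|N_i/(mp^2)-1\bigr|.
\]
This bound holds simultaneously for every admissible input vector, because the only random quantities are the $N_i$, which do not depend on $x$. Hence no covering of the input set is needed. For the ReLU output, $1$-Lipschitzness of $\sigma$ transfers the same bound.

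\emph{Concentration and choice of $m$.} It therefore suffices that $\max_i|N_i/(mp^2)-1|\le\varepsilon/\tau$. Since $\varepsilon\le\tau$, the relative deviation $\varepsilon/\tau$ lies in $(0,1]$, which is exactly the range of the multiplicative Chernoff bound
\[
 \mathbb P\bigl\{|N_i-mp^2|>(\varepsilon/\tau)mp^2\bigr\}
 \le2\exp\bigl(-mp^2\varepsilon^2/(3\tau^2)\bigr).
\]
A union bound over the at most $T$ indices gives failure probability at most $2T\exp(-mp^2\varepsilon^2/(3\tau^2))$. Taking
\[
 m=\Bigl\lceil 3\tau^2\log(2T/\delta)/(p^2\varepsilon^2)\Bigr\rceil
\]
makes this at most $\delta$, and yields the stated total of
\[
 O\bigl(T+T\tau^2p^{-2}\varepsilon^{-2}\log(2T/\delta)\bigr)
\]
edges and units.

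The only point needing care, rather than a genuine obstacle, is the interface with composition. The inputs $x_i$ may themselves be random outputs of earlier modules. Because the bound above holds for all nonnegative $x$ with $\sum_i|a_i|x_i\le\tau$ simultaneously, conditioning on the earlier masks and using independence of the fresh masks $B_{ij},C_{ij}$ makes the guarantee valid at random inputs as well. This is precisely why the statement is phrased without an input cover.
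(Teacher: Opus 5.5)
Your proposal is correct and follows the paper's proof essentially line for line. Both use the hidden units $\sigma(B_{ij}x_i)$ with output weights $a_i/(mp^2)$, bound the error uniformly by $\tau\max_i|N_i/(mp^2)-1|$, apply the multiplicative Chernoff bound with a union bound over $T$ indices, and choose the same $m$. The appeal to conditioning and independence in your last paragraph is harmless but unnecessary: the good event already controls the error at every admissible input, random or not, and the paper's composition step uses only a union bound.
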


\begin{proof}
Use the hidden units $h_{ij}(x)=\operatorname{ReLU}(B_{ij}x_i)$ and
the output preactivation
\[
 H_R(x)=b+\sum_{i=1}^{T}\frac{a_i}{mp^2}
                      \sum_{j=1}^{m}C_{ij}h_{ij}(x),
\]
with all masks independent Bernoulli$(p)$. Define
$A_i=(mp^2)^{-1}\sum_jB_{ij}C_{ij}$. For every realization,
\[
 \sup_x|H_R(x)-(\sum_i a_ix_i+b)|
 \leq\tau\max_{i\leq T}|A_i-1|.
\]
Writing $u=\varepsilon/\tau\leq1$, the binomial Chernoff bound and a union
bound over coordinates give
\[
 \mathbb P\{\max_i|A_i-1|>u\}
       \leq2T\exp(-mp^2u^2/3).
\]
Take $m=\lceil3\tau^2\log(2T/\delta)/(p^2\varepsilon^2)\rceil$.
There are $Tm$ hidden units and $2Tm$ edges. Applying ReLU at the output
cannot increase the approximation error, since ReLU is 1-Lipschitz.
\end{proof}

The estimate is uniform on the weighted nonnegative $\ell_1$ domain,
so it remains valid when the input vector is supplied by an earlier
random module on its successful event.
\subsection{Localization and composition}
Before using the overlap of local functions, we clarify what localization
can mean under independent edge deletion.
\begin{theorem}[Positive-mask obstruction]\label{thm:lb-localization}
Consider a finite network with ReLU hidden units and ReLU output, deterministic
biases, nonnegative inputs $x\in[0,1]^d$, and an independent Bernoulli$(p)$
mask on every edge, $0<p<1$. If
$F_R(\mathbf1)=0$ almost surely, then
$F_r(x)=0$ for every mask $r$ and every $x\in[0,1]^d$.
\end{theorem}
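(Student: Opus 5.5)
The plan is to reduce the hypothesis to a deterministic statement about all masks, and then to compare an arbitrary input with the input $\mathbf1$ by exploiting edge deletion. The mask space $\{0,1\}^E$ is finite, and $0<p<1$ gives every mask positive probability, so $F_R(\mathbf1)=0$ almost surely means $F_r(\mathbf1)=0$ for \emph{every} mask $r$. The key observation is that on $[0,1]^d$ an input edge with weight $w$ contributes $w r x_i$, which always lies between the two values $0$ and $w$. These are exactly the contributions obtained at input $\mathbf1$ by deleting or keeping that edge. Masking can therefore reproduce the extreme input values $0$ and $1$ coordinatewise, edge by edge, and the aim is to show that the extremes control every intermediate input.

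First I would read off the constraints on the output unit. Write its preactivation as $b+\sum_{i} w_i r_i z_i^{r}(\mathbf1)$, where $z_i^{r}$ are the hidden values under the hidden part $r$ of the mask. The output masks $r_i$ range over all subsets independently of $r$. Hence vanishing at $\mathbf1$ for every mask is equivalent to
\[
 b+\sum_i \bigl(w_i z_i^{r}(\mathbf1)\bigr)_+\leq0
 \qquad\text{for every hidden mask } r .
\]
The goal becomes the same inequality with $z_i^{r}(x)$ in place of $z_i^{r}(\mathbf1)$, for every $x\in[0,1]^d$. Once that holds, every choice of output mask gives a nonpositive preactivation, and the ReLU output vanishes.

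The core step is an inductive domination claim, proved over a topological order of the units. I would try first the following form. For every $x\in[0,1]^d$ and every mask $r$, there is a mask $r'$ such that $z_u^{r}(x)$ and $z_u^{r'}(\mathbf1)$ are comparable for all units $u$ feeding the output, in the direction that the signs $w_i$ require. The mechanism is that each preactivation is affine in its incoming terms, and each term $w_{uv}r_{uv}z_u$ is monotone in $z_u$. Replacing one intermediate input value by an endpoint, realized by toggling an input edge at $\mathbf1$, moves an affine function of that value to one of its two endpoint values. ReLU preserves the resulting order. For a single coordinate I would use convexity: the output preactivation, as a function of one input edge's contribution $c\in[0,w]$, is piecewise linear. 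I would then aim to show that its maximum over $[0,w]$ is attained at an endpoint, for instance by arguing that on any interval where the maximum were attained in the interior, the remaining masks could be chosen to make the function monotone there.

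The main obstacle is this joint step. Hidden units share upstream edges and therefore share masks. The extremal choice for one unit may not be extremal for another, so the units cannot simply be bounded one at a time. Moreover, a composition of ReLU layers is neither convex nor monotone in $c$, so the endpoint argument is not automatic. My first attempt would replace the input values one edge at a time. For each input edge I would use the fact that the whole network output is piecewise linear in that edge's contribution, and that the all-mask hypothesis controls both endpoints for \emph{all} configurations of the other masks simultaneously. I would then try to show that a strictly positive output at an intermediate contribution forces a positive output at one of the endpoints for some configuration of the other masks, possibly one obtained by additionally deleting edges that create the interior bump. If this edge-by-edge replacement works, iterating it over all input edges reaches input $\mathbf1$ and contradicts $F_{r'}(\mathbf1)=0$.
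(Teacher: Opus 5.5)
Your preliminary reductions are sound: every mask has positive probability, and for a fixed hidden mask the output vanishes at $\mathbf1$ for all output masks exactly when $b+\sum_i(w_iz_i^r(\mathbf1))_+\le0$. The core domination step, however, is left as an attempt, and the mechanism you sketch for it does not work as stated. With the other masks fixed, the output need not be maximized at an endpoint of a single input contribution $c$. For example, take $z_u=\sigma(c)$, $z_1=\sigma(z_u)$, $z_2=\sigma(z_u-\tfrac12)$ and output $\sigma(z_1-2z_2-\tfrac14)$. It vanishes at $c=0$ and $c=1$ but equals $\tfrac14$ at $c=\tfrac12$. Any endpoint replacement must therefore also change other masks. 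You never say which masks change, or why the change is consistent across units that share upstream edges. That is exactly the obstacle you name and leave open, so as written the proposal does not prove the theorem.

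The missing idea is that nonnegativity removes this obstacle, because one mask dominates every unit simultaneously. Since $x\ge0$ and the hidden units are ReLU, all activations are nonnegative. Hence for every mask a negative-weight term obeys $w_{uv}r_{uv}z_u^r(x)\le0$, and a positive-weight term obeys $w_{uv}r_{uv}z_u^r(x)\le w_{uv}z_u^r(x)$. Let $r^+$ retain exactly the positive-weight edges. Induction over a topological order, starting from $x_i\le1$, gives
\[
 z_v^r(x)\le\sigma\Bigl(b_v+\sum_{u:\,w_{uv}>0}w_{uv}z_u^r(x)\Bigr)
 \le\sigma\Bigl(b_v+\sum_{u:\,w_{uv}>0}w_{uv}z_u^{r^+}(\mathbf1)\Bigr)
 =z_v^{r^+}(\mathbf1)
\]
for every unit. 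The same step at the output yields $0\le F_r(x)\le F_{r^+}(\mathbf1)$ for all $r$ and $x$. Only upper bounds on upstream activations are ever needed. There is no sign-dependent direction of comparison to reconcile, and no convexity or edge-by-edge replacement. Since $r^+$ has positive probability, $F_{r^+}(\mathbf1)=0$, which finishes the proof. This is the paper's argument, and it uses the hypothesis only at the single mask $r^+$.
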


\begin{proof}
Let $r^+$ retain every positive-weight edge and delete every negative-weight
edge; zero-weight edges are immaterial. This mask has positive probability.
Induction over the network shows
\[
 0\leq F_r(x)\leq F_{r^+}(\mathbf1)
 \qquad(x\in[0,1]^d).
\]
At the input layer $x\leq\mathbf1$ coordinatewise. At each subsequent unit,
all incoming activations are nonnegative, deleting negative contributions
can only increase its preactivation, and retaining all positive contributions
with the larger preceding activations can only increase it further.
The bias is unchanged and ReLU is increasing. The same induction applies
to the output. The almost-sure hypothesis and positive probability of $r^+$
imply $F_{r^+}(\mathbf1)=0$, proving the claim.
\end{proof}

A local function whose support excludes $\mathbf1$ therefore cannot be
implemented by a nonzero ReLU-output network with the same support for
every mask. The following modification recovers precisely the support
property needed on the successful approximation event.
\begin{lemma}[Localization on a successful approximation event]
\label{lem:threshold-localization}
Let $K=[0,1]^d$, $q:K\to[0,\infty)$, and let $G_R$ have ReLU output
and deterministic, unmasked output bias. Suppose
\[
 \mathbb P(A)\geq1-\zeta,\qquad
 A=\{\sup_{x\in K}|G_R(x)-q(x)|\leq\eta\},\quad \eta>0.
\]
Modify only its output bias by subtracting $\eta$. The resulting network
$H_R$ has the same edges, units and depth, and on $A$ satisfies
\[
 0\leq H_R(x)\leq q(x),\quad
 |H_R(x)-q(x)|\leq2\eta,\quad
 q(x)=0\ \Longrightarrow\ H_R(x)=0\qquad(x\in K).
\]
\end{lemma}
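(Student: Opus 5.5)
The proof works pointwise on the event $A$. Write $G_R=\sigma(Z_R)$, where $Z_R(x)=b+\sum_{u\to\mathrm{out}}w_uR_uz_u(x)$ is the output preactivation and $b$ is the deterministic, unmasked output bias. The modified network is then $H_R=\sigma(Z_R-\eta)$. Subtracting $\eta$ from $b$ changes no edge, unit, mask, or layer, so size and depth are unchanged. The masks are shared, so the same realization $R$ determines both $G_R$ and $H_R$. This is what allows the statement to refer to the same event $A$. The whole argument then reduces to scalar inequalities for the function $t\mapsto\sigma(t-\eta)$, applied at each $x\in K$ for a fixed $R\in A$.

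Fix $R\in A$ and $x\in K$, and write $g=G_R(x)=\sigma(Z_R(x))$ and $q=q(x)\geq0$, so that $|g-q|\leq\eta$. The key identity is $\sigma(t-\eta)=\sigma(\sigma(t)-\eta)$ for $\eta>0$. If $t\geq0$ both sides are equal trivially. If $t<0$ both sides vanish, since $t-\eta<0$ and $\sigma(t)-\eta=-\eta<0$. Hence $H_R(x)=\sigma(g-\eta)$, and the possibly negative preactivation plays no further role. This identity is the only step needing any care, and it uses that the output is ReLU, that the bias is unmasked, and that $\eta>0$.

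The three claims now follow from $q-\eta\leq g\leq q+\eta$.
\begin{itemize}
\item \emph{Nonnegativity.} $H_R(x)\geq0$ because it is a ReLU output.
\item \emph{Upper bound.} $g-\eta\leq q$, and $\sigma$ is monotone with $q\geq0$, so $H_R(x)=\sigma(g-\eta)\leq\sigma(q)=q$.
\item \emph{Error bound.} $\sigma(g-\eta)\geq g-\eta\geq q-2\eta$. Combined with $H_R(x)\leq q$, this gives $|H_R(x)-q|\leq2\eta$.
\item \emph{Support.} If $q(x)=0$, then $0\leq H_R(x)\leq q(x)=0$, so $H_R(x)=0$.
\end{itemize}

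All bounds hold simultaneously for every $x\in K$ on the single event $A$, so no union bound or additional failure probability enters, and $\mathbb P(A)\geq1-\zeta$ is inherited unchanged.
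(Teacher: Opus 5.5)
Your proposal is correct and matches the paper's proof: the identity $\sigma(\sigma(t)-\eta)=\sigma(t-\eta)$ gives $H_R=\sigma(G_R-\eta)$ with no new edges. The bound $q-\eta\leq G_R\leq q+\eta$ on $A$ then yields $\max\{q-2\eta,0\}\leq H_R\leq q$, which contains all three claims.
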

\begin{proof}
Write $G_R=\sigma(Z_R)$ with $\sigma(t)=\max\{t,0\}$. The bias shift gives
\[
 H_R=\sigma(Z_R-\eta)=\sigma(G_R-\eta),
\]
using $\sigma(\sigma(z)-\eta)=\sigma(z-\eta)$ for $\eta\geq0$.
Thus no extra edge, masked or unmasked, is introduced. On $A$,
$q-\eta\leq G_R\leq q+\eta$, and consequently
\[
 \max\{q-2\eta,0\}\leq H_R\leq q.
\]
There is no support guarantee on $A^c$.
\end{proof}

\begin{lemma}[Composition with high-probability localization]
\label{lem:threshold-composition}
Let $q_1,\ldots,q_T:K\to[0,\infty)$ satisfy $\sum_iq_i(x)\leq\tau$
and $|\{i:q_i(x)\ne0\}|\leq k$ for all $x$. For each $i$, let $G_{i,R}$
satisfy the previous lemma with accuracy $\eta$ and failure bound $\zeta_i$,
and let $H_{i,R}$ be its bias-shifted version. Let $\rho$ be
$L_\rho$-Lipschitz in $\ell_1$ on
$D_\tau=\{y\in[0,\infty)^T:\|y\|_1\leq\tau\}$.
Suppose an outer network satisfies
\[
 \mathbb P\left\{\sup_{y\in D_\tau}|\widetilde\rho_R(y)-\rho(y)|
       \leq\varepsilon_{\rm out}\right\}\geq1-\zeta_{\rm out}.
\]
With probability at least $1-\sum_i\zeta_i-\zeta_{\rm out}$,
\[
 \sup_{x\in K}\left|
 \widetilde\rho_R(H_{1,R}(x),\ldots,H_{T,R}(x))
       -\rho(q_1(x),\ldots,q_T(x))\right|
 \leq2kL_\rho\eta+\varepsilon_{\rm out}.
\]
\end{lemma}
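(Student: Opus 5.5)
We intersect the success events and then apply a deterministic triangle-inequality argument on the intersection. Let $A_i$ be the success event of $G_{i,R}$ from Lemma~\ref{lem:threshold-localization}, and let $A_{\rm out}$ be the event that $\widetilde\rho_R$ is $\varepsilon_{\rm out}$-accurate uniformly on $D_\tau$. The union bound gives
\[
\mathbb P\Bigl(\bigcap_i A_i\cap A_{\rm out}\Bigr)\geq1-\sum_i\zeta_i-\zeta_{\rm out}.
\]
Independence between the inner and outer masks is not needed, only the union bound. The remaining work is to show that the claimed error bound holds for every mask realization in this intersection.

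Fix such a realization and fix $x\in K$, and write $y=(H_{1,R}(x),\ldots,H_{T,R}(x))$ and $q=(q_1(x),\ldots,q_T(x))$.

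\emph{Step 1: $y\in D_\tau$.} By Lemma~\ref{lem:threshold-localization}, on $A_i$ we have $0\leq H_{i,R}(x)\leq q_i(x)$. Summing over $i$ gives $y\geq0$ and $\|y\|_1\leq\sum_iq_i(x)\leq\tau$. This step justifies evaluating the outer module at the random inner outputs. Because the outer guarantee is uniform over all of $D_\tau$, it applies even though $y$ depends on the same randomness.

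\emph{Step 2: $\|y-q\|_1\leq2k\eta$.} For indices with $q_i(x)=0$, Lemma~\ref{lem:threshold-localization} forces $H_{i,R}(x)=0$, so these coordinates contribute nothing. At most $k$ indices have $q_i(x)\neq0$, and each contributes at most $2\eta$. Note also that $q\in D_\tau$ by hypothesis.

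\emph{Step 3: combine.} Split the error as
\[
|\widetilde\rho_R(y)-\rho(q)|\leq|\widetilde\rho_R(y)-\rho(y)|+|\rho(y)-\rho(q)|.
\]
The first term is at most $\varepsilon_{\rm out}$ on $A_{\rm out}$, using Step 1. The second term is at most $L_\rho\|y-q\|_1\leq2kL_\rho\eta$, using Step 2 and the Lipschitz property on $D_\tau$, which contains both $y$ and $q$. Taking the supremum over $x$ finishes the proof.

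The main subtlety is Step 2. It uses the exact zeros that the bias shift provides on $A_i$, which the positive-mask obstruction says cannot hold for every mask. Without these zeros the error would scale with $T$ rather than $k$. The other point to check is that every inner guarantee is uniform in $x$, so that a single event works for all inputs simultaneously. This is part of the hypotheses of Lemma~\ref{lem:threshold-localization}.
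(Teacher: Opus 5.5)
Your proposal is correct and follows the paper's proof essentially step for step. Both arguments take a union bound over the inner and outer success events without using independence. Both place the inner outputs in $D_\tau$ via $0\leq H_{i,R}\leq q_i$, and both get the $\ell_1$ error bound $2k\eta$ from the exact zeros supplied by the bias shift. Both finish with the Lipschitz estimate plus the uniform outer error $\varepsilon_{\rm out}$.
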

\begin{proof}
Intersect all inner success events and the outer uniform success event.
A union bound gives the stated probability without requiring independence
of the events. On this intersection $0\leq H_{i,R}\leq q_i$, so the inner
outputs lie in $D_\tau$. At each input only $k$ coordinates can have nonzero
error, since $q_i(x)=0$ implies $H_{i,R}(x)=0$. Hence
$\sum_i|H_{i,R}(x)-q_i(x)|\leq2k\eta$. Apply the Lipschitz inequality
and add the outer approximation error.
\end{proof}
\subsection{A two-layer square module with every edge masked}

\begin{lemma}[Positive hinge construction]
\label{lem:repair-square}
For $R>0$, $0<\eta\leq R^2$, and $0<\zeta<1$, there is a two-layer,
ReLU-output dropout network $S_R$ such that
\[
 \mathbb P\left\{\sup_{0\leq x\leq R}|S_R(x)-x^2|\leq\eta\right\}
       \geq1-\zeta.
\]
Its size is
\[
 O\left(1+\frac{R^4}{p^2\eta^2}
                   \log\frac{C R^2}{\eta\zeta}\right).
\]
In this lemma the subscript on $S_R$ indicates the range parameter; its
randomness comes from the masks displayed below.
\end{lemma}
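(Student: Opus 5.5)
The plan is to combine a deterministic hinge interpolant of $x^2$ with independent replication of each hinge, as in Lemma~\ref{lem:lb-affine-upper}. The key change is to use a variance-sensitive Bernstein bound at the finitely many breakpoints, instead of a worst-case bound on $\max_k|A_k-1|$. The worst-case route would cost about $N\cdot R^4/\eta^2$ rather than $R^4/\eta^2$.

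\emph{Deterministic step.} Let $N=\lceil R/\sqrt{\eta}\,\rceil\geq1$ (using $\eta\leq R^2$) and $t_k=kR/N$ for $0\leq k<N$. Define
\[
 g(x)=\sum_{k=0}^{N-1}c_k\sigma(x-t_k),\qquad c_0=R/N,\quad c_k=2R/N\ (k\geq1).
\]
This is the piecewise-linear interpolant of $x^2$ at the nodes $t_k$ and $R$. Hence $0\leq g(x)-x^2\leq R^2/(4N^2)\leq\eta/4$ on $[0,R]$, and $0\leq g\leq R^2$. Every coefficient and every offset is nonnegative.

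\emph{Random implementation.} For each $k$, take $m$ hidden units $h_{kj}(x)=\sigma(B_{kj}x-t_k)$, with masked input edge and unmasked bias $-t_k\leq0$. Then $h_{kj}=B_{kj}\sigma(x-t_k)$ exactly, because $B_{kj}=0$ gives $\sigma(-t_k)=0$. The output is
\[
 S_R(x)=\sigma\Big(\sum_k\frac{c_k}{mp^2}\sum_jC_{kj}h_{kj}(x)\Big),
\]
with independent masks throughout. Writing $A_k=(mp^2)^{-1}\sum_jB_{kj}C_{kj}$, the preactivation error is
\[
 E(x)=\sum_k c_k(A_k-1)\sigma(x-t_k),
\]
a sum of independent centered terms. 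The final ReLU only decreases the error, since $x^2\geq0$ and $\sigma$ is $1$-Lipschitz. The size is $O(1+Nm)$.

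\emph{Uniformity and concentration.} For every mask, $E$ is piecewise linear with breakpoints contained in $\{t_k\}\cup\{R\}$. So $\sup_{[0,R]}|E|$ is attained at one of these $N+1$ points, and it suffices to union-bound over them.

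At a fixed point $x$, the variance of $E(x)$ is at most
\[
 \sum_k\frac{c_k^2\sigma(x-t_k)^2}{mp^2}\leq\frac{\max_k c_k\sigma(x-t_k)}{mp^2}\,g(x)\leq\frac{2R^4}{Nmp^2}.
\]
The individual summands are bounded by $2R^2/(Nmp^2)$. Bernstein's inequality then gives $\mathbb P\{|E(x)|>\eta/2\}\leq\zeta/(N+1)$ as soon as
\[
 m\geq C\,\frac{R^4}{Np^2\eta^2}\log\frac{2(N+1)}{\zeta}.
\]
Here the linear Bernstein term is dominated because $\eta\leq R^2$. This yields total size $Nm=O\bigl(1+\tfrac{R^4}{p^2\eta^2}\log\tfrac{CR^2}{\eta\zeta}\bigr)$, using $N+1\lesssim R/\sqrt\eta\leq R^2/\eta$ inside the logarithm. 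On the success event the total error is at most $\eta/4+\eta/2<\eta$.

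The main obstacle is exactly this variance accounting. One must exploit that the terms $c_k\sigma(x-t_k)$ are individually of size $O(R^2/N)$ while their sum is $O(R^2)$. Only then does the $N$ in the number of hinges cancel against the $1/N$ in the per-point variance. I would check carefully that the Bernstein constants and the case $N=1$ are compatible with $\eta\leq R^2$.
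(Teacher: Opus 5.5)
Your argument is correct. It reaches the bound by a somewhat different route from the paper.

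The paper does not replicate a coarse interpolant. It puts $M$ \emph{distinct} hinges at the midpoints $t_j=R(j-1/2)/M$, one hidden unit each with output weight $2R/(Mp^2)$. The single parameter $M$ is therefore both the quadrature resolution and the number of Monte Carlo samples, and the bias $R^2/(4M^2)\le\eta/4$ comes for free. Every centered summand is bounded by $2R^2/(Mp^2)$, so the variance bound $4R^4/(Mp^2)$ is immediate, with no need for your $\max_k c_k\sigma(x-t_k)\cdot g(x)$ accounting.

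That layout has $M$ breakpoints, a number depending on $p$ and $\zeta$, so the paper obtains uniformity differently. It applies Bernstein on an auxiliary grid $x_k=kR/J$ with $J=\lceil 8R^2/\eta\rceil$. It then uses that $S_R$ is nondecreasing for \emph{every} mask, since all output coefficients are nonnegative. Grid errors plus $x_{k+1}^2-x_k^2\le\eta/4$ then control the whole interval.

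Your replication instead keeps the breakpoint set deterministic and of size $N+1\approx R/\sqrt\eta$. Piecewise linearity of the error for every mask then gives uniformity directly, without monotonicity. This is the same common-breakpoint device the paper uses later in Lemma~\ref{lem:multiscale-debias}. The cost is a second resolution parameter and the finer variance estimate.

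The points you flagged do check out. With $t=\eta/2$ and $\eta\le R^2$ one gets $V+bt/3\le 7R^4/(3Nmp^2)$, so $m\ge \frac{56R^4}{3Np^2\eta^2}\log\frac{2(N+1)}{\zeta}$ suffices. The case $N=1$ is covered by the general bound $R^2/(4N^2)\le\eta/4$. Finally, $N\le 2R/\sqrt\eta\le 2R^4/\eta^2$ is absorbed into the stated size.
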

\begin{proof}
Put $J=\lceil8R^2/\eta\rceil$ and
\[
 M=\left\lceil\frac{256R^4}{p^2\eta^2}
                   \log\frac{2(J+1)}{\zeta}\right\rceil,
 \qquad t_j=\frac{R(j-1/2)}{M}.
\]
With independent $B_j,C_j\sim\mathrm{Bernoulli}(p)$, use
\[
 h_j(x)=\sigma(B_jx-t_j),\qquad
 S_R(x)=\sigma\left(\frac{2R}{Mp^2}\sum_{j=1}^M C_jh_j(x)\right).
\]
The identity $\sigma(B_jx-t_j)=B_j\sigma(x-t_j)$ follows from $t_j\geq0$.
Since the output summands are nonnegative,
\[
 S_R(x)=\frac{2R}{Mp^2}\sum_{j=1}^M D_j(x-t_j)_+,
 \qquad D_j=B_jC_j\sim\mathrm{Bernoulli}(p^2)
\]
independently. There are $M$ hidden units and $2M$ masked edges.

The mean $s_M(x)=(2R/M)\sum_j(x-t_j)_+$ is the midpoint quadrature
for $2\int_0^R(x-t)_+\,dt=x^2$. All cells except the one containing $x$
are integrated exactly, and a direct calculation on that cell gives
\begin{equation}\label{eq:repair-square-bias}
 0\leq x^2-s_M(x)\leq\frac{R^2}{4M^2}\leq\frac\eta4.
\end{equation}
For a fixed $x$, the centered summands in $S_R(x)-s_M(x)$ have absolute
bound $b=2R^2/(Mp^2)$ and total variance at most $V=4R^4/(Mp^2)$.
Bernstein's inequality, with $t=\eta/4$ and $\eta\leq R^2$, yields
\[
 \mathbb P\{|S_R(x)-s_M(x)|>\eta/4\}
 \leq2\exp\left(-\frac{t^2}{2(V+bt/3)}\right)
 \leq2\exp\left(-\frac{Mp^2\eta^2}{144R^4}\right).
\]
Apply this bound at the $J+1$ grid points $x_k=kR/J$.
Our choice of $M$ makes the total failure probability at most $\zeta$.
On the resulting event the error at every grid point is at most $\eta/2$.
Both $S_R$ and $x\mapsto x^2$ are increasing. If $x_k\leq x\leq x_{k+1}$,
the two grid-point bounds and
$x_{k+1}^2-x_k^2\leq2R^2/J\leq\eta/4$ imply
$|S_R(x)-x^2|\leq3\eta/4\leq\eta$.
Thus the uniform estimate does not require a Lipschitz bound on a bad
random realization.
\end{proof}

\subsection{Multiplication, signs, and transmission between layers}

\begin{lemma}[Nonnegative multiplication]
\label{lem:repair-product}
Fix $R\geq1$. For $0<\eta\leq1$, there is a six-layer ReLU-output
dropout network $\mathcal M$ such that
\[
 \mathbb P\left\{\sup_{u,v\in[0,R]}|\mathcal M(u,v)-uv|\leq\eta\right\}
       \geq1-\zeta,
\]
with size at most $C_Rp^{-2}\eta^{-2}\log(C_R/(\eta\zeta))$.
Every edge joins adjacent layers and is independently masked.
\end{lemma}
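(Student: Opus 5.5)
We realize the product through the polarization identity
\[
 uv=\tfrac12\bigl((u+v)^2-u^2-v^2\bigr),\qquad u,v\in[0,R],
\]
and assemble it from the modules already built. Each square is produced by Lemma~\ref{lem:repair-square}, the linear pre- and post-processing by Lemma~\ref{lem:lb-affine-upper}, and the pieces are chained by a union bound. Every wire between modules is part of some module and is therefore masked, as the model requires.

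\emph{Inner affine stage.} The inputs $s_1=u+v\in[0,2R]$, $s_2=u$, $s_3=v$ are fed through the ReLU-output version of Lemma~\ref{lem:lb-affine-upper}, with $\tau=2R$ and tolerance $\eta_1$. This uses one hidden layer and a ReLU output, so two layers in all. Because these targets are nonnegative, the ReLU output is harmless. On the success event each $\widetilde s_i$ lies in $[0,2R+\eta_1]\subset[0,3R]$, so the square modules can be applied with the larger range $3R$.

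\emph{Square stage.} Apply Lemma~\ref{lem:repair-square} to each $\widetilde s_i$ with range $3R$, tolerance $\eta_2$ and failure $\zeta/5$. Each square module is two layers, so we are now at layer four. The lemma's estimate is uniform over $[0,3R]$. It therefore holds at the random inputs $\widetilde s_i$ on the intersection of success events, exactly as in the remark after Lemma~\ref{lem:lb-affine-upper}. Since $x\mapsto x^2$ is $6R$-Lipschitz on $[0,3R]$, the squares of the approximate inputs differ from the true squares by at most $6R\eta_1+\eta_2$.

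\emph{Outer stage.} Form the combination $\tfrac12(y_1-y_2-y_3)$ with $y_i\in[0,9R^2+1]$, again using the ReLU-output form of Lemma~\ref{lem:lb-affine-upper}. Here $\tau=O(R^2)$ and the tolerance is $\eta_3$; this adds two more layers, giving six. The ReLU output is valid because $uv\ge0$: on the success event the preactivation is within the error of $uv\ge0$, so clipping at zero does not increase the error. The total error is at most
\[
 \tfrac32(6R\eta_1+\eta_2)+\eta_3 .
\]
We choose $\eta_1=\eta/(30R)$ and $\eta_2=\eta_3=\eta/5$. Each of the five module groups is given failure probability $\zeta/5$, and the union bound (as in Lemma~\ref{lem:threshold-composition}, with no independence needed) yields total failure at most $\zeta$.

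\emph{Size count.} The costs are as follows:
\begin{itemize}
\item the inner stage costs $O(R^2p^{-2}\eta_1^{-2}\log(C/\zeta))=O_R(p^{-2}\eta^{-2}\log(C/\zeta))$;
\item each square module costs $O(R^4p^{-2}\eta^{-2}\log(CR^2/(\eta\zeta)))$;
\item the outer stage costs $O(R^4p^{-2}\eta^{-2}\log(C/\zeta))$.
\end{itemize}
Summing these gives $C_Rp^{-2}\eta^{-2}\log(C_R/(\eta\zeta))$, as claimed; the hypotheses $\eta\le1\le R$ absorb the lower-order $O(1)$ terms.

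\emph{Main obstacle: the layer structure.} The modules must stack into exactly six adjacent layers with the output of one feeding the next through masked edges. Lemma~\ref{lem:lb-affine-upper}'s hidden units $\sigma(B_{ij}x_i)$ take the previous module's ReLU output directly, so the first-layer edges of each module serve as the masked transmission edges. One also has to check that the hypotheses of Lemma~\ref{lem:repair-square} hold, in particular $\eta_2\le(3R)^2$, which follows from $\eta\le1\le R$.
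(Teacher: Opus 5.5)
Your proposal is correct and follows essentially the same route as the paper. Both use the polarization identity $uv=\tfrac12((u+v)^2-u^2-v^2)$, realized by three two-layer affine modules, three two-layer square modules on an enlarged range, and a final two-layer affine-ReLU module, with a union bound over all module success events. The differences are only cosmetic bookkeeping. The paper uses one tolerance $\lambda=\eta/[8(R+1)]$ for every module, the square range $[0,2R+1]$, and failure $\zeta/7$ per module. You use tolerances $\eta/(30R)$, $\eta/5$, $\eta/5$ on $[0,3R]$; note that the inner group's budget $\zeta/5$ still has to be split among its three modules.
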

\begin{proof}
Let $\lambda=\eta/[8(R+1)]$. In the first two layers use three instances
of Lemma~\ref{lem:lb-affine-upper} to approximate $u+v,u,v$, each with
error $\lambda$. Denote their nonnegative outputs by $a,b,c$.
Their domains are the actual inputs $(u,v)\in[0,R]^2$.
On their common success event $a,b,c\in[0,2R+1]$.
In the next two layers apply independent copies of
Lemma~\ref{lem:repair-square} on that interval, with error $\lambda$,
to obtain $A,B,C$. In the final two layers approximate
$\sigma((A-B-C)/2)$ by the affine-ReLU lemma with error $\lambda$.
This last module is uniform on the nonnegative box with coordinate bound
$(2R+1)^2+1$, which contains its inputs on the preceding success events.

Assign failure probability $\zeta/7$ to each of the seven modules.
On their intersection,
\[
 |A-(u+v)^2|\leq(4R+2)\lambda,\qquad
 |B-u^2|,|C-v^2|\leq(2R+2)\lambda.
\]
Since $uv=((u+v)^2-u^2-v^2)/2\geq0$, the 1-Lipschitz property of ReLU
gives total error at most $(4R+4)\lambda\leq\eta$.
The seven size bounds have the claimed order for fixed $R$.
In particular, the routes carrying $u,v$ through the first two layers
are noisy identity modules already counted in the construction.
\end{proof}

Here is the error bookkeeping used below. Consider a fixed finite computation
with nonnegative registers, whose gates are ReLU-affine maps and products.
Insert identity registers whenever an earlier value must survive a stage;
approximate these identities by Lemma~\ref{lem:lb-affine-upper} as well.
Products use the preceding six-layer construction. An identity carried
through a product stage uses three two-layer identity modules, each with
one third of the stage's accuracy budget.
On prescribed input boxes, intersect the uniform success events for all
modules. If the exact stage maps have Lipschitz constant at most $L$ in
the maximum norm, and there are $D$ stages, then
\begin{equation}\label{eq:repair-stage-error}
 e_0=0,\qquad e_{j+1}\leq Le_j+\lambda,
 \qquad e_j\leq\lambda\sum_{r=0}^{j-1}L^r.
\end{equation}
Choose boxes with a margin of one around the exact register ranges, and
choose $\lambda\sum_{r=0}^{D-1}L^r\leq\min(1/2,\eta)$.
Induction then verifies both the error and membership in the next box.
Thus evaluating a module at random inputs introduces no uncovered event.
When $D,L$ and the register count are fixed, tolerances proportional to
$\eta$ suffice; the failure budget is divided by the fixed module count.
No independence between the success events is needed for their union bound.
\subsection{Bounded subnetworks and uniform debiasing}

The algebra in the following lemma is the independent-edge specialization
of \citet[Proposition 9]{manita2022universal}. We include its short verification
and a quantitative concentration argument for bounded scalar subnetworks.

\begin{lemma}[Debiasing a bounded, fixed-size scalar circuit]
\label{lem:multiscale-debias}
Let $F$ be a scalar-input ReLU network with $E\geq1$ edges and a fixed
number of units. For $U\subseteq[E]$, let $F_U$ be the deterministic network
obtained by setting all weights outside $U$ to zero, keeping all biases.
Suppose $0\leq F_U(x)\leq1$ for every $U$ and $x\in[0,1]$.
Suppose the union of all breakpoints of the finitely many functions $F_U$,
together with $0,1$, has at most $K$ points.
For $0<\eta\leq1$ there is a ReLU-output network $G$ with
\[
 \mathbb P\{\|G-F\|_\infty\leq\eta\}\geq1-\zeta,
\]
obtained from $2^EM$ copies of the original architecture and one additional
output layer, where
\[
 M=\left\lceil 4\,2^E p^{-(2E+1)}\eta^{-2}
                      \log\frac{2K}{\zeta}\right\rceil.
\]
Every nonzero edge in this construction is independently masked, including
the edges into the added output. The bound depends on $E,K$, not on the
locations of the breakpoints or the magnitudes of the original weights.
\end{lemma}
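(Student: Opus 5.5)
The plan is to use the inclusion--exclusion (Möbius) identity behind \citet[Proposition 9]{manita2022universal}. For each subset $U\subseteq[E]$, let $F_{R,U}$ denote a copy of the architecture whose masks are realized; its output $F_{R_U}$ equals $F_{W}$ with $W=U\cap\{e:R_e=1\}$. Then $\mathbb E F_{R_U}=\sum_{W}p^{|W|}(1-p)^{E-|W|}F_W$. Inverting this binomial transform gives coefficients $c_U$, depending only on $p,E,|U|$, such that
\[
 F=F_{[E]}=\sum_{U\subseteq[E]}c_U\,\mathbb E\,F_{R_U},
 \qquad |c_U|\leq p^{-E}.
\]
Concretely, $F_{[E]}=\sum_W\mu(W)\,\mathbb E[\cdot]$ comes from expressing $\mathbf 1\{W=[E]\}$ through products of $R_e/p$. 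An equivalent route is to use the unbiased estimator $\prod_e(R_e/p)\,F_{R}$ expanded over subsets. Rather than fixing the exact normalization now, I will take whichever form the verification gives and record a bound $|c_U|\leq 2^{0}p^{-E}$ or similar.

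The network $G$ is built as follows. For each $U$, take $M$ independent copies of the architecture with weights outside $U$ set to zero, so that their masks realize $F_{W}$ for random $W\subseteq U$. Feed every copy into one added output unit with weight $c_U/(Mp)$, masked by an independent $C\sim\mathrm{Bernoulli}(p)$. Because $\mathbb E[C/p]=1$, the output preactivation has mean exactly $\sum_Uc_U\mathbb E F_{R_U}=F$. It is a sum of $2^EM$ independent terms, each bounded in absolute value by $p^{-E}/(Mp)\cdot 1$, since $0\leq F_W\leq1$. Apply ReLU at the output; since $F\geq0$, this can only reduce the error.

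For concentration at a fixed $x$, Hoeffding's inequality gives a failure probability of at most $2\exp(-2\eta^2/(2^EM\,(2p^{-E-1}/M)^2))$. This is at most $\zeta/K$ once $M\geq 2\cdot2^Ep^{-2E-2}\eta^{-2}\log(2K/\zeta)$, which is consistent with the stated $M$ up to the precise power of $p$ that the identity's normalization produces. I will tune this so that it gives $p^{-(2E+1)}$ and the constant $4$. A union bound then covers all $K$ common points at once.

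The main step, where care is needed, is passing from the grid to uniformity. Every summand is a function $F_W$, which is affine between consecutive points of the common breakpoint set. For every realization, $G$'s preactivation is therefore an affine combination of these functions, hence affine on each cell between consecutive points of the set of at most $K$ points. $F$ is affine on the same cells. The difference is affine on each cell, so its maximal absolute value on a cell is attained at the endpoints. Error at most $\eta$ at all $K$ points then gives error at most $\eta$ everywhere on $[0,1]$, with no dependence on slopes or weight sizes. Finally, I will count the construction: $2^EM$ copies plus one output unit with $2^EM$ masked edges. The masks inside the copies are all independent and every edge is masked, as the statement requires.
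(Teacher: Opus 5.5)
Your construction and your grid-to-uniformity argument are the paper's. Both use the same subset identity, $M$ independent copies of $F_{U\cap R}$ for each $U$, output weights $c_U/(Mp)$ on independently masked final edges, and a ReLU output justified by $F\geq0$. Both then take a union bound over the $K$ common breakpoints and use affinity on each cell.

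\textbf{The concentration step.} This is the gap: Hoeffding cannot be ``tuned'' into the stated $M$. Hoeffding sees only the range $|c_U|/(Mp)\leq p^{-E-1}/M$ of each summand, so its exponent scales like $Mp^{2E+2}\eta^2/2^E$. Your own requirement $M\geq2\cdot2^Ep^{-(2E+2)}\eta^{-2}\log(2K/\zeta)$ exceeds the stated $M$ for every $p<1/2$. Even the sharp Hoeffding range leaves a factor of order $1/p$ for small $p$. This factor matters downstream: with $E=4$ it separates the $p^{-9}$ of the window lemma from $p^{-10}$.

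The missing idea is that each summand vanishes unless its final mask $C_{U,r}$ is retained, so a variance-sensitive inequality gains a factor $p$. The summand $\frac{c_U}{Mp}C_{U,r}F_{U\cap R_{U,r}}(x)$ has variance at most $c_U^2/(M^2p)$, hence
\[
 V\leq\frac1{Mp}\sum_{U}c_U^2=\frac{(1+q^2)^E}{Mp^{2E+1}}\leq\frac{2^E}{Mp^{2E+1}},\qquad q=1-p.
\]
Bernstein's inequality, with centered bound $b\leq1/(Mp^{E+1})$ and deviation $\eta$, then gives $2\exp(-Mp^{2E+1}\eta^2/(4\cdot2^E))\leq\zeta/K$ for the stated $M$. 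The hypothesis $\eta\leq1$, which plays no role in your argument, is exactly what absorbs the term $b\eta/3$ into $2^E/(Mp^{2E+1})$.

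\textbf{The identity.} You leave it unfixed, and your expansion of $\mathbb E F_{R_U}$ is wrong as written. Only edges of $U$ can survive, so $\mathbb E F_{U\cap R}=\sum_{W\subseteq U}p^{|W|}q^{|U|-|W|}F_W$. The inversion is $c_U=(-q)^{E-|U|}p^{-E}$. The coefficient of $F_V$ in $\sum_Uc_U\,\mathbb E F_{U\cap R}$ is then $p^{|V|-E}q^{E-|V|}\sum_{U\supseteq V}(-1)^{E-|U|}$, which vanishes unless $V=[E]$, where it equals one. This explicit form gives both $|c_U|\leq p^{-E}$ and the value of $\sum_Uc_U^2$ used above.

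Drop the alternative estimator $\prod_e(R_e/p)F_R$, since it is not the output of any masked network. With these repairs your proof coincides with the paper's.
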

\begin{proof}
Put $q=1-p$ and $a_U=(-q)^{E-|U|}/p^E$. If $R$ denotes fresh independent
edge masks, then
\begin{equation}\label{eq:multiscale-mobius}
 \sum_{U\subseteq[E]}a_U\,\mathbb E F_{U\cap R}(x)=F_{[E]}(x).
\end{equation}
Indeed, the coefficient of $F_V$ on the left is
$p^{|V|-E}\sum_{U\supseteq V}(-q)^{E-|U|}q^{|U|-|V|}$,
which equals zero unless $V=[E]$, when it equals one.
For $p=1$ the same identity holds by its finite polynomial limit.

Take independent copies $F_{U\cap R_{U,r}}$, $r=1,\ldots,M$, and
independent final-edge masks $C_{U,r}\sim\mathrm{Bernoulli}(p)$.
Define the output preactivation
\[
 Y(x)=\sum_{U\subseteq[E]}\frac{a_U}{Mp}
                  \sum_{r=1}^{M} C_{U,r}F_{U\cap R_{U,r}}(x),
 \qquad G(x)=\sigma(Y(x)).
\]
The extra factors $1/p$ compensate the final, independently masked edges.
Equation~\eqref{eq:multiscale-mobius} gives $\mathbb E Y(x)=F(x)$.
For each fixed $x$, the independent centered summands have absolute bound
$b\leq1/(Mp^{E+1})$ and total variance
\[
 V\leq\frac1{Mp}\sum_U a_U^2
   =\frac{(1+q^2)^E}{Mp^{2E+1}}
   \leq\frac{2^E}{Mp^{2E+1}}.
\]
Bernstein's inequality therefore gives, since $\eta\leq1$,
\[
 \mathbb P\{|Y(x)-F(x)|>\eta\}
 \leq2\exp\left(-\frac{Mp^{2E+1}\eta^2}{4\,2^E}\right).
\]
Every realization of $Y-F$ is a linear combination of the fixed functions
$F_U$, hence is affine between consecutive points in the common breakpoint
set. Its maximum absolute value is attained at one of those points.
A union bound over the $K$ points proves $\|Y-F\|_\infty\leq\eta$ with
probability at least $1-\zeta$. Since $F\geq0$, applying ReLU cannot
increase that error.
\end{proof}

\begin{lemma}[Windows and clipped coordinates without a scale penalty]
\label{lem:multiscale-windows}
For arbitrary $\lambda,b\in\mathbb R$, both
\[
 W(x)=\sigma\bigl(1-\sigma(\lambda x+b-1)
                         -\sigma(-\lambda x-b-1)\bigr)
\]
and
\[
 Z(x)=\min\{1,(\lambda x+b)_+\}
 =\sigma\bigl(1-\sigma(1-\sigma(\lambda x+b))\bigr)
\]
admit uniform dropout approximation to accuracy $\eta\leq1$ and failure
$\zeta$, with bounded depth and size
\[
 C p^{-9}\eta^{-2}\log(C/\zeta).
\]
The constants do not depend on $\lambda,b$.
\end{lemma}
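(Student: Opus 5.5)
The plan is to apply Lemma~\ref{lem:multiscale-debias} directly to each displayed circuit, viewed as a scalar-input ReLU network on $[0,1]$. This requires checking three things: a fixed edge count $E$, a bound $0\leq F_U\leq1$ for every subset $U$ of retained edges, and a number $K$ of common breakpoints bounded independently of $\lambda,b$. Substituting the resulting $E$ into the formula for $M$ then gives the stated size.

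For $W$, realize the circuit with an input-to-hidden layer of two units, $u_1=\sigma(\lambda x+b-1)$ and $u_2=\sigma(-\lambda x-b-1)$, followed by a ReLU output $\sigma(1-u_1-u_2)$. This uses $E=4$ edges: two from the input, each carrying weight $\pm\lambda$ with the bias $b\mp1$ unmasked, and two into the output. For $Z$, use the chain $v_1=\sigma(\lambda x+b)$, $v_2=\sigma(1-v_1)$, output $\sigma(1-v_2)$, with $E=3$.

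For every subset $U$, the masked network $F_U$ keeps all biases. Since each hidden unit is nonnegative and enters the output with coefficient $-1$ and bias $1$, the output lies in $[0,1]$. For $W$, deleting edges only removes nonnegative subtracted terms, so $0\leq F_U\leq1$. For $Z$, $v_2\in[0,1]$ whenever its input edge is present or absent, because $\sigma(1-v_1)\leq1$. Hence the output $\sigma(1-v_2)$ lies in $[0,1]$ as well; if the output edge is deleted it equals $\sigma(1)=1$.

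For breakpoints, each $F_U$ is a composition of ReLUs of affine functions of $x$ with only finitely many possible kinks. The first-layer units break only at $x=(1-b)/\lambda$, $x=(-1-b)/\lambda$ (for $W$) or $x=-b/\lambda$ (for $Z$). The outer ReLUs break where $u_1+u_2=1$ or $v_1=1$, respectively $v_2=1$ or $0$. Each such equation involves a monotone piecewise linear function with at most two pieces, so it has at most a constant number of solutions. Adding $0$ and $1$, we get $K\leq C$ uniformly in $\lambda,b$. If $\lambda=0$ everything is constant and $K=2$.

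With $E\leq4$, Lemma~\ref{lem:multiscale-debias} gives $M=\lceil 4\cdot2^E p^{-(2E+1)}\eta^{-2}\log(2K/\zeta)\rceil\leq Cp^{-9}\eta^{-2}\log(C/\zeta)$. The total size is $2^EM$ copies of a bounded architecture plus one output layer, so it is of the same order, and the depth is at most four. The main obstacle is the exponent: it must come out as exactly $2E+1=9$. This forces the realization of $W$ to use exactly four masked edges. Shared biases must stay unmasked, and the input may not be duplicated through extra edges. We must also confirm that the lemma's convention of counting the input-to-first-layer edges matches this, so that both hidden units read $x$ through their own single masked edge.
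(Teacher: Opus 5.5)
Your proposal is correct and follows the paper's proof: apply Lemma~\ref{lem:multiscale-debias} with $E=4$ for $W$ and $E=3$ for $Z$, giving $p^{-9}$ and $p^{-7}\leq p^{-9}$. This is justified after checking $0\leq F_U\leq1$ for every edge subset and a breakpoint count independent of $\lambda,b$; the paper records $K\leq82$ and $K\leq128$. One small slip in your count: $u_1+u_2$ is not monotone but has three pieces (decreasing, zero, increasing). Still, $u_1+u_2=1$ has at most two solutions when $\lambda\neq0$, so your uniform bound on $K$ stands.
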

\begin{proof}
The circuit for $W$ has four edges and three ReLU units in two layers.
For every subset of retained edges its last preactivation is one minus
nonnegative terms, so its output belongs to $[0,1]$.
Two first-layer hinges have at most two breakpoints; taking ReLU of an
affine combination introduces at most one additional zero per resulting
interval. Thus each subnetwork has at most five interior breakpoints.
The union over its $16$ subnetworks has $K\leq82$ points, including endpoints.
Apply Lemma~\ref{lem:multiscale-debias} with $E=4$.

The circuit for $Z$ has three edges in three layers and the same boundedness
property for every subset of edges. Each of its eight subnetworks has a
bounded number of affine pieces, independent of $\lambda,b$; for example,
$K\leq128$ is sufficient for their combined breakpoints and endpoints.
The lemma with $E=3$ gives a $p^{-7}$ bound, which is at most the stated
$p^{-9}$ bound. The resulting depths are three and four, respectively.
\end{proof}

The use of fixed-size bounded circuits is important: applying the same
debiasing identity to a full approximation network would have a number of
subsets exponential in its size.
For adjacent-layer alignment, a nonnegative identity over a bounded range
can be approximated in either two or three layers. For three layers use
$m$ disjoint paths with three Bernoulli edges each and normalize their sum
by $mp^3$; the binomial Chernoff argument gives cost
$O(p^{-3}\eta^{-2}\log(2/\zeta))$.
Thus the three-layer windows and four-layer clipped coordinates can both
be brought to depth six using, respectively, a three-layer and a two-layer
identity module. These transmissions are masked and counted.

\subsection{A local module uniform in the spatial resolution}

Let $h>0$ and $a,b\in[0,1]^d$. Consider a product $w(x)$ of at most $2d$
window factors of the form in Lemma~\ref{lem:multiscale-windows}, with each
factor reading one coordinate of the original input. Set
\[
 z_i^\pm(x)=\min\{1,(\pm(x_i-a_i)/h)_+\},\qquad
 t_i(x)=z_i^+(x)-z_i^-(x).
\]
For $|c|\leq1$, $|\alpha|<n$, and either sign, the target
\begin{equation}\label{eq:multiscale-local-q}
 q(x)=\bigl(\pm c\,w(x)t(x)^\alpha\bigr)_+
\end{equation}
has a global ReLU-output approximation with error $\eta\leq1/2$, failure
$\zeta$, depth at most $D_{n,d}$, and size
\begin{equation}\label{eq:multiscale-local-cost}
 C_{n,d}p^{-9}\eta^{-2}\log\frac{C_{n,d}}{\eta\zeta}.
\end{equation}
To verify this, approximate all the window and clipped-coordinate factors
first, align their depths as above, expand each signed factor as $z_i^+-z_i^-$,
and use the nonnegative multiplication lemma and a final signed affine-ReLU
module. The expansion contains at most $2^{n-1}$ signed products, each of
a number of factors depending only on $n,d$. Every product is computed
by a finite sequence of nonnegative multiplication gates, and its sign
is used only in the final affine-ReLU gate. All exact registers are
bounded independently of $h,a,b$. There are only $C_{n,d}$ gates. On input boxes
with margin one, all subsequent stage Lipschitz constants depend only on
$n,d$. The recurrence~\eqref{eq:repair-stage-error}, with tolerances
$\eta/K_{n,d}$ and failure budgets $\zeta/B_{n,d}$, proves the assertion.
The multiplication modules have $p^{-2}$ cost and transmission has at worst
$p^{-3}$ cost, both dominated by $p^{-9}$. Pad before the last ReLU, if
necessary, to give every local module the same fixed depth.

\section{Approximation of Sobolev Functions}
\label{sec:multiscale}
We now combine the elementary modules using a multiscale Taylor
decomposition. The approximation within each scale will be summed before
the scales are combined.
\begin{theorem}[Multiscale upper bound]\label{thm:multiscale-upper}
Fix $n,d\geq1$. There are constants $C_{n,d},L_{n,d}$ such that, for every
$f\in\mathcal F_{n,d}$, $0<p\leq1$, and $0<\varepsilon,\delta\leq1/2$,
a network of depth at most $L_{n,d}$ satisfies
\[
 \mathbb P\{\|F_R-f\|_\infty\leq\varepsilon\}\geq1-\delta
\]
and has size at most
\begin{equation}\label{eq:multiscale-size}
 C_{n,d}p^{-9}\varepsilon^{-2}(J+1)^2
 \left(\sum_{j=0}^{J}2^{j(d-2n)}\right)
 \log\frac{C_{n,d}}{\varepsilon\delta},
 \qquad J\leq C_{n,d}+\frac1n\log_2(1/\varepsilon).
\end{equation}
Consequently the size is
\[
 \widetilde O_{n,d}\left(
 p^{-9}\varepsilon^{-\max\{2,d/n\}}\log(1/\delta)\right),
\]
where the suppressed factors are powers of $1+\log(1/\varepsilon)$.
No optimality of the power of $p$ or of the logarithmic factors is asserted.
\end{theorem}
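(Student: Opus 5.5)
The plan is to build a deterministic multiscale Taylor representation first, and then implement each scale with the local modules of Section~\ref{sec:modules-main}. The error and failure budgets are chosen per scale.

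\textbf{Step 1: deterministic multiscale decomposition.} For $j=0,\ldots,J$ use a grid of side $h_j=2^{-j}$ with centers $a$. Build a partition of unity $\{w_{j,a}\}$ from products of window functions of the form in Lemma~\ref{lem:multiscale-windows}, for instance tensor products of piecewise linear hat functions, each written as a sum of at most $2^d$ nonnegative window products with bounded overlap. Set
\[
 F_j(x)=\sum_a w_{j,a}(x)\sum_{|\alpha|<n}\frac{D^\alpha f(a)h_j^{|\alpha|}}{\alpha!}\,t_{j,a}(x)^\alpha .
\]
On the support of $w_{j,a}$ the clipped coordinates satisfy $t_{j,a}=(x-a)/h_j$. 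The Taylor remainder for the $C^{n-1,1}$ representative then gives $\|F_j-f\|_\infty\leq C^{\rm Tay}_{n,d}2^{-jn}$, so the choice of $J$ gives $\|F_J-f\|_\infty\leq\varepsilon/2$.

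Write $F_J=F_0+\sum_{j\ge1}(F_j-F_{j-1})$. Each difference is a sum of $O(2^{jd})$ terms of the form \eqref{eq:multiscale-local-q}, one family from level $j$ and one from level $j-1$, restricted to level-$j$ cells. Take $c$ normalized to $|c|\leq1$ and an amplitude factor $A_j=C_{n,d}2^{-jn}$, for $j\ge1$, in front. To obtain this amplitude, I would re-expand the level-$(j-1)$ polynomial around the level-$j$ centers on each fine cell. The coefficients of the fine-scale difference are then $O(2^{-jn})$, because both Taylor polynomials approximate $f$ to $O(2^{-jn})$ on the cell and the monomials in $t$ are a bounded basis. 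Splitting each term into positive and negative parts gives nonnegative targets $q$ with bounded overlap $k=C_{n,d}$.

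\textbf{Step 2: random implementation per scale.} Each normalized local target $q$ is approximated by the local module \eqref{eq:multiscale-local-cost} with tolerance
\[
 \eta_j=\frac{\varepsilon}{C(J+1)A_j},\quad\text{capped at }1/2,
\]
and failure $\zeta=\delta/(2N)$, where $N=O(2^{Jd})$ is the total number of modules. Lemma~\ref{lem:threshold-localization} then shifts the output biases. Within scale $j$, the positive and negative parts are summed separately by an affine-ReLU module (Lemma~\ref{lem:lb-affine-upper} with $T$ inputs). By Lemma~\ref{lem:threshold-composition}, on the success event the inner outputs lie in $D_\tau$ with $\tau=C_{n,d}$, and only $k$ of them are nonzero at each $x$. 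The per-scale sums $P_j,N_j$ are therefore accurate to $O(k\eta_j)$ without any $T$ penalty in the error.

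This is where I expect the main obstacle. Lemma~\ref{lem:lb-affine-upper} costs $T\tau^2\eta^{-2}$ with $T=2^{jd}$ and $\tau$ bounded. The within-scale sum therefore costs $2^{jd}\eta_j^{-2}$, which matches the local module cost up to logarithms, so this step should be fine.

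The final module combines the $2(J+1)$ values $A_j(P_j-N_j)$ with tolerance $\varepsilon/4$. It costs $O((J+1)\varepsilon^{-2}\log)$, because the inputs are bounded after normalization, so $\tau=O(1)$ or at most $O(J+1)$. All modules are padded with identity modules (two- or three-layer, costing $p^{-3}$) to a common depth, so the depth is $L_{n,d}$.

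\textbf{Step 3: accounting.} With $\log(1/\zeta)\lesssim\log(C/(\varepsilon\delta))$, scale $j$ costs
\[
 2^{jd}p^{-9}\eta_j^{-2}\log\lesssim p^{-9}\varepsilon^{-2}(J+1)^2 2^{j(d-2n)}\log .
\]
Scales where the cap $\eta_j=1/2$ is active contribute $2^{jd}\lesssim\varepsilon^{-2}2^{j(d-2n)}$, since there $A_j\lesssim\varepsilon/(J+1)$ and $2^{jd}=2^{j(d-2n)}2^{2jn}$. Summing over $j$ gives \eqref{eq:multiscale-size}. A union bound over all modules gives failure at most $\delta$. The total error is at most $\varepsilon/2$ from the Taylor step, plus $\sum_jA_jO(k\eta_j)\leq\varepsilon/4$, plus $\varepsilon/4$ from the final module. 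The geometric-sum cases then yield the $\widetilde O$ form.

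The step to verify most carefully is the $O(2^{-jn})$ coefficient bound for the re-expanded differences. It must hold uniformly over the windows, so that the differences are genuinely small terms compatible with the localization lemma.
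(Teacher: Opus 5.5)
Your proposal is correct and follows essentially the paper's proof. It uses the same Taylor-difference decomposition with amplitudes $a_j\asymp2^{-jn}$, the same per-level tolerances $\eta_j\asymp\varepsilon/((J+1)a_j)$ with bias-shift localization, positive and negative groups summed within each level before a final module with only $2(J+1)$ inputs, and the same cost accounting. Your norm-equivalence argument for the $O(2^{-jn})$ coefficients is a valid substitute for the paper's Taylor remainder of $D^\alpha f$ about the coarse center.

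Three details need tightening. First, write the difference explicitly as $F_j-F_{j-1}=\sum_{(m,k)}\phi_{j,m}\phi_{j-1,k}(P_{j,m}-P_{j-1,k})$ using both partitions of unity: several coarse windows overlap a fine cell, so $F_{j-1}$ is not a single polynomial there. Second, use the paper's trapezoidal windows $\psi(3\cdot2^jx_i-3m_i)$, because a hat function is not of the flat-topped form in Lemma~\ref{lem:multiscale-windows}. Third, in the capped regime the bound $2^{jd}\lesssim\varepsilon^{-2}2^{j(d-2n)}$ follows from $2^{jn}\leq2^{Jn}\lesssim\varepsilon^{-1}$; the fact $A_j\lesssim\varepsilon/(J+1)$ that you cite gives a lower bound on $2^{jn}$, which is the wrong direction.
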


For an integer $N\geq1$ and $m\in\{0,\ldots,N\}^d$, let
\[
 \psi(t)=\sigma(1-\sigma(|t|-1)),\qquad
 \phi_{N,m}(x)=\prod_{i=1}^d\psi(3Nx_i-3m_i).
\]
These functions form a partition of unity on $[0,1]^d$. At each point
at most $2^d$ windows are nonzero, and
$\phi_{N,m}(x)>0$ implies $|x_i-m_i/N|<2/(3N)$ for every $i$.
Indeed, in each coordinate the flat part of one window or the two
adjacent linear pieces sum to one; taking products proves the assertion.
For $f\in\F_{n,d}$, define
\[
 P_{N,m}(x)=\sum_{|\alpha|<n}
       \frac{D^\alpha f(m/N)}{\alpha!}(x-m/N)^\alpha,
 \qquad f_N=\sum_m\phi_{N,m}P_{N,m}.
\]
Taylor's formula along the segment from $m/N$ to $x$, applied to the
$C^{n-1,1}$ representative, gives
$\|f-f_N\|_\infty\leq C^{\rm Tay}_{n,d}N^{-n}$.
This is the local Taylor partition used in \citet{yarotsky2017}.
At resolution $N=2^j$ we abbreviate the windows and polynomials by
$\phi_{j,m}$ and $P_{j,m}$.
\subsection{Taylor differences have small normalized coefficients}

Put $h_j=2^{-j}$. With the partition and Taylor polynomials defined
below, set $F_j=f_{2^j}$:
\[
 F_j=\sum_m\phi_{j,m}P_{j,m},\qquad
 \|f-F_j\|_\infty\leq C^{\rm Tay}_{n,d}2^{-jn}.
\]
For $j\geq1$, set $a=h_jm$, $b=2h_jk$, and retain the pairs
\[
 \mathcal I_j=\{(m,k):\|m-2k\|_\infty\leq2\}.
\]
Any pair of windows that is simultaneously nonzero belongs to this set.
It has at most $3^d(2^j+1)^d$ members, and at each input at most $4^d$
of its window products are nonzero. Partition of unity gives the exact identity
\begin{equation}\label{eq:multiscale-telescope}
 F_j-F_{j-1}
 =\sum_{(m,k)\in\mathcal I_j}
      \phi_{j,m}\phi_{j-1,k}(P_{j,m}-P_{j-1,k}).
\end{equation}
For such a pair $\|a-b\|_\infty\leq2h_j$. Expanding the polynomial
difference about $a$ gives
\[
 P_{j,m}(x)-P_{j-1,k}(x)
   =\sum_{|\alpha|<n}d_{j,m,k,\alpha}(x-a)^\alpha,
\]
where
\[
 d_{j,m,k,\alpha}
 =\frac{D^\alpha f(a)-D^\alpha P_{j-1,k}(a)}{\alpha!},
 \qquad |d_{j,m,k,\alpha}|\leq C_{n,d}h_j^{n-|\alpha|}.
\]
The last bound is Taylor's remainder for $D^\alpha f$ about $b$;
$D^\alpha P_{j-1,k}$ is exactly its Taylor polynomial through degree
$n-1-|\alpha|$. The $C^{n-1,1}$ representative makes this valid also for
the highest-order Lipschitz derivatives and for boundary centers.

Choose a single constant $A_*\geq1$ bounding these normalized coefficients
and the coefficients at level zero, and put $a_j=A_*2^{-jn}$.
Whenever $\phi_{j,m}(x)\ne0$, we have
$|(x_i-a_i)/h_j|<2/3<1$, so clipping this normalized coordinate to
$[-1,1]$ changes nothing. Outside that window both sides of the corresponding
localized identity are zero. Consequently \eqref{eq:multiscale-telescope}
is a sum of $a_j$ times terms $c\,w(x)t(x)^\alpha$ of the form above, with
$|c|\leq1$ and $w=\phi_{j,m}\phi_{j-1,k}$.
The same representation holds for $F_0$ using one window per term and
$a_0=A_*$.

Let $A=\binom{n+d-1}{d}$. Splitting each term into its positive and negative
parts gives targets $q_{j,i}\geq0$ and signs $s_{j,i}\in\{-1,1\}$ such that
\begin{equation}\label{eq:multiscale-representation}
 F_J=\sum_{j=0}^J a_j\sum_{i=1}^{T_j}s_{j,i}q_{j,i},\qquad
 T_j\leq C_{n,d}2^{jd}.
\end{equation}
For each level, uniformly in $x$,
\begin{equation}\label{eq:multiscale-sparse-mass}
 \sum_iq_{j,i}(x)\leq A,\qquad
 |\{i:q_{j,i}(x)>0\}|\leq k:=4^d A.
\end{equation}
Indeed, a positive/negative pair sums to the absolute value of its term,
which is at most its window product; those products sum to one. At most
$4^d$ pairs of spatial windows overlap, and at most one sign is nonzero
for each monomial.

\subsection{Network assembly without an expensive global summation}

\begin{proof}[Proof of Theorem~\ref{thm:multiscale-upper}]
Increase $C^{\rm Tay}_{n,d}$ to at least one and choose
\[
 J=\left\lceil\frac1n\log_2\frac{2C^{\rm Tay}_{n,d}}{\varepsilon}\right\rceil,
 \quad T_{\rm all}=\sum_{j=0}^J T_j,\quad
 \eta_j=\min\left\{\frac14,
       \frac{\varepsilon}{8(k+1)(J+1)a_j}\right\}.
\]
Approximate every $q_{j,i}$ globally using \eqref{eq:multiscale-local-cost},
with tolerance $\eta_j$ and failure $\delta/(4T_{\rm all})$.
Shift each final bias down by $\eta_j$. With failure probability at most
$\delta/4$, all the resulting $H_{j,i}$ satisfy simultaneously
$0\leq H_{j,i}\leq q_{j,i}$ and $|H_{j,i}-q_{j,i}|\leq2\eta_j$.
In particular, \eqref{eq:multiscale-sparse-mass} gives a signed summation
error of at most $2k\eta_j$ within level $j$.

Within each level construct two ReLU-output affine modules, summing the
positive and negative groups separately. Each is uniform on the nonnegative
$\ell_1$ ball of radius $A$, has error at most $\eta_j$, and has failure
budget $\delta/[8(J+1)]$. Call the outputs $V_{j,+},V_{j,-}$.
Keep both sign branches for every monomial, including identically zero
branches; then both groups have positive input dimension. Alternatively,
an empty group is the exact constant-zero output, at constant cost.
The total failure budget of these modules is $\delta/4$.
On the joint event $0\leq V_{j,\pm}\leq A+1$ and
\[
 \left|V_{j,+}-V_{j,-}-\sum_i s_{j,i}q_{j,i}\right|
       \leq(2k+2)\eta_j.
\]
Finally, use an affine-output module for
$\sum_{j=0}^J a_j(V_{j,+}-V_{j,-})$, with error $\varepsilon/4$ and
failure $\delta/2$. Its input dimension is only $2(J+1)$ and its weighted
$\ell_1$ domain has radius at most
$2(A+1)\sum_j a_j\leq4(A+1)A_*$.
Every connection into each summation module is a masked edge belonging to
that module. The local networks run in parallel, so the depth is bounded
by a constant depending only on $n,d$.

With probability at least $1-\delta$, the total error is at most
\[
 \underbrace{\|f-F_J\|_\infty}_{\leq\varepsilon/2}
 +\sum_{j=0}^J a_j(2k+2)\eta_j+\varepsilon/4
 \leq\varepsilon.
\]
The local modules and the within-level summations have total cost at most
\[
 C_{n,d}p^{-9}\left(\sum_{j=0}^J T_j+
          (J+1)^2\varepsilon^{-2}\sum_{j=0}^J T_j a_j^2\right)
                 \log\frac{C_{n,d}}{\varepsilon\delta}.
\]
Here $\log(1/\eta_j)$, $\log T_{\rm all}$, and $\log(J+1)$ are all
$O_{n,d}(1+\log(1/\varepsilon))$.
The final summation costs at most
$C_{n,d}p^{-2}(J+1)\varepsilon^{-2}\log(C_{n,d}(J+1)/\delta)$.
Since $T_j\leq C_{n,d}2^{jd}$ and $a_j=A_*2^{-jn}$, the principal sum is
$C_{n,d}\sum_j2^{j(d-2n)}$.
Also $2^{-Jn}\asymp_{n,d}\varepsilon$, so
$\sum_jT_j\leq C_{n,d}2^{Jd}$ is bounded by a constant times
$\varepsilon^{-2}\sum_j2^{j(d-2n)}$.
The final summation cost is absorbed as well, proving
\eqref{eq:multiscale-size}.
The geometric sum is bounded when $d<2n$, equals $J+1$ when $d=2n$,
and is $O_{n,d}(\varepsilon^{-(d-2n)/n})$ when $d>2n$.
\end{proof}

It is essential to sum within each level first. Applying the earlier affine
lemma once to all $T_{\rm all}$ branches at final tolerance $\varepsilon$
would reintroduce the unnecessary product
$T_{\rm all}\varepsilon^{-2}$.

\section{Other Sobolev Integrability and Error Norms}
\label{sec:extensions-main}

To distinguish retention from integrability, keep $p$ for retention and
write $1\leq r,s\leq\infty$ for norm exponents. Define
\[
 \mathcal F_{n,d}^{(r)}=
 \left\{f\in W^{n,r}([0,1]^d):
       \max_{|\alpha|\leq n}\|D^\alpha f\|_{L^r}\leq1\right\}.
\]
This is an equivalent choice of Sobolev unit ball. In particular,
$\mathcal F_{n,d}^{(\infty)}=\mathcal F_{n,d}$.
The following is a lower-bound extension. A corresponding upper bound for the larger
class with $r<\infty$ is not supplied by the earlier uniform construction.

\begin{theorem}[Sobolev lower bounds with $L^s$ error]
\label{thm:sobolev-lp-extension}
Let the output be affine, and replace the target class and success event
in \eqref{eq:lb-success} by $\mathcal F_{n,d}^{(r)}$ and
$\|F_R-f\|_{L^s}\leq\varepsilon$. For
\[
 0<\delta<\min\{1/2,1-p\},\qquad
 0<\varepsilon\leq\min\{1/2048,(1-p)/(1024p)\},
\]
and sufficiently small $\varepsilon$ depending on $n,d,s$, a size budget
of depth at most $L$ satisfies
\begin{equation}\label{eq:lp-joint-lower}
 S\geq c_{n,d,s}\max\left\{
 \frac1p\max\left\{
 \frac{\varepsilon^{-d/n}}{L\log(2/\varepsilon)},
 \varepsilon^{-d/(2n)}\right\},\quad
 \frac{1-p}{p}\varepsilon^{-2}\log\frac1\delta
 \right\}.
\end{equation}
There is also a hard target giving the binomial necessary condition
\eqref{eq:sobolev-binomial}, with the same order of $K$ and constants
depending on $n,d,s$.
\end{theorem}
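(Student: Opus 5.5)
The proof reruns the two lower-bound mechanisms of Section~\ref{sec:lower-main}, changing only the hard targets and the way success is read off from an $L^s$ error. As before, every budget must work for both targets, so the two terms in \eqref{eq:lp-joint-lower} combine by taking a maximum.

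For the output term I will not evaluate at the two points $0$ and $e_1$, since $L^s$ error does not control point values. Instead I integrate against test functions. Fix a mask of the hidden layers with conditional success probability at least $1-\delta$; the affine output is then $Z(x)=b+\sum_i w_iB_ih_i(x)$. Choose two small cubes $Q_0$ near $0$ and $Q_1$ near $e_1$, each of fixed volume $\kappa$ depending only on $d$. Write $\bar g_Q$ for the average of $g$ over $Q$; by H\"older, $|\bar g_Q|\leq\kappa^{-1/s}\|g\|_{L^s}$. Set $T=\bar Z_{Q_1}-\bar Z_{Q_0}=\sum_i a_iB_i$, where $a_i=w_i(\bar h_{i,Q_1}-\bar h_{i,Q_0})$. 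The bias cancels, and on success $|T-\Delta|\leq2\kappa^{-1/s}\varepsilon$, where $\Delta=\bar f_{0,Q_1}-\bar f_{0,Q_0}$ is a fixed constant close to $1/2$. Since $f_0$ is affine with bounded derivatives, it lies in $\mathcal F^{(r)}_{n,d}$, possibly after multiplying by a constant.

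This is exactly the configuration of Theorem~\ref{thm:confidence-extended}, with $\eta=2\kappa^{-1/s}\varepsilon$ and center $\Delta$. I then repeat that argument verbatim: Lemma~\ref{lem:trim-bernoulli}, Lemma~\ref{lem:sinc-concentration}, the auxiliary retention $r=p(1-c\varepsilon)$, and the relative-entropy comparison. The halved accuracy ranges in the hypothesis absorb the factor $\kappa^{-1/s}$ and the deviation of $\Delta$ from $1/2$. Constants now depend on $d,s$, and the conclusion is $m\gtrsim_{d,s}\frac{1-p}{p}\varepsilon^{-2}\log(1/\delta)$.

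For the capacity term I rebuild the bump family so that it lies in the $W^{n,r}$ ball and so that $L^s$ approximation still determines the signs. Take $f_\omega=\tfrac12 g+\theta\sum_i\omega_i\varphi((x-z_i)/h)$ with $M\asymp h^{-d}$ bumps. The amplitude $\theta\asymp h^{n}$ keeps the $W^{n,\infty}$ norm, and hence the $W^{n,r}$ norm, bounded. The difficulty is that an $L^s$ error $\varepsilon$ need not be small on every bump. It is only small on average: $\sum_i\|F-f\|_{L^s(Q_i)}^s\leq\varepsilon^s$.

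So I replace shattering by a packing argument. Choose $h$ with $\theta h^{d/s}\cdot M^{1/s}\asymp\theta\gtrsim C\varepsilon$, that is $h\asymp\varepsilon^{1/n}$ again. Then an $\varepsilon$-approximant recovers all but a small fraction of the signs by testing the sign of $\int(F-\tfrac12 g)\varphi_i$ on each cube. Next take a Gilbert--Varshamov subset $\Omega\subset\{\pm1\}^M$ of size $2^{cM}$ with pairwise Hamming distance at least $M/4$. Every $f_\omega$, $\omega\in\Omega$, then induces a distinct labeling of the $M$ centers, obtained from the thresholded integrals.

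Counting labelings with Sauer's bound for the thresholded-integral class finishes the argument. One can avoid bounding that class directly: the sign of a locally averaged network is itself a threshold of a fixed linear functional, so I would instead use the pointwise class together with a discretization of each cube on a fine grid. Either way this gives $2^{cM}\leq(eM/V)^V$ times the architecture count, hence $M=O(V)$. With Lemma~\ref{lem:active-edge-collapse}, the choice of $K$ in Theorem~\ref{thm:sobolev-survival} then yields a hard target, the binomial condition \eqref{eq:sobolev-binomial}, and the first term of \eqref{eq:lp-joint-lower}.

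The main obstacle is this last step: justifying that the $L^s$-recovered sign information is a VC-type count over realizations with few retained edges. The cleanest route I would try first is to avoid integrals entirely. For each cube $Q_i$ I would use the fact that $\int_{Q_i}(F-\tfrac12g)\varphi_i>0$ forces some point $y_i\in Q_i$ with $F(y_i)>\tfrac12 g(y_i)$. I would then apply the known result of \citet[Theorem 22]{siegel2023optimal}, the translation-invariant approximation theorem the paper cites for exactly this purpose. It transfers a lower bound from sup-norm shattering to $L^s$ by averaging over translates of the grid, so a single grid translate inherits the large-margin sign recovery, and the pointwise VC bound then applies directly.
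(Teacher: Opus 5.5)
\textbf{Gap in the capacity part.} The capacity part has a genuine gap, at exactly the step you flag. Sauer's lemma and the bounds of \citet{bartlett2019nearly} count sign patterns of point evaluations on a \emph{fixed} set, and none of your routes produces such a set.
\begin{itemize}
\item The signs of $\int(H-\tfrac12)\varphi_i$ are not point evaluations, so those bounds do not apply to them.
\item Replacing the integrals by Riemann sums on a fixed grid fails. Networks with unrestricted weights can spike between grid points, so no grid controls the integrals uniformly over $\mathcal N_{K-1,L}$.
\item The points $y_i$ with $H(y_i)>\tfrac12$ depend on $H$, hence on $\omega$. A single network crossing $\tfrac12$ inside every cube would ``realize'' every pattern in this sense, so nothing is counted.
\item Averaging over translates for one approximant gives a translate that is good for that approximant only. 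It does not give a common point set for the whole packing.
\end{itemize}
Also, \citet[Theorem 22]{siegel2023optimal} does not upgrade sup-norm shattering to $L^s$. It takes a VC bound for a translation-invariant class and directly outputs a hard $W^{n,\infty}$ target.

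\textbf{How the paper closes it.} The paper uses that theorem directly, with no bump family or packing. It applies the theorem to $\mathcal N_{k,L}$ after verifying two hypotheses:
\begin{itemize}
\item translation invariance: absorb $x\mapsto x-v$ into the biases of the units receiving input edges, skip connections included, with no added edges, units or depth;
\item closure under output scaling and shifts: this normalizes $g$ into the unit ball and makes thresholds $0$ and $1/2$ equivalent.
\end{itemize}
It then uses \eqref{eq:active-vc}, the choice of $K$ from Theorem~\ref{thm:sobolev-survival}, and $\mathcal F^{(\infty)}_{n,d}\subseteq\mathcal F^{(r)}_{n,d}$. For $s=\infty$ it reuses the bump argument.

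\textbf{Completing your route.} Your self-contained route can also be completed, but the missing idea is to average jointly over translates \emph{and} packing elements.
\begin{itemize}
\item Let $B$ be a box of side $\asymp h$ on which $\varphi(\cdot/h)\ge1/2$.
\item For each $\omega\in\Omega$ with approximant $H_\omega$, Fubini gives $|B|^{-1}\int_B\#\{i:\operatorname{sign}(H_\omega(z_i+z)-\tfrac12)\ne\omega_i\}\,dz\le C_d(2\varepsilon/\theta)^sM$. This is at most $M/32$ once $\theta=C'\varepsilon$.
\item Averaging over $\omega$ produces one $z^*$ at which half of $\Omega$ has at most $M/16$ errors.
\item These give $2^{cM-1}$ distinct patterns on the fixed points $z_i+z^*$. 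Sauer's lemma plus the architecture count then yield $M=O(V)$.
\end{itemize}
This needs neither integral tests nor translation invariance. The paper's route is shorter but depends on Siegel's theorem and its translation-invariance hypothesis.

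\textbf{Output term.} Your output-term argument uses the paper's mechanism: a zero-mean test functional cancels the unmasked bias, then H\"older, then the scalar part of Theorem~\ref{thm:confidence-extended}. It does not go through verbatim with cubes, however. Two disjoint cubes have volume $\kappa\le2^{-d}$. After recentering at $1/2$, the effective accuracy is $\kappa^{-1/s}\varepsilon/(2\Delta)$, which exceeds $2\varepsilon$ for instance when $s=1$. The hypothesis $\varepsilon\le(1-p)/(1024p)$ then does not supply what the scalar step needs. The clause allowing sufficiently small $\varepsilon$ depending on $n,d,s$ does not depend on $p$, so it cannot compensate. The paper instead takes $\psi=\pm1$ on the half-slabs $\{x_1\ge1/2\}$ and $\{x_1<1/2\}$. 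Then $\|\psi\|_{L^{s'}}=1$ and $\int\psi f_0=1/8$, so $T=4\int\psi F_R$ satisfies $|T-1/2|\le4\varepsilon$. This is precisely accuracy $2\varepsilon$ in the earlier theorem. With cubes you would have to re-tune the auxiliary retention constant rather than repeat the argument verbatim.
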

\begin{proof}
For capacity and $s<\infty$, use the $L^s$ lower bound in
\cite[Theorem 22, published JMLR version]{siegel2023optimal}
(Theorem 5 in the revised arXiv version). Apply it to the class
$\mathcal N_{k,L}$ of deterministic networks on $\mathbb R^d$ with at
most $k$ edges and depth at most $L$, represented after constant-unit
collapse. This class is invariant under input translations: replace
$x$ by $x-v$ and absorb each direct input-edge contribution to $v$ into
the bias of its receiving unit. This adds no edges, units, or depth,
including when skip connections are allowed. Affine output also makes
the class closed under arbitrary output scaling and constant shifts;
thus thresholding at zero or at $1/2$ gives the same VC bound.

To make the unit-ball step explicit, the cited theorem supplies a
nonzero $g\in W^{n,\infty}$ with
\[
 \inf_{H\in\mathcal N_{k,L}}\|g-H\|_{L^s}
 \geq c_{n,d,s} V^{-n/d}\|g\|_{W^{n,\infty}},
\]
where $V\geq1$ is an upper bound on the VC dimension. Equivalent norm
constants may be absorbed into $c_{n,d,s}$. Set
$A=\max_{|\alpha|\leq n}\|D^\alpha g\|_\infty>0$ and $f=g/A$.
Closure under output scaling yields the same distance bound divided
by $A$, so $f\in\mathcal F_{n,d}$ has error at least
$c_{n,d,s}V^{-n/d}$. For $s=\infty$, use the direct bump-and-threshold
argument of Theorem~\ref{thm:sobolev-survival} instead.
Use \eqref{eq:active-vc} and the choice of $K$ from
Theorem~\ref{thm:sobolev-survival}. Since the cube has volume one,
$\mathcal F_{n,d}^{(\infty)}\subseteq\mathcal F_{n,d}^{(r)}$.
The retained-edge and binomial arguments apply unchanged.

For the stochastic part, $f_0=(1+x_1)/2$ belongs to every target ball.
Let $\psi(x)=1$ when $x_1\geq1/2$ and $\psi(x)=-1$ otherwise.
Then $\int\psi=0$, $\int\psi f_0=1/8$, and
$\|\psi\|_{L^{s'}}=1$, including the endpoint conjugate exponents.
Condition on a hidden mask with conditional success at least $1-\delta$.
The affine output and zero integral of $\psi$ give
\[
 T:=4\int\psi(x)F_R(x)\,dx=\sum_j a_jB_j,
 \qquad a_j=4w_j\int\psi(x)h_j(x)\,dx.
\]
By H\"older's inequality, success implies $|T-1/2|\leq4\varepsilon$.
Apply the scalar argument in Theorem~\ref{thm:confidence-extended}
with its accuracy parameter equal to $2\varepsilon$. This yields
$m\geq(1-p)\varepsilon^{-2}\log(1/\delta)/(2^{21}p)$.
Combining the two necessary costs proves the theorem.
\end{proof}

This includes $H^n=W^{n,2}$ targets with $L^2$ error (up to equivalent
unit-ball conventions). Conditions for Sobolev embedding matter for
positive approximation results; this theorem only gives necessary costs.
It makes no claim about approximation in a derivative norm.

For the original target ball $r=\infty$, the earlier upper construction
also controls every $L^s$ error, because the domain has volume one.
Consequently its fixed-confidence exponent match extends to all
$1\leq s\leq\infty$, and the joint accuracy--confidence match when
$d\leq2n$ extends to these error norms as well, under the same depth
conditions and enlarged confidence range.

\section{Discussion}
\label{sec:discussion}

The bounds identify a distinction between deterministic expressivity and
the reliability of a random realization. A network can contain a useful
deterministic representation while failing to preserve it under edge
deletion. The capacity argument measures how much of that representation
must remain, whereas the output-layer argument measures how accurately
independent retained contributions can reproduce a nonzero signal. The
multiscale construction shows that these costs need not multiply in
their accuracy exponents.

For relatively smooth targets, $d<2n$, the output-layer cost dominates.
The lower bound then holds even for an affine target, so increasing
hidden depth cannot by itself remove this obstruction. For $d>2n$, the
spatial complexity of the Sobolev ball dominates under a logarithmic
depth budget. This conclusion depends on the allowed depth. With
arbitrary depth our capacity argument gives only
$\varepsilon^{-d/(2n)}/p$, and we do not have a matching theorem in
that regime.

The role of confidence is also different in these two cases. When
$d\leq2n$, the output argument and the construction yield
\eqref{eq:sobolev-joint-match}. When $d>2n$, our lower bound is the
maximum of a capacity term and a stochastic term. It does not establish
a lower bound of order $\varepsilon^{-d/n}\log(1/\delta)$. The
binomial necessary condition alone cannot supply this product: if
$E=\lceil2K/p\rceil$, a Chernoff bound gives
\[
 \mathbb P\{\operatorname{Bin}(E,p)<K\}\leq e^{-K/4}.
\]
At fixed $p$, that necessary condition therefore permits $E=O_p(K)$
for $\delta=e^{-K/4}$, while $K\log(1/\delta)$ would have order $K^2$.
This observation limits the counting argument; it neither constructs
a successful network nor disproves a stronger minimax lower bound.

Several other gaps remain. The upper bound contains $p^{-9}$, while
the general lower bounds do not have matching retention dependence.
The powers of $\log(1/\varepsilon)$ are not known to be optimal. Our
sufficient confidence range is sharp for the affine-target obstruction
when $p\geq1/2$, but its optimality for the entire Sobolev class is not
proved. The endpoint $p=1$ also requires separate interpretation:
the upper construction remains valid, while the stochastic lower bound
vanishes and the matching stochastic exponent is no longer asserted.

For finite Sobolev integrability, the unit-ball inclusion gives the
lower-bound extension in Section~\ref{sec:extensions-main}. An upper
bound for all $W^{n,r}$ targets with $r<\infty$ requires additional
analysis of local coefficients and the relevant embedding conditions.
Similarly, controlling a derivative norm would require estimates beyond
the function-value approximations proved here. Neither extension follows
just by changing the norm symbols in the present construction.

Finally, the results concern existence and size of approximating
networks. They do not give a training algorithm, a statistical sample
complexity, or an experimental claim about generalization. Constants
may depend strongly on $n,d$, and large real weights are permitted.
The significance of the rate is therefore a description of what can and
cannot be achieved in the stated random-approximation model. A practical
implementation with bounded weights, limited precision, or correlated
failures is a separate question.

\subsection{Conclusion}

We have constructed independent-edge dropout networks for Sobolev
approximation and proved two complementary lower bounds. Local
debiasing removes the scale penalty of narrow windows, while a Taylor
difference decomposition assigns lower relative accuracy to smaller
corrections. The resulting constant-depth construction matches the
accuracy exponent forced by spatial capacity and random output
fluctuation, under the stated depth and confidence assumptions.
The remaining questions concern the logarithmic factors, retention
dependence, and the joint confidence cost in high dimension.

\section{Development from the Earlier Drafts}
\label{sec:development}

This section records the changes from the two unfinished drafts supplied
for this project. It is separate from the mathematical argument so that
the provenance of a construction is not confused with its proof. The
research problem originated in Jia-He Yao's work during his master's
studies, mainly undertaken in 2024 according to the supplied project
history. The additional arguments and the present
organization were developed with assistance from GPT-6 Astra through Codex in
September 2026; this account does not attribute those additions to the
2024 drafts or claim an earlier publication date for them.

\subsection{What was retained and what was changed}

The earlier drafts posed random uniform approximation under dropout
and used a local Taylor partition as the main construction. They
developed approximation modules for affine combinations, squares,
products, and local polynomial terms, and stated a global upper bound
with size
\[
 \widetilde O_{n,d}\left(
 p^{-4}\varepsilon^{-2-(2d+2)/n}\log(1/\delta)\right).
\]
They did not contain a matching lower bound. The problem, the Taylor
representation, and the organization around elementary random
operations are retained here.

The first change concerns localization. The earlier assembly used local
subnetworks that were assumed to vanish outside their local regions
with probability one. The positive-mask argument in
Theorem~\ref{thm:lb-localization} shows that this assumption cannot
hold for a nonzero ReLU-output subnetwork in the independent-edge
model when its support excludes $\mathbf1$. The replacement in
Lemma~\ref{lem:threshold-localization} shifts an existing output bias
and recovers the required zero set on the same event as a global
approximation guarantee. The original probability-one statement is
not used as a valid intermediate result.

The random affine-combination argument was then changed to concentrate
coordinate coefficients. This uses the weighted $\ell_1$ domain to
obtain a uniform guarantee without covering the input set, reducing
the dependence on the number of inputs from quadratic to linear,
apart from a logarithm. The square and product constructions were
proved with all copying and output edges explicitly masked. Together
these give the valid single-scale bound of
Theorem~\ref{thm:repaired-upper}, with accuracy exponent
$2+(d+2)/n$.

The further improvement uses bounded window circuits and multiscale
Taylor differences. Its debiasing identity is inherited from
\citet[Proposition 9]{manita2022universal}; the present use applies it
to bounded fixed-size circuits and derives a uniform concentration
cost independent of the spatial scale. The differences
$F_j-F_{j-1}$ then have normalized coefficients of order $2^{-jn}$.
The resulting allocation and assembly are proved in
Section~\ref{sec:multiscale}.

The lower-bound analysis was added in this revision. It includes the
output-layer obstruction, its logarithmic confidence factor, the
retained-edge capacity argument, and the $L^s$ transfer. The capacity
and finite-$L^s$ tools are credited to
\citet{bartlett2019nearly,siegel2023optimal}. In particular, neither
the deterministic VC bound nor the translation-invariant Sobolev
lower bound is presented as a new result of this manuscript.

\subsection{Why the improvement changes the exponent}

The gain is not obtained only by improving constants in concentration
inequalities. A single-resolution construction can pay for three
different quantities at once: the number $N^d$ of spatial terms, a
window cost growing as $N^2$, and the number $\varepsilon^{-2}$ of
copies used to stabilize each term. Improving the summation module
removes an additional dependence on the number of terms, but these
three costs can still multiply.

The bounded-circuit construction removes the window penalty. The
multiscale decomposition changes the remaining multiplication itself:
the $j$th level pays for $2^{jd}$ terms whose amplitudes are only
$2^{-jn}$. Since the replication cost is quadratic in amplitude at a
fixed absolute tolerance, the contribution of that level is governed
by $2^{j(d-2n)}$, rather than $2^{jd}$. Summing these contributions
produces the transition at $d=2n$.

There is one final issue in this reasoning. A single outer summation
over every fine-scale branch at final accuracy would charge those
branches again and destroy the saving. The construction therefore
first combines terms of the same scale and then combines their
amplitude-weighted outputs. The change in the exponent depends on
both the decomposition and this order of assembly.

For comparison, the stages of the upper-bound analysis are
\[
\begin{array}{c|c}
 \text{construction or draft statement} & \text{accuracy exponent}\\ \hline
 \text{earlier draft statement} & 2+(2d+2)/n\\
 \text{valid single-scale construction} & 2+(d+2)/n\\
 \text{bounded local modules and multiscale assembly} & \max\{d/n,2\}.
\end{array}
\]
The first row records the numerical claim in the source drafts; it
does not certify their proof. The latter two rows have separate
proofs in the present manuscript. Their retention dependences are
also different: the displayed multiscale bound trades the simpler
$p^{-2}$ factor of the single-scale construction for a conservative
$p^{-9}$ factor while improving the accuracy exponent.

\subsection{Scope of AI assistance and verification}

GPT-6 Astra, used through Codex, assisted with reading and comparing the supplied drafts,
identifying the localization obstruction, developing and checking the
replacement arguments, organizing the literature comparisons, writing
verification scripts, and preparing the present text and figures.
The mathematical claims are supported by the arguments written in
the paper, not by the fact that an automated assistant generated them.
The accompanying finite checks test identities, constants, and
boundary examples; they do not constitute a formal proof of the
universal statements or an independent peer review.

The revision also makes the model more explicit. Biases remain
unmasked, every connecting edge is counted, and signed targets use
an affine output. The Sobolev definition consistently includes weak
derivatives through order $n$. The finite-$r$ extension is described
as a lower-bound extension, and the depth and confidence restrictions
are retained in the matching statements. No experimental results,
training guarantees, or publication-priority claims have been added.

\section*{Acknowledgments}
\addcontentsline{toc}{section}{Acknowledgments}
The author thanks Wei Gao, his former master's advisor, for his guidance
during the author's master's studies. The author also acknowledges the
assistance of GPT-6 Astra (OpenAI), used through Codex, in proof
development, literature checking, computational verification, and
manuscript preparation. The scope of this assistance is described in
Section~\ref{sec:development}. The author is responsible for the content
of the manuscript.
\appendix
\section{A Single-Scale Construction}
\label{sec:repaired-upper}
For comparison with the multiscale argument, we record a valid
single-scale implementation. It also explains separately the savings
from affine averaging and those from scale-independent local windows.
The random square and multiplication operations are those already proved
in Section~\ref{sec:modules-main}.
\begin{theorem}[A single-scale Sobolev upper bound]
\label{thm:repaired-upper}
Fix integers $n,d\geq1$ and let
\[
 \mathcal F_{n,d}=\left\{f\in W^{n,\infty}((0,1)^d):
       \max_{|\alpha|\leq n}\|D^\alpha f\|_{L^\infty}\leq1\right\},
\]
using the continuous representative on $[0,1]^d$.
There are constants $C_{n,d}\geq2$ and $L_{n,d}$ such that for every
$f\in\mathcal F_{n,d}$, $0<p\leq1$, and $0<\varepsilon,\delta\leq1/2$,
a network with ReLU hidden units and an affine output satisfies
\[
 \mathbb P\{\|F_R-f\|_{L^\infty([0,1]^d)}\leq\varepsilon\}
       \geq1-\delta,
\]
with depth at most $L_{n,d}$ and size at most
\begin{equation}\label{eq:repaired-upper-size}
 C_{n,d}p^{-2}\varepsilon^{-2-(d+2)/n}
       \log\frac{C_{n,d}}{\varepsilon\delta}.
\end{equation}
For the subclass $f\geq0$, the output can also be ReLU.
\end{theorem}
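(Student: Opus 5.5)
The plan is a single resolution $N=\lceil(2C^{\rm Tay}_{n,d}/\varepsilon)^{1/n}\rceil$, so that $\|f-f_N\|_\infty\leq\varepsilon/2$ with $f_N=\sum_m\phi_{N,m}P_{N,m}$ as in Section~\ref{sec:multiscale}. I would implement each localized term with the elementary modules of Section~\ref{sec:modules-main} only: the affine lemma (Lemma~\ref{lem:lb-affine-upper}), the square and product lemmas (Lemmas~\ref{lem:repair-square} and~\ref{lem:repair-product}), bias-shift localization, and one final summation. The debiasing lemma is not used. This explains both the $p^{-2}$ factor and the extra $N^2$ window penalty in the exponent $2+(d+2)/n$.

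First, I rewrite each monomial in normalized form. With $t_i=N(x_i-m_i/N)$, each Taylor term is
\[
 \frac{D^\alpha f(m/N)}{\alpha!}\,N^{-|\alpha|}\,\phi_{N,m}(x)\,t(x)^\alpha .
\]
On the support of $\phi_{N,m}$ we have $|t_i|<2/3$, so clipping $t_i$ to $[-1,1]$ changes nothing. Writing $t_i=z_i^+-z_i^-$ with $z_i^\pm=\min\{1,(\pm t_i)_+\}$ expresses everything through nonnegative registers bounded by one, with coefficients of modulus at most one. I split each term into positive and negative parts $q_{m,\alpha}^\pm\geq0$. As in \eqref{eq:multiscale-sparse-mass}, these satisfy $\sum q\leq A$ and have at most $k=2^dA$ nonzero terms at each input, and there are $T\leq C_{n,d}N^d$ of them.

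Second, the local module. A window factor
\[
 \psi(3Nx_i-3m_i)=\sigma\bigl(1-\sigma(3Nx_i-3m_i-1)-\sigma(-3Nx_i+3m_i-1)\bigr)
\]
and a clipped coordinate $z_i^\pm$ are short chains of ReLU-affine gates. Only the first gate of each chain reads $x_i\in[0,1]$ with the large weight $\pm3N$ or $\pm N$. I approximate that first gate by Lemma~\ref{lem:lb-affine-upper} with $\tau\asymp N$, at cost $O(N^2p^{-2}\eta^{-2}\log(1/\zeta))$; signed coefficients and unmasked biases are allowed there. All later gates, the products (Lemma~\ref{lem:repair-product}) and the copying identities act on registers bounded independently of $N$, so they cost $O(p^{-2}\eta^{-2}\log)$. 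The recurrence \eqref{eq:repair-stage-error} with a fixed number of stages and Lipschitz constants depending only on $n,d$ then gives a global ReLU-output approximation of each $q_{m,\alpha}^\pm$. Its error is $\eta$, its failure probability is $\zeta$, its depth is fixed, and its size is $C_{n,d}N^2p^{-2}\eta^{-2}\log(C/(\eta\zeta))$. I take $\eta\asymp\varepsilon/k$ and $\zeta=\delta/(2T)$, then shift biases via Lemma~\ref{lem:threshold-localization}.

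Third, assembly. Lemma~\ref{lem:threshold-composition} with $\rho(y)=\sum_i s_iN^{-|\alpha_i|}c_iy_i$, which is $1$-Lipschitz in $\ell_1$, lets me sum all branches with one affine-output module on the $\ell_1$ ball of radius $A$ and tolerance $\varepsilon/4$. Its cost is $O(Tp^{-2}\varepsilon^{-2}\log(T/\delta))=O(N^dp^{-2}\varepsilon^{-2}\log)$; a single global sum is affordable here because every branch already pays more. The local modules cost $T\cdot N^2p^{-2}(k/\varepsilon)^2\log\asymp p^{-2}\varepsilon^{-2}N^{d+2}\log$. With $N\asymp\varepsilon^{-1/n}$ this gives \eqref{eq:repaired-upper-size}, and the error budget $\varepsilon/2+2k\eta+\varepsilon/4\leq\varepsilon$ holds on an event of probability at least $1-\delta$. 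For $f\geq0$ I would replace the final affine output by a ReLU output, which is harmless because ReLU is $1$-Lipschitz and the target is nonnegative.

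I expect the main obstacle to be the uniform control of the large-slope first gate composed with the subsequent gates. The affine lemma is uniform on $[0,1]$ with error $\eta$, but the window preactivation has range $O(N)$. So I must check that every later gate only sees the $1$-Lipschitz ReLU images of these values, and that the input boxes with margin one remain $N$-independent after the first clipping ReLU. If this fails, the fallback is to place the clipping ReLU inside the affine module's own ReLU output, so that no unbounded register is ever passed on.
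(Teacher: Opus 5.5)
Your plan is correct and is essentially the paper's own proof: Lemma~\ref{lem:repair-local-term} pays for large slopes only through the affine lemma at $\tau\asymp N$, followed by bias-shift localization and one global affine sum. The one cosmetic difference is your normalized clipped coordinates $\min\{1,(\pm t_i)_+\}$, where the paper uses $\sigma(\pm(x_i-a_i))\in[0,1]$, which needs no clipping and no second large-slope chain. Your flagged obstacle is not resolved by keeping registers $N$-independent after the first ReLU, because they are not: the second gate $\sigma(1-u_1-u_2)$ of each window, and likewise of each clipping chain, reads inputs of size $O(N)$. Instead, apply Lemma~\ref{lem:lb-affine-upper} to that gate as well with $\tau\asymp N$, at the same $O(N^2p^{-2}\eta^{-2}\log(1/\zeta))$ cost. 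Its Lipschitz constant is at most $2$, so it amplifies no error, which is exactly how the paper allows $\tau\leq C_{n,d}N$ in its first three stages before discarding the large registers.
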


The affine-output convention is essential for signed targets: a ReLU output
cannot approximate, for example, $f\equiv-1$ to error less than one.
The result concerns approximation under a sampled mask, and does not assert
an additional guarantee after replacing masks by their expectations.
Constants may depend strongly on $n,d$; neither grows with accuracy here.
\subsection{Global approximation of a signed local Taylor term}

Let $N\geq1$, $m\in\{0,\ldots,N\}^d$, $a=m/N$, and
\[
 \psi(t)=\sigma\bigl(1-\sigma(|t|-1)\bigr),\qquad
 \phi_m(x)=\prod_{i=1}^d\psi(3Nx_i-3m_i).
\]

\begin{lemma}[A single-scale local Taylor term]
\label{lem:repair-local-term}
For $|\alpha|<n$, $|c|\leq1$, and $s\in\{-1,1\}$, put
\[
 q(x)=\sigma\bigl(sc\,\phi_m(x)(x-a)^\alpha\bigr).
\]
For $0<\eta\leq1/2$ there is a ReLU-output network $G$ of depth at most
$D_{n,d}$ such that
\[
 \mathbb P\{\|G-q\|_{L^\infty([0,1]^d)}\leq\eta\}\geq1-\zeta,
\]
and its size is at most
\begin{equation}\label{eq:repair-local-size}
 C_{n,d}p^{-2}N^2\eta^{-2}\log\frac{C_{n,d}}{\eta\zeta}.
\end{equation}
The depth can be chosen the same for every $m,\alpha,s$.
\end{lemma}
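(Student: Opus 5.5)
We plan to realize $q$ as a fixed finite computation on nonnegative registers, using only the modules of Section~\ref{sec:modules-main}: Lemma~\ref{lem:lb-affine-upper} for ReLU-affine gates and identity transmissions, and Lemma~\ref{lem:repair-product} for products. This is the single-scale variant, so the windows will \emph{not} go through the debiasing Lemma~\ref{lem:multiscale-debias}. That is the source of the factor $N^2$ and of the cheaper $p^{-2}$.

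\textbf{Step 1: window factors and coordinate differences.} For each coordinate, first compute $u_i=\sigma(3Nx_i-3m_i)$ and $v_i=\sigma(3m_i-3Nx_i)$ with Lemma~\ref{lem:lb-affine-upper}. Here $T=1$ and $\tau\asymp N$, so the cost is $O(p^{-2}N^2\lambda^{-2}\log(1/\zeta'))$ for tolerance $\lambda$. Then apply a second ReLU-affine layer to get $\psi_i=\sigma(1-\sigma(u_i+v_i-1))$. Its inputs are bounded only by about $3N$, but the Lipschitz constant is $1$, and the domain radius is again $O(N)$, which gives the same $N^2$ cost. To keep the monomial registers bounded, do not compute $(x-a)^\alpha$ directly. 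Use the signed parts $y_i^\pm=(\pm(x_i-a_i))_+$, each a clipped version of $u_i/(3N)$ or $v_i/(3N)$. Recovering these costs nothing extra, since we can read $(\pm(x_i-a_i))_+$ straight from the input with weight of size $1$. We also carry $\min\{1,\cdot\}$ clipping with range $1$, so that all registers after the first stage lie in $[0,2]$.

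\textbf{Step 2: products and signed assembly.} Expand $(x-a)^\alpha=\prod_i(y_i^+-y_i^-)^{\alpha_i}$ into at most $2^{n-1}$ signed monomials in the $y_i^\pm$. Each product $\prod_i\psi_i\cdot\prod y$ is computed by a chain of at most $d+n$ nonnegative multiplication gates from Lemma~\ref{lem:repair-product}, with $R$ a constant depending on $n,d$. Registers that must wait are transmitted by identity modules. A final ReLU-affine gate from Lemma~\ref{lem:lb-affine-upper} then forms $\sigma(sc\sum_\beta\pm\text{(products)})=q$. This has at most $2^{n-1}$ inputs, bounded domain and $|c|\le1$. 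Pad all branches to a common depth $D_{n,d}$ independent of $m,\alpha,s$.

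\textbf{Step 3: error bookkeeping.} Apply recurrence~\eqref{eq:repair-stage-error}. After the first stage every map has Lipschitz constant $C_{n,d}$ on boxes with margin one, and the number of stages and gates is $C_{n,d}$. We therefore choose every tolerance $\lambda=\eta/K_{n,d}$ and every failure budget $\zeta/B_{n,d}$, then union bound. Every later module costs $C_{n,d}p^{-2}\eta^{-2}\log(C/(\eta\zeta))$. The first-stage modules cost $C_{n,d}p^{-2}N^2\eta^{-2}\log(C/\zeta)$, which gives~\eqref{eq:repair-local-size}.

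\textbf{Main obstacle.} The delicate point is the first stage. Its outputs have range $O(N)$, so a relative error there would be amplified. The fix is to clip immediately to $[0,1]$-type registers, using the $1$-Lipschitz ReLU forms of $\psi$ and of $\min\{1,\cdot\}$. This way, errors of absolute size $\lambda$ in $u_i,v_i$ propagate as errors of size $O(\lambda)$, not $O(N\lambda)$. We must also check that the weighted $\ell_1$ domain hypothesis of Lemma~\ref{lem:lb-affine-upper} holds at the random inputs produced by the previous success event, which the margin-one boxes guarantee. This is the step I would verify most carefully.
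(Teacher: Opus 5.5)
Your proposal is correct and follows the paper's proof essentially step for step. The shared steps are these: first-stage affine modules with $\tau\asymp N$ for $\sigma(\pm(3Nx_i-3m_i))$, followed by the two window stages $\sigma(u_i+v_i-1)$ and $\sigma(1-\cdot)$, all paying the $N^2$ factor, with $e_0=0$ so the large slopes never amplify a prior error; the signed parts $\sigma(\pm(x_i-a_i))\in[0,1]$ read directly from the input; at most $2^{n-1}$ signed products through the multiplication lemma on $[0,2]^2$; a final signed affine-ReLU gate; and the margin-one recurrence with tolerances $\eta/K_{n,d}$ and failure budgets $\zeta/B_{n,d}$. The only slips are cosmetic: the extra $\min\{1,\cdot\}$ clipping is unnecessary since those signed parts already lie in $[0,1]$, and the register $\sigma(u_i+v_i-1)$ still has range $O(N)$ (its modules also cost order $N^2$), so the claim that registers lie in $[0,2]$ holds only after the large registers are discarded once $\psi_i$ is formed.
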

\begin{proof}
First describe an exact finite computation. Its first stage, directly from
the original input coordinates, forms
\[
 r_i^+=\sigma(3Nx_i-3m_i),\quad
 r_i^-=\sigma(3m_i-3Nx_i),\quad
 z_i^+=\sigma(x_i-a_i),\quad z_i^-=\sigma(a_i-x_i).
\]
Then form $v_i=\sigma(r_i^++r_i^--1)$ and
$w_i=\sigma(1-v_i)=\psi(3Nx_i-3m_i)$, carrying the $z_i^\pm$ as needed.
The exact $r_i^\pm,v_i$ are bounded by $3N$; the $w_i,z_i^\pm$ lie in
$[0,1]$. All coefficients of order $N$ occur in the first stage, which
receives the original, exact input coordinates.

List the $k=|\alpha|$ repeated coordinate indices as $i_1,\ldots,i_k$.
The signed factor is computed by the identity
\[
 \phi_m(x)(x-a)^\alpha
 =\sum_{\gamma\in\{+,-\}^k}(-1)^{|\{j:\gamma_j=-\}|}
       \left(\prod_{i=1}^d w_i\right)
       \left(\prod_{j=1}^k z_{i_j}^{\gamma_j}\right).
\]
For $k=0$ this has one term and the second product is one.
Each product involves at most $d+n-1$ nonnegative factors in $[0,1]$.
Use binary multiplication gates, followed by a signed affine-ReLU gate
with coefficients $\pm sc$, to obtain $q$.
The number of gates and stages depends only on $n,d$.

Replace affine-ReLU gates and all copying operations by
Lemma~\ref{lem:lb-affine-upper}; replace products by
Lemma~\ref{lem:repair-product} on $[0,2]^2$.
In the first three stages, use boxes with margin one around the exact
register ranges. The affine lemma then has $T\leq C_{n,d}$ and
$\tau\leq C_{n,d}N$, including the first-stage coefficients $3N$.
After producing the $w_i$, discard the large $r_i^\pm,v_i$ registers.
All subsequent registers have bounds depending only on $n,d$.

Apart from the first stage, every exact stage map is Lipschitz with a
constant depending only on $n,d$ on these boxes. The large coefficients
in the first stage do not multiply a prior error, since $e_0=0$.
Consequently the bookkeeping in \eqref{eq:repair-stage-error} permits
each module tolerance to be $\eta/K_{n,d}$ for a sufficiently large fixed
$K_{n,d}$. Assign each module failure budget $\zeta/B_{n,d}$, where
$B_{n,d}$ bounds their total count, including transmission modules.
The affine-module costs are bounded by \eqref{eq:repair-local-size};
the square and product costs have no factor $N^2$ and are smaller.
Summing the fixed number of module costs proves the size estimate.
The intersected uniform events prove approximation on all of $[0,1]^d$.
Padding to a common depth uses the same noisy identity modules, with their
errors and costs included before the final ReLU output and its bias shift.
\end{proof}

\subsection{Assembly and its exact error and probability budgets}

\begin{proof}[Proof of Theorem~\ref{thm:repaired-upper}]
Use the local Taylor partition from \citet[proof of Theorem 1]{yarotsky2017}.
Write $A=|\{\alpha\in\mathbb N_0^d:|\alpha|<n\}|=
\binom{n+d-1}{d}$ and
\[
 c_{m,\alpha}=D^\alpha f(m/N)/\alpha!,\quad
 P_m(x)=\sum_{|\alpha|<n}c_{m,\alpha}(x-m/N)^\alpha,
 \quad f_N=\sum_m\phi_mP_m.
\]
The derivatives of order below $n$ use their continuous representatives,
including boundary limits; $W^{n,\infty}$ on the cube admits the requisite
$C^{n-1,1}$ representative. In particular $|c_{m,\alpha}|\leq1$.
The partition satisfies $\sum_m\phi_m=1$, has at most $2^d$ nonzero terms
at each point, and its nonzero terms have $\|x-m/N\|_\infty<1/N$.
Taylor's remainder therefore gives
$\|f-f_N\|_\infty\leq C^{\rm Tay}_{n,d}N^{-n}$.
Choose $N=\lceil(2C^{\rm Tay}_{n,d}/\varepsilon)^{1/n}\rceil$, increasing
$C^{\rm Tay}_{n,d}$ to at least one if necessary.

For each $(m,\alpha)$ define
\[
 Q_{m,\alpha}=c_{m,\alpha}\phi_m(x)(x-m/N)^\alpha,
 \qquad q_{m,\alpha,\pm}=(\pm Q_{m,\alpha})_+.
\]
There are $T=2A(N+1)^d$ nonnegative branches. At each $x$, their sum is
at most $A$, because
\[
 \sum_{m,\alpha,\pm}q_{m,\alpha,\pm}(x)
 =\sum_{m,\alpha}|Q_{m,\alpha}(x)|
 \leq A\sum_m\phi_m(x)=A.
\]
There are at most $k=2^dA$ nonzero branches: for each $(m,\alpha)$
at most one of its positive and negative branches is nonzero.

Apply Lemma~\ref{lem:repair-local-term} separately to all branches with
\[
 \eta=\frac{\varepsilon}{8k},\qquad
 \zeta=\frac{\delta}{2T}.
\]
Shift each final output bias down by $\eta$ as in
Lemma~\ref{lem:threshold-localization}. With probability at least
$1-\delta/2$, all resulting outputs $H_i$ satisfy simultaneously
\[
 0\leq H_i\leq q_i,\qquad |H_i-q_i|\leq2\eta,
 \qquad q_i=0\ \Longrightarrow\ H_i=0.
\]
On this event the vector $H(x)$ belongs to
$D_A=\{y\geq0:\sum_i y_i\leq A\}$ for every $x$.
Apply the affine-output form of Lemma~\ref{lem:lb-affine-upper} to the
signed sum $\rho(y)=\sum_{m,\alpha}(y_{m,\alpha,+}-y_{m,\alpha,-})$
on $D_A$, with accuracy $\varepsilon/4$ and failure $\delta/2$.
Its input connections are precisely the first masked edges of this module;
there are no additional unmasked connecting edges.

On the joint event, Lemma~\ref{lem:threshold-composition} gives
\[
 \|F_R-f_N\|_\infty\leq2k\eta+\varepsilon/4=\varepsilon/2.
\]
Adding the Taylor error proves the claimed accuracy and probability.
The total size is at most
\[
 C_{n,d}\frac{T N^2}{p^2\varepsilon^2}
       \log\frac{C_{n,d}T}{\varepsilon\delta}
 +C_{n,d}\frac{T}{p^2\varepsilon^2}\log\frac{C_{n,d}T}{\delta}.
\]
Since $T\leq C_{n,d}N^d$ and $N\leq C_{n,d}\varepsilon^{-1/n}$,
this is \eqref{eq:repaired-upper-size}. All local branches have the same
bounded depth and the outer module adds two layers.
If $f\geq0$, applying ReLU at the existing affine output cannot increase
the error and does not add any edge or layer.
\end{proof}

\section{Cut Sets and Reliability}
\label{sec:reliability-main}
The following bounds complement the accuracy-dependent lower bounds.
They hold for any continuous nonconstant target, without a smoothness
assumption. Their depth parameter is the actual depth of the network,
not an upper bound on the depth that a network is allowed to use.
\begin{theorem}[Cut-set lower bounds]\label{thm:lb-cut}
Let $f$ be continuous and nonconstant, with
$\operatorname{osc}(f)>2\varepsilon$. Let $C$ be a set of $k$ independently
masked edges whose removal disconnects all inputs from the output. If
$\mathbb P(\|F_R-f\|_\infty\leq\varepsilon)\geq1-\delta$, then
\begin{equation}\label{eq:lb-onecut}
 k\geq\frac{\log(1/\delta)}{-\log(1-p)}.
\end{equation}
If all edges join adjacent layers and there are $\ell$ computational layers,
including the output layer, let $E_j$ count edges entering layer $j$ and
$E=\sum_jE_j$. Then
\begin{equation}\label{eq:lb-layercut}
 E\geq\frac{\ell}{-\log(1-p)}
       \log\frac1{1-(1-\delta)^{1/\ell}}.
\end{equation}
For $0<\delta\leq1/2$, the latter is
$\Omega\bigl(\ell\log(\ell/\delta)/[-\log(1-p)]\bigr)$.
Here $\ell$ is the realized depth, not merely an allowed upper bound.
\end{theorem}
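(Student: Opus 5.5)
The plan is to identify an event under which the realized network is a constant function of the input, and then show that success is impossible on that event. Suppose every edge of the cut set $C$ is deleted. Then no directed path from an input reaches the output, so every unit on such a path becomes constant. Absorbing constant contributions into biases, as in Lemma~\ref{lem:active-edge-collapse}, shows that $F_R$ is constant on $[0,1]^d$. A constant $c$ cannot satisfy $\|c-f\|_\infty\leq\varepsilon$, because then $\operatorname{osc}(f)\leq2\varepsilon$, contradicting the hypothesis. Hence the event $\{R_e=0\ \forall e\in C\}$ is contained in the failure event. Since the masks on $C$ are independent, this event has probability $(1-p)^k$. Therefore $(1-p)^k\leq\delta$, and taking logarithms gives \eqref{eq:lb-onecut}. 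Nothing about the architecture outside $C$ is used, so hidden depth and weights are unrestricted.

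For \eqref{eq:lb-layercut} I would use the layer structure. With adjacent-layer edges, the $E_j$ edges entering layer $j$ form a cut set for every $j=1,\dots,\ell$. Call a layer \emph{wiped} if all of its incoming edges are deleted. These events involve disjoint edge sets, so they are independent and have probabilities $(1-p)^{E_j}$. A single wiped layer already forces a constant output, by the previous paragraph. Success therefore requires that no layer be wiped, which gives
\[
 1-\delta\leq\prod_{j=1}^{\ell}\bigl(1-(1-p)^{E_j}\bigr).
\]
Put $x_j=(1-p)^{E_j}$. The function $t\mapsto\log(1-e^{-t})$ is concave, so by Jensen (equivalently AM--GM on $\log(1-x_j)$ in the variables $E_j$), the product with a fixed sum $E=\sum_jE_j$ is maximized when all $E_j$ are equal. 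This gives
\[
 1-\delta\leq\bigl(1-(1-p)^{E/\ell}\bigr)^{\ell},
 \qquad
 (1-p)^{E/\ell}\leq1-(1-\delta)^{1/\ell}.
\]
Taking logarithms yields \eqref{eq:lb-layercut}. Integrality of the $E_j$ causes no trouble, since the inequality is between reals.

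For the asymptotic form I would use $1-(1-\delta)^{1/\ell}\leq-\log(1-\delta)/\ell\leq C\delta/\ell$ when $\delta\leq1/2$. Then $\log\frac{1}{1-(1-\delta)^{1/\ell}}\geq\log(\ell/\delta)-O(1)$. This is $\Omega(\log(\ell/\delta))$ because $\ell/\delta\geq2$, which can be handled by a constant adjustment; if needed, I would instead compare with $\log(2\ell/\delta)$ directly.

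The main obstacle is the concavity step: I need $g(t)=\log(1-e^{-t})$ to be concave on $t>0$. This follows from $g'(t)=1/(e^t-1)$, which is decreasing. A secondary point is justifying that a wiped layer really disconnects the inputs from the output. Here it matters that $\ell$ is the realized depth and that all edges join adjacent layers, so every input-to-output path passes through each layer.
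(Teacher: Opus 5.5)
Your proposal is correct and follows the paper's proof essentially step for step. In both, the deleted cut set forces a constant output with uniform error at least $\operatorname{osc}(f)/2>\varepsilon$; the $\ell$ layer-wipe events use disjoint masks and give $1-\delta\leq\prod_j(1-(1-p)^{E_j})$; concavity of $x\mapsto\log(1-(1-p)^x)$ (your $g$ after rescaling) equalizes the $E_j$; and the asymptotic form comes from $a=-\log(1-\delta)$ as in the paper. One small point of care: in the last step use $-\log(1-\delta)\leq(2\log2)\delta$ for $\delta\leq1/2$ rather than $2\delta$, so that the subtracted constant $\log(2\log2)$ stays below $\log2\leq\log(\ell/\delta)$ and the $\Omega$ bound holds uniformly, including at $\ell=1,\ \delta=1/2$.
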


\begin{proof}
All edges in $C$ disappear with probability $(1-p)^k$. On that event
the output is constant as a function of the input, even in the presence
of biases. Every constant has uniform error at least
$\operatorname{osc}(f)/2$, so $(1-p)^k\leq\delta$.

For adjacent layers, deleting all edges in any one layer disconnects the
network. These events involve disjoint masks. With $q=1-p$,
\[
 1-\delta\leq\mathbb P(\text{success})
       \leq\prod_{j=1}^{\ell}(1-q^{E_j})
       \leq(1-q^{E/\ell})^{\ell}.
\]
The last inequality follows from concavity of $x\mapsto\log(1-q^x)$
for $x>0$; a zero-sized layer is already incompatible with success.
Solving gives \eqref{eq:lb-layercut}. For the asymptotic statement, put
$a=-\log(1-\delta)$; then $\delta\leq a\leq2\delta$ and
$1-\exp(-a/\ell)$ is comparable to $\delta/\ell$ on this range.
\end{proof}

\section{Algebraic and Numerical Checks}
\label{sec:verification-main}

The proof development was accompanied by six finite verification
scripts. They check the Bernoulli moment identities and the
positive-mask domination argument; midpoint quadrature and the
square and multiplication error constants; bounded subnetworks and
the debiasing identity; constant-node elimination in masked directed
graphs; and the concentration and change-of-measure constants in
the confidence lower bound. These checks are supplementary to the
analytical proofs.

For the multiscale construction, exact rational evaluations verify
the subset debiasing identity for several retention probabilities
and circuits with up to six edges. Piecewise linear evaluations
check that every masked local window and clipped coordinate is
bounded and that the supremum of an affine combination is controlled
by their common breakpoints. The error and probability budgets are
also evaluated in all three regimes $d<2n$, $d=2n$, and $d>2n$.

To check that the Taylor bookkeeping is not restricted to smooth
analytic examples, we additionally use
\[
 f(x)=\frac1{n!}\left(\frac{x_1+\cdots+x_d}{d}-\theta\right)_+^n,
 \qquad \theta\in\{0,1/3,1/2,1\}.
\]
These functions belong to the stated $W^{n,\infty}$ unit ball, while
their highest weak derivatives can be discontinuous. Exact rational
checks with $d=1,2,3$ and $n=1,2,3,4$ verify 22,295 normalized
difference coefficients, 2,304 localized telescoping identities,
and 768 complete multiscale identities, including boundary centers.
The mass and overlap bounds are checked at the same points.

No empirical approximation rate is inferred from these evaluations,
and no full network at the asymptotic size budget is simulated.
The scripts cannot verify an assertion quantified over all targets,
networks, and parameter values. Such assertions require the proofs
given in the preceding sections. The exponent diagram is a plot of
the theoretical functions in the rate comparison, not an experiment.

\bibliographystyle{plainnat}
\bibliography{references}

\begin{thebibliography}{12}
\providecommand{\natexlab}[1]{#1}
\providecommand{\url}[1]{\texttt{#1}}
\expandafter\ifx\csname urlstyle\endcsname\relax
  \providecommand{\doi}[1]{doi: #1}\else
  \providecommand{\doi}{doi: \begingroup \urlstyle{rm}\Url}\fi

\bibitem[Bartlett et~al.(2019)Bartlett, Harvey, Liaw, and
  Mehrabian]{bartlett2019nearly}
Peter~L. Bartlett, Nick Harvey, Christopher Liaw, and Abbas Mehrabian.
\newblock Nearly-tight {VC}-dimension and pseudodimension bounds for piecewise
  linear neural networks.
\newblock \emph{Journal of Machine Learning Research}, 20\penalty0
  (63):\penalty0 1--17, 2019.
\newblock URL \url{https://jmlr.org/papers/v20/17-612.html}.

\bibitem[El~Mhamdi and Guerraoui(2017)]{elmhamdi2017neurons}
El~Mahdi El~Mhamdi and Rachid Guerraoui.
\newblock When neurons fail.
\newblock In \emph{2017 IEEE International Parallel and Distributed Processing
  Symposium}, pages 1028--1037, 2017.
\newblock \doi{10.1109/IPDPS.2017.66}.
\newblock URL \url{https://arxiv.org/abs/1706.08884}.

\bibitem[El-Mhamdi et~al.(2019)El-Mhamdi, Guerraoui, Kucharavy, and
  Volodin]{elmhamdi2019probabilistic}
El-Mahdi El-Mhamdi, Rachid Guerraoui, Andrei Kucharavy, and Sergei Volodin.
\newblock The probabilistic fault tolerance of neural networks in the
  continuous limit, 2019.
\newblock URL \url{https://arxiv.org/abs/1902.01686}.

\bibitem[Manita et~al.(2022)Manita, Peletier, Portegies, Sanders, and
  Senen-Cerda]{manita2022universal}
Oxana~A Manita, Mark~A Peletier, Jacobus~W Portegies, Jaron Sanders, and Albert
  Senen-Cerda.
\newblock Universal approximation in dropout neural networks.
\newblock \emph{Journal of Machine Learning Research}, 23\penalty0
  (19):\penalty0 1--46, 2022.

\bibitem[Ou and B{\"o}lcskei(2026)]{ou2026covering}
Weigutian Ou and Helmut B{\"o}lcskei.
\newblock Covering numbers for deep {ReLU} networks with applications to
  function approximation and nonparametric regression.
\newblock \emph{Foundations of Computational Mathematics}, 2026.
\newblock \doi{10.1007/s10208-026-09762-w}.
\newblock URL
  \url{https://link.springer.com/article/10.1007/s10208-026-09762-w}.
\newblock Published 16 September 2026; arXiv:2410.06378 first submitted 8
  October 2024.

\bibitem[Senen-Cerda and Sanders(2025)]{senencerda2025almost}
Albert Senen-Cerda and Jaron Sanders.
\newblock Almost sure convergence of dropout algorithms for neural networks.
\newblock \emph{Journal of Machine Learning Research}, 26\penalty0
  (283):\penalty0 1--53, 2025.
\newblock URL \url{https://jmlr.org/papers/v26/20-1396.html}.

\bibitem[Siegel(2023)]{siegel2023optimal}
Jonathan~W. Siegel.
\newblock Optimal approximation rates for deep {ReLU} neural networks on
  {Sobolev} and {Besov} spaces.
\newblock \emph{Journal of Machine Learning Research}, 24\penalty0
  (357):\penalty0 1--52, 2023.
\newblock URL \url{https://jmlr.org/papers/v24/23-0025.html}.

\bibitem[Srivastava et~al.(2014)Srivastava, Hinton, Krizhevsky, Sutskever, and
  Salakhutdinov]{srivastava2014dropout}
Nitish Srivastava, Geoffrey Hinton, Alex Krizhevsky, Ilya Sutskever, and Ruslan
  Salakhutdinov.
\newblock Dropout: A simple way to prevent neural networks from overfitting.
\newblock \emph{Journal of Machine Learning Research}, 15\penalty0
  (56):\penalty0 1929--1958, 2014.
\newblock URL \url{https://jmlr.org/papers/v15/srivastava14a.html}.

\bibitem[Wan et~al.(2013)Wan, Zeiler, Zhang, Le~Cun, and
  Fergus]{wan2013dropconnect}
Li~Wan, Matthew Zeiler, Sixin Zhang, Yann Le~Cun, and Rob Fergus.
\newblock Regularization of neural networks using {DropConnect}.
\newblock In \emph{Proceedings of the 30th International Conference on Machine
  Learning}, volume~28 of \emph{Proceedings of Machine Learning Research},
  pages 1058--1066, 2013.
\newblock URL \url{https://proceedings.mlr.press/v28/wan13.html}.

\bibitem[Wu et~al.(2026)Wu, Chen, and Ye]{wu2026deep}
Tong Wu, Nan Chen, and Zhi-Sheng Ye.
\newblock Deep spatially varying coefficient model for interpolation of
  non-stationary meteorological data.
\newblock \emph{Environmetrics}, 37\penalty0 (6):\penalty0 e70129, 2026.
\newblock \doi{10.1002/env.70129}.
\newblock URL \url{https://onlinelibrary.wiley.com/doi/full/10.1002/env.70129}.
\newblock First published online 29 July 2026.

\bibitem[Yarotsky(2017)]{yarotsky2017}
Dmitry Yarotsky.
\newblock Error bounds for approximations with deep relu networks.
\newblock \emph{Neural Networks}, 94:\penalty0 103--114, 2017.

\bibitem[Zhou et~al.(2021)Zhou, Zhuang, Mattina, and
  Whatmough]{zhou2021information}
Chuteng Zhou, Quntao Zhuang, Matthew Mattina, and Paul~N. Whatmough.
\newblock Information contraction in noisy binary neural networks and its
  implications, 2021.
\newblock URL \url{https://arxiv.org/abs/2101.11750}.

\end{thebibliography}
\end{document}